\newcommand{\losswass}{H^{\Omega}_{p}}
\newcommand{\lossgromov}{J_{p}}
\newcommand{\lossfgw}{E^{\Omega}_{p,q,\alpha}}
\newcommand{\couplingset}{\Pi}
\patchcmd\@combinedblfloats{\box\@outputbox}{\unvbox\@outputbox}{}{   \errmessage{\noexpand\@combinedblfloats could not be patched}} \makeatother
\newcolumntype{P}[1]{>{\centering\arraybackslash}p{#1}}
\newcolumntype{M}[1]{>{\centering\arraybackslash}m{#1}}
\newcommand{\BySame}{\leavevmode\rule[.5ex]{3em}{.5pt}\ }
\newcommand{\textsc}[1]{{\sc #1}}
\newcommand{\emph}[1]{{\em #1\/}}
\let\tmpsmall\small
\renewcommand{\small}{\tmpsmall\sc}
\begin{document}

\title{Fused Gromov-Wasserstein distance for structured objects:\\ theoretical foundations and mathematical properties}

\shorttitle{Fused Gromov-Wasserstein distance for structured objects} \shortauthorlist{Titouan Vayer, Laetita Chapel, Rémi Flamary, Romain Tavenard, Nicolas Courty} 
\author{{\sc Titouan Vayer}$^*$,\\[2pt]
Univ. Bretagne-Sud, CNRS, IRISA, F-56000 Vannes\\
$^*${\email{titouan.vayer@irisa.fr}}\\[2pt]
{\sc Laetita Chapel}\\[2pt]
Univ. Bretagne-Sud, CNRS, IRISA, F-56000 Vannes\\
{laetitia.chapel@irisa.fr}\\[6pt]
{\sc Rémi Flamary} \\[2pt]
Univ. C\^ote d'Azur, CNRS, OCA Lagrange, F-06000 Nice\\
{remi.flamary@unice.fr}\\[6pt]
{\sc Romain Tavenard} \\[2pt]
Univ. Rennes, CNRS, LETG,  F-35000 Rennes\\
{ romain.tavenard@univ-rennes2.fr} \\
{\sc and} \\[6pt]
{\sc Nicolas Courty} \\[2pt]
Univ. Bretagne-Sud, CNRS, IRISA, F-56000 Vannes\\
{nicolas.courty@irisa.fr}}

\maketitle

\begin{abstract}
{Optimal transport theory has recently found many applications in machine learning thanks to its capacity for comparing various machine learning objects considered as distributions. The Kantorovitch formulation, leading to the Wasserstein distance, focuses on the features of the elements of the objects but treat them independently, whereas the Gromov-Wasserstein distance focuses only on the relations between the elements, depicting the structure of the object, yet discarding its features. 
 In this paper we propose to extend these distances in order to encode simultaneously both the feature and structure informations, resulting in the Fused Gromov-Wasserstein distance. We develop the mathematical framework for this novel distance, prove its metric and interpolation properties and provide a concentration result for the convergence of finite samples. We also illustrate and interpret its use in various contexts where structured objects are involved. }
{Optimal transport. Structured objects. Wasserstein and Gromov-Wasserstein distances.}
\\
2000 Math Subject Classification: 34K30, 35K57, 35Q80,  92D25
\end{abstract}

\tableofcontents

\section{Introduction}

In this paper we focus on the comparison of structured objects, \textit{i.e} objects defined by both a \emph{feature} and a \emph{structure} information. Abstractly, the feature information covers all the attributes of an object. For example it can model the value of a signal when objects are time series, or the node labels in a graph context. In shape analysis, the spatial positions of the nodes can be regarded as features, or, when objects are images, local color histograms can describe the image's feature information.
As for the structure information, it encodes the specific relationships that exist among the components of the object. In a graph context, nodes and edges are representative of this notion so that each label of the graph may be linked to some others through the edges between the nodes. In a time series context, the values of the signal are related to each other through a temporal structure. This representation is clearly related with the concept of \emph{relational reasoning} (see \cite{relationnalreasoning}) where some \emph{entities} (or elements with attributes such as an intensity of a signal) coexist with some \emph{relations} or properties between them (or some structure as described above).

Including structural knowledge about objects in a machine learning context has often been valuable in order to build more generalizable models. As shown in many contexts such as graphical models \cite{Pearl:1986:FPS:9075.9076,Pearl:2009:CMR:1642718}, relational reinforcement learning \cite{Dzeroski2001} or bayesian non parametrics \cite{hjort10}, considering machine learning objects as a complex composition of entities together with their interactions is crucial in order to learn from small amounts of data. For a review of relational reasoning and its consequences, see \cite{relationnalreasoning}.

Unlike recent deep learning end-to-end approaches \cite{DBLP:journals/nature/LeCunBH15,Goodfellow-et-al-2016} that attempt to avoid integration of prior knowledge or assumptions about the structure wherever possible, \textit{ad hoc} methods, depending on the kind of structured objects involved, aim to build meaningful tools that include structure information in the machine learning process. In graph classification the structure can be taken into account through dedicated graph kernels, in which the structure drives the combination of the feature information~\cite{Shervashidze:2011:WGK:1953048.2078187,pmlr-v48-niepert16,Vishwanathan:2010:GK:1756006.1859891}. In a time series context, Dynamic Time Warping and related approaches are based on the similarity between the features while allowing limited temporal distortion in the time instants that are matched \cite{pmlr-v70-cuturi17a,sakoe1978dynamic}. Closely related, an entire field has focused on predicting the structure as an output and has been deployed on tasks such as segmenting an image into meaningful components or predicting a natural language sentence \cite{Bakir:2007:PSD:1296180,Nowozin:2014:ASP:2627999,pmlr-v80-niculae18a}.

All these approaches rely on meaningful representations of the structured objects that are involved. In this context, an interesting description of machine learning objects can be done through distributions or probability measures. This allows to compare them within the Optimal Transport (OT) framework which provides an elegant way of comparing distributions by capturing the underlying geometric properties of the space through a cost function. When distributions dwell in a common metric space $(\Omega,d)$, the Wasserstein distance defines a metric between these distributions \cite{Villani}. In contrast, the Gromov-Wasserstein distance \cite{Sturm2006, Memoli:2004:CPC:1057432.1057436} aims at comparing distributions that live in different metric spaces through the intrinsic pair-to-pair distances in each space. Unifying both distances, the Fused Gromov-Wasserstein distance was proposed in \cite{2018arXiv180509114V} and used in the discrete setting to encode, in a single OT formulation, both feature and structure information of structured objects. This approach considers structured objects as joint distributions over a common feature space associated with a structure space specific to each object. An OT formulation is derived by considering a tradeoff between the feature and the structure costs, respectively defined with respect to the Wasserstein and the Gromov-Wasserstein standpoints.

This paper presents the theoretical foundations of this distance and states the mathematical properties of the $FGW$ metric in the general setting. We first introduce a representation of structured objects using distributions. We show that classical Wasserstein and Gromov-Wasserstein distance can be used in order to compare either the feature information or the structure information of the structured object but that both fail at comparing the entire object. We then present the Fused Gromov-Wasserstein distance in its general formulation and we derive some of its mathematical properties. Particularly, we show that it is a metric in a given case, we give a concentration result, and we study its interpolation properties and its geodesic properties. We conclude by illustrating and interpreting the distance in several applicative contexts.

\paragraph{Notations}

Let $P(\Omega)$ be the set of all probability measures on a space $\Omega$ and $\mathcal{B}(A)$ the set of all Borel sets of a $\sigma$-algebra A. We note $\#$ the push forward-operator such that for $B \in \mathcal{B}(A)$, $T\#\mu(B)=\mu(T^{-1}(B))$.

A measure $\mu$ on a set $\Omega$ is said to be fully supported if $\text{supp}[\mu]=\Omega$, where $\text{supp}[\mu]$ is the minimal closed subset $A\subset \Omega$ such that $\mu(\Omega \textbackslash A)=0$. Informally, this is the set where the measure ``lives''. We note $P_{i}\#\mu$ the projection on the $i$-th marginal of $\mu$.

For two probability measures $\mu \in P(A)$ and $\nu \in P(B)$ we note $\Pi(\mu,\nu)$ the set of all couplings or matching measures of $\mu$ and $\nu$, \emph{i.e.} the set $\{ \pi \in P(A\times B) \, | \, \forall (A_{0},B_{0}) \in B(A) \times B(B), \pi(A_{0}\times B)=\mu(A_{0}),  \pi(A\times B_{0})=\nu(B_{0})\}$.

For two metric spaces $(X,d_{X})$ and $(Y,d_{Y})$ we define the distance $d_{X} \oplus d_{Y}$ on $X\times Y$ such that, for $(x,y),(x',y') \in X \times Y, \ d_{X} \oplus d_{Y}((x,y),(x',y'))=d_{X}(x,x')+d_{Y}(y,y')$.

We note the simplex of $N$ bins as $\Sigma_{N}=\{a \in (\mathbb{R}^{*}_{+})^{N},\sum_{i} a_{i}=1\}$. For two histograms $a \in \Sigma_{n}$ and $b \in \Sigma_{m}$  we note with some abuses $\couplingset(a,b)$ the set of all couplings of $a$ and $b$, \emph{i.e.} the set $\couplingset(a,b)=\{ \pi \in \mathbb{R}_{+}^{n \times m} | \sum_{i}\pi_{i,j}=b_j ; \sum_{j}\pi_{i,j}=a_i \}$. We also note $\otimes$ the tensor product, \emph{i.e.} for a tensor $L=(L_{i,j,k,l})$, $L\otimes B$ is the matrix $\left(\sum_{k,l} L_{i,j,k,l}B_{k,l}\right)_{i,j}$. Finally, for $x \in \Omega$, $\delta_{x}$ denotes the dirac in $x$.

\paragraph{Assumption} In the paper we suppose that all metric spaces are Polish, non trivial and all measures are Borel.

\section{Structured objects as distributions and Fused Gromov Wasserstein distance}

We can represent structured objects in the discrete case by a labelled graph $\mathcal{G}$ described by $(\{x_{i},a_{i}\})_{i \in [1,..n]}$ where $A=(a_{1},...,a_{n}) \in \Omega^{n} $ is the set of labels (also called features) and $X=(x_{1},...,x_{n})$  a representation of the graph vertices. To this extent, features $a_{i}$ are structured by the intrinsic relation between the vertices $x_{i}$ (see Fig. \ref{graphex}).  Note that in this model, we suppose that we can encode the relation between the vertices in the ambiant space of the vertices $x_i$ through the distance $d_X$ in this space.
\begin{figure}
\centering
\includegraphics[width=0.8\textwidth]{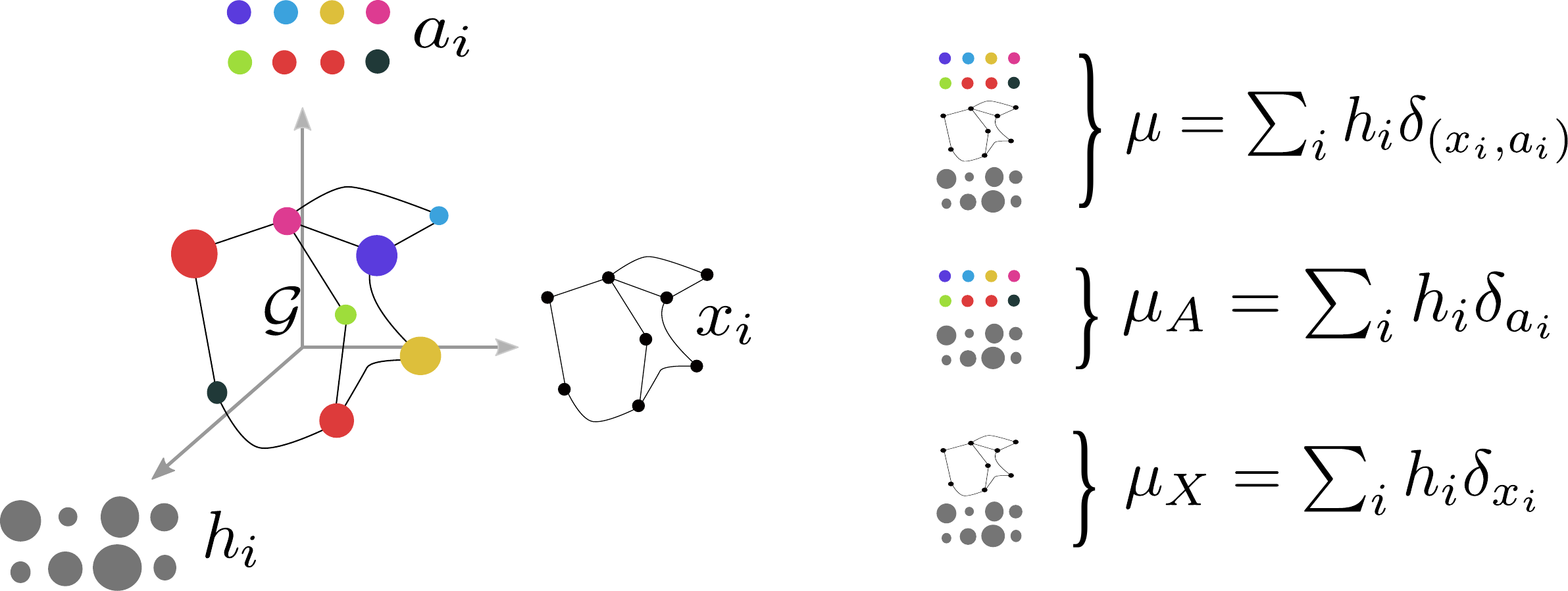}
\caption{Structured object (left) can be described by a labelled graph with $(a_{i})_{i}$ the feature information of the object and $(x_{i})_{i}$ the structure information. If we enrich this object with a histrogram $(h_{i})_{i}$ aiming at measuring the relative importance of the nodes between them we can represent the structured object as a fully supported probability measure $\mu$ over the couple space of feature and structure with marginals $\mu_{X}$ and $\mu_{A}$ on the structure and the features respectively (right) \label{graphex}}
\end{figure}
In many applications in machine learning, objects are readily endowed with a notion of distance between their points, hence defining metric spaces. As such, structured objects can be viewed as couples of $X \times A$ where $A$ is a subset of some metric space $(\Omega,d)$ representative of the feature space and $(X,d_{X})$ is a metric space representative of the structure, with $d_{X}$ the distance between elements of the space, modeling the intrinsic relationships between points of the structured object. 

In this paper, we propose to enrich the previous definition of a structured object with a (fully supported) probability measure which serves the purpose of signaling the relative importance of the object's elements. For example, we can add weights $(h_{i})_{i} \in \Sigma_{n}$ to each node in the graph defined previously. This way, we have created a fully supported probability measure $\mu= \sum_{i} h_{i} \delta_{(x_{i},a_{i})}$ which includes all the structured object information (see Fig. \ref{graphex}).

These considerations lead to the following and formal definition of a \emph{structured object} and the space it belongs to:

\begin{definition}{Structured objects.}

\noindent 
A structured object over a metric space $(\Omega,d)$ is the triplet $(X \times A, d_{X}, \mu)$, where $(X,d_{X})$ is a compact metric space, $A$ is a compact of $\Omega$ and $\mu$ is a fully supported probability measure over $X \times A$. $(\Omega,d)$ is denoted as the \emph{feature space}, $A$ is the \emph{feature information} of the structured object and $(X,d_{X})$ its \emph{structure information}.
\end{definition}

\begin{definition}{Space of structured objects.}

\noindent A structured object over $(\Omega,d)$ (simply denoted as $\Omega$) is an element of the following space:
$$H(\Omega)= \{(X \times A, d_{X}, \mu) | (X,d_{X}) \in \mathbb{X},A \in C(\Omega),\mu \in \overline{P}(X \times A) \}$$
where $C(\Omega)$ is the set of all compact subsets of $\Omega$, $\mathbb{X}$ the set of all compact metric spaces and $\overline{P}(X \times A)$ the set of fully supported probability measures on $X \times A$.
\end{definition}
We will note $\mu_{X}$ and $\mu_{A}$ the structure and feature (fully-supported) marginals of $\mu$. Those marginals encode a very partial information since they focus only on independent feature distributions or only on the structure. An example of $\mu$, $\mu_{X}$ and $\mu_{A}$ is provided for a labeled graph in Fig. \ref{graphex}. With this definition, the features of all structured objects are directly comparable since they live in the same ambient space $(\Omega,d)$. For the sake of simplicity, and when it is clear from the context, we will denote only  $\mu$ the whole structured object.

With this definition, we only consider the fully-supported case. Although mathematical results can be expanded to non fully supported measures, it leads to discussions about the support of the measures and for the sake of clarity, we omit here this extension. In the following paragraphs, $(X\times A, d_{X}, \mu)$ and $(Y \times B, d_{Y},\nu)$ are structured objects.

\subsection{Comparing structured objects}

We now aim to define a notion of equivalence between two structured objects. 
Intuitively, two structured objects are the same if they share the same feature information, if their structure information are lookalike and if the probability measures are corresponding in some sense. In this section, we present mathematical tools for comparing individually the elements of structured objects.

First, our formalism implies comparing metric spaces, which can be done \textit{via} the notion of \emph{isometry}.

\begin{definition}{Isometry \label{isometrydef}}

\noindent Let $(X,d_{X})$ and $(Y,d_{Y})$ be two metric spaces. An isometry is a sujective map $f : X \rightarrow Y$ that preserves the distances:
\begin{equation}
\label{isometryproperty}
\forall x,x' \in X, d_{Y}(f(x),f(x'))=d_{X}(x,x')
\end{equation}

\end{definition}

An isometry is bijective, since for $f(x)=f(x')$ we have $d_{Y}(f(x),f(x'))=0=d_{X}(x,x')$ and hence $x=x'$ (in the same way $f^{-1}$ is also a isometry).  When it exists, $X$ and $Y$ have the same size and any ``metric statement'' in the first space is ``transported'' to the second space by the isometry $f$.

\begin{example}
Let us consider the two following graphs whose discrete metric spaces are obtained as shortest path between the vertices (see corresponding graphs in Figure \ref{isometric_spaces}) $$\small\begin{pmatrix}x_{1}\\x_{2}\\x_{3}\\x_{4}\end{pmatrix},{\underbrace{\begin{pmatrix}0&1&1&1 \\
    1&0&1&2\\
    1&1&0&2\\
    1&2&2&0
    \end{pmatrix}}_{d_X(x_i,x_j)}} \quad\text{ and }\quad \begin{pmatrix}y_{1}\\y_{2}\\y_{3}\\y_{4}\end{pmatrix},{\small\underbrace{\begin{pmatrix}0&1&1&1 \\
        1&0&2&2\\
        1&2&0&1\\
        1&2&1&0
    \end{pmatrix}}_{d_Y(y_i,y_j)}}.$$  These spaces are isometric since $f(x_{1})=y_{1}$, $f(x_{2})=y_{3}$, $f(x_{3})=y_{4}$, $f(x_{4})=y_{2}$ verifies \eqref{isometryproperty}.

    \tikzstyle{vertex1}=[circle,fill=black,minimum size=6pt,inner sep=0pt]
\tikzstyle{vertex2}=[circle,fill=black,minimum size=6pt,inner sep=0pt]
\tikzstyle{vertex3}=[circle,fill=black,minimum size=6pt,inner sep=0pt]
\tikzstyle{vertex4}=[circle,fill=black,minimum size=6pt,inner sep=0pt]

    \begin{figure}[t]
    \centering
\tikzstyle{edge} = [draw]
\begin{tikzpicture}[scale=1, auto,swap]
    \foreach \pos/\name in {{(0,0)/a}}
        		\node[vertex1] (\name) at \pos {};
    \foreach \pos/\name in {{(2,0)/b}}
        		\node[vertex2] (\name) at \pos {};
    \foreach \pos/\name in {{(1,1)/c}}
        		\node[vertex3] (\name) at \pos {};
    \foreach \pos/\name in {{(2.5,1.5)/d}}
        		\node[vertex4] (\name) at \pos {};
\def\x{5}

    \foreach \pos/\name in {{(0+\x,0)/e}}
        		\node[vertex1] (\name) at \pos {};
    \foreach \pos/\name in {{(2+\x,0)/f}}
        		\node[vertex2] (\name) at \pos {};
    \foreach \pos/\name in {{(1+\x,1)/g}}
        		\node[vertex3] (\name) at \pos {};
    \foreach \pos/\name in {{(2.5+\x,1.5)/h}}
        		\node[vertex4] (\name) at \pos {};

    \foreach \source/ \dest in {b/a, c/a, c/b, c/d}
        \path[edge] (\source) -- (\dest);

    \foreach \source/ \dest in {e/g, g/f, g/h, h/f}
        \path[edge] (\source) -- (\dest);

\def\y{0.1}
\foreach \pos/ \name in {{(0,00-\y)/x_2}, {(2,00-\y)/x_3}, {(2.5,1.50-\y)/x_4}}
  \draw \pos node[below, scale = 1]{$\name$};
\draw (1,1) node[left, scale = 1]{$x_1$};

\foreach \pos/ \name in {{(0+\x,0-\y)/y_2}, {(2+\x,00-\y)/y_3}}
  \draw \pos node[below, scale = 1]{$\name$};
\draw (1+\x,1) node[left, scale = 1]{$y_1$};
\draw (2.5+\x,1.5) node[right, scale = 1]{$y_4$};

 \end{tikzpicture}
  \caption{Two isometric metric spaces. Distances between the nodes are given by the shortest path, and the weight of each edge is equal to 1.\label{isometric_spaces}}
     \end{figure}
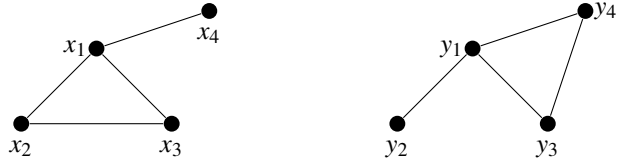
\end{example}

The previous definition can be used in order to compare the structure information of two structured objects.  Regarding the feature information, since they all lie in the same ambient space $\Omega$, a natural way for comparing them is by the standard equality $A=B$. Finally, in order to compare measures on different spaces, the notion of \emph{preserving application} can be used.

\begin{definition}{Preserving application\label{preservingdef}
}

\noindent Let $\Omega_{1},\mu_1 \in P(\Omega_{1})$ and $\Omega_{2},\mu_2 \in P(\Omega_{2})$ be two measurable spaces. An application $f: \Omega_{1} \rightarrow \Omega_{2}$ is said to be \emph{measure preserving} if it transports the measure $\mu_{1}$ on $\mu_{2}$ such that $$f\#\mu_{1}=\mu_{2}.$$
If there exists such a measure preserving map, the properties about measures of $\Omega_{1}$ are transported via $f$ to $\Omega_{2}$ .
\end{definition}

Let us now consider a measurable metric space (denoted mm-space), \textit{i.e.} a metric space $(X,d_{X})$ enriched with a probability measure and described by a triplet $(X,d_{X},\mu_{X} \in \overline{P}(X))$. An interesting notion for comparing mm-spaces is the notion of \textit{isomophism}.

 \begin{definition}{Isomorphism.}

\noindent Two mm-spaces $(X,d_{X},\mu_{X}),(Y,d_{Y},\mu_{Y})$ are isomorphic if there exists a measure preserving isometry $f: X \rightarrow Y$ between them.
\end{definition}

\begin{example}
    Let us consider two mm-spaces $(X=\{x_1,x_2\},d_{X}=\{1\},\mu_{X}=\{\frac{1}{2},\frac{1}{2}\})$ and $(Y=\{y_1,y_2\},d_{Y}=\{1\},\mu_{Y}=\{\frac{1}{4},\frac{3}{4}\})$ as depicted in Figure \ref{isometric_not_isomoprhic}. These spaces are isometric but not isomorphic as there exists no measure preserving application between them.

    \begin{figure}[t]
    \centering
\tikzstyle{vertex1}=[circle,fill=black,minimum size=6pt,inner sep=0pt]
\tikzstyle{vertex2}=[circle,fill=black,minimum size=3pt,inner sep=0pt]
\tikzstyle{vertex3}=[circle,fill=black,minimum size=9pt,inner sep=0pt]

\tikzstyle{edge} = [draw]
\begin{tikzpicture}[scale=1, auto,swap]
    \foreach \pos/\name in {{(0,0)/a}}
        		\node[vertex2] (\name) at \pos {};
    \foreach \pos/\name in {{(2,0)/b}}
        		\node[vertex3] (\name) at \pos {};
\def\x{5}

    \foreach \pos/\name in {{(0+\x,0)/e}}
        		\node[vertex1] (\name) at \pos {};
    \foreach \pos/\name in {{(2+\x,0)/f}}
        		\node[vertex1] (\name) at \pos {};

    \foreach \source/ \dest in {b/a, e/f}
        \path[edge] (\source) -- (\dest);

\def\y{0.1}
\foreach \pos/ \name in {{(0,00-\y)/x_1}, {(2,00-\y)/x_2}, {(0+\x,0-\y)/y_1}, {(2+\x,00-\y)/y_2}}
  \draw \pos node[below, scale = 1]{$\name$};
\def\y{-0.1}
\foreach \pos/ \name in {{(0,00-\y)/\frac{1}{4}}, {(2,00-\y)/\frac{3}{4}}, {(0+\x,0-\y)/\frac{1}{2}}, {(2+\x,00-\y)/\frac{1}{2}}}
  \draw \pos node[above, scale = 1]{$\name$};

 \end{tikzpicture}
    \caption{Two isometric but not isomorphic spaces. \label{isometric_not_isomoprhic}}
    \end{figure}
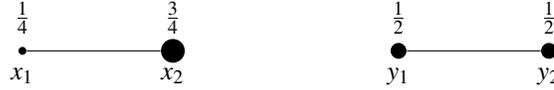
\end{example}

All this considered, we can now define a notion of equivalence between structured objects.

\begin{definition}{Equivalence of structured objects.\label{eqstructobjects}
}

\noindent Two structured objects are said to be \emph{equivalent} (and the equivalence relation is denoted $\sim$ ) if there exists an application $f : X \times A \rightarrow Y \times B$ such that $f$ is measure-preserving ($f\#\mu=\nu$) and if one defines $f_{1}: X \rightarrow Y$ and $f_{2}: A \rightarrow B$ such that $f(x,a)=(f_{1}(x),f_{2}(a))$  then  $f_{1}$ is an isometry and $f_{2}=I_{d}$ ( \textit{i.e} $A=B$).

\end{definition}

It is clear that it defines an equivalence relation over $H(\Omega)$.

\begin{example}
    To illustrate this definition, we consider a simple example of two structured objects:

$$\underbrace{\begin{pmatrix}(x_{1},a_{1})\\(x_{2},a_{2})\\(x_{3},a_{3})\\(x_{4},a_{4})\end{pmatrix}}_{x_i,a_i},{\underbrace{\begin{pmatrix}0&1&1&1 \\
    1&0&1&2\\
    1&1&0&2\\
    1&2&2&0
    \end{pmatrix}}_{d_X(x_i,x_j)}} ,\underbrace{\begin{pmatrix}\nicefrac{1}{4}\\\nicefrac{1}{4}\\\nicefrac{1}{4}\\\nicefrac{1}{4}\end{pmatrix}}_{h_i}\quad \text{and}\quad
    \underbrace{\begin{pmatrix}(y_{1},b_{1})\\(y_{2},b_{2})\\(y_{3},b_{3})\\(y_{4},b_{4})\end{pmatrix}}_{y_{i},b_i},{\small\underbrace{\begin{pmatrix}0&1&1&1 \\
        1&0&2&2\\
        1&2&0&1\\
        1&2&1&0
    \end{pmatrix}}_{d_Y(y_i,y_j)}},\underbrace{\begin{pmatrix}\nicefrac{1}{4}\\\nicefrac{1}{4}\\\nicefrac{1}{4}\\\nicefrac{1}{4}\end{pmatrix}}_{h'_i}
        $$

with $\forall i, a_{i}=b_{i}$ and $\forall i \neq j, a_{i} \neq a_{j}$ (see Figure \ref{equivalent_objects}). The two structured objects have isometric structures and same features individually but they are not equivalent. One possible application $f=(f_{1},f_{2}) : X \times A \rightarrow Y \times B$ such that $f_{1}$ is an isometry is $f(x_{1},a_{1})=(y_{1},b_{1})$, $f(x_{2},a_{2})=(y_{3},b_{3})$, $f(x_{3},a_{3})=(y_{4},b_{4})$, $f(x_{4},a_{4})=(y_{2},b_{2})$. Yet this application does not verifies $f_{2}=I_{d}$ since $f_{2}(a_{2})=b_{3}$ and $a_{2} \neq b_{3}$. The other possible applications such that $f_{1}$ is an isometry are simple permutation of this example, yet it is easy to check that none of them verifies $f_{2}=I_{d}$ (for example with $f(x_{2},a_{2})=(y_{4},b_{4})$).

 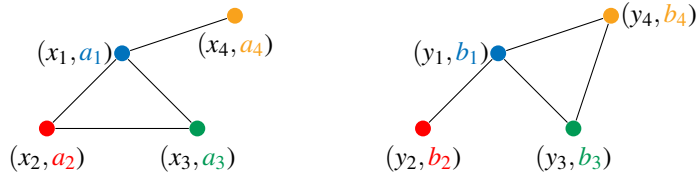
\begin{figure}[t]
    \centering
\tikzstyle{vertex1}=[circle,fill=red,minimum size=6pt,inner sep=0pt]
\tikzstyle{vertex2}=[circle,fill=ForestGreen,minimum size=6pt,inner sep=0pt]
\tikzstyle{vertex3}=[circle,fill=RoyalBlue,minimum size=6pt,inner sep=0pt]
\tikzstyle{vertex4}=[circle,fill=YellowOrange,minimum size=6pt,inner sep=0pt]

\tikzstyle{edge} = [draw]
\begin{tikzpicture}[scale=1, auto,swap]
    \foreach \pos/\name in {{(0,0)/a}}
        		\node[vertex1] (\name) at \pos {};
    \foreach \pos/\name in {{(2,0)/b}}
        		\node[vertex2] (\name) at \pos {};
    \foreach \pos/\name in {{(1,1)/c}}
        		\node[vertex3] (\name) at \pos {};
    \foreach \pos/\name in {{(2.5,1.5)/d}}
        		\node[vertex4] (\name) at \pos {};
\def\x{5}

    \foreach \pos/\name in {{(0+\x,0)/e}}
        		\node[vertex1] (\name) at \pos {};
    \foreach \pos/\name in {{(2+\x,0)/f}}
        		\node[vertex2] (\name) at \pos {};
    \foreach \pos/\name in {{(1+\x,1)/g}}
        		\node[vertex3] (\name) at \pos {};
    \foreach \pos/\name in {{(2.5+\x,1.5)/h}}
        		\node[vertex4] (\name) at \pos {};

    \foreach \source/ \dest in {b/a, c/a, c/b, c/d}
        \path[edge] (\source) -- (\dest);

    \foreach \source/ \dest in {e/g, g/f, g/h, h/f}
        \path[edge] (\source) -- (\dest);

\def\y{0.1}
\foreach \pos/ \name  in {{(0,00-\y)/(x_2, \color{red}{a_2}}, {(2,00-\y)/(x_3, \color{ForestGreen}{a_3}}, {(2.5,1.50-\y)/(x_4, \color{YellowOrange}{a_4}}}
  \draw \pos node[below, scale = 1]{$\name$)};
\draw (1,1) node[left, scale = 1]{$(x_1, \color{RoyalBlue}{a_1}$)};

\foreach \pos/ \name in {{(0+\x,0-\y)/(y_2, \color{red}{b_2}}, {(2+\x,00-\y)/(y_3, \color{ForestGreen}{b_3}}}
  \draw \pos node[below, scale = 1]{$\name$)};
\draw (1+\x,1) node[left, scale = 1]{$(y_1, \color{RoyalBlue}{b_1}$)};
\draw (2.5+\x,1.5) node[right, scale = 1]{$(y_4, \color{YellowOrange}{b_4}$)};

 \end{tikzpicture}
    \caption{Two structured objects with isometric structures and identical features that are not equivalent. The color of the nodes represent the node feature and each edge represents a distance of 1 between the connected nodes. \label{equivalent_objects}}
    \end{figure}
\end{example}

\subsection{Background on OT distances}

The Optimal Transport (OT) framework defines useful distances between probability measures that describe either the feature or the structure information of structured objects.

\paragraph{Wasserstein distance}

When the probability measures live in the same metric space $(\Omega, d)$, the quantity:
\begin{equation}
d^{\Omega}_{W,p}(\mu_{A},\nu_{B})=\bigg(\underset{\pi \in \Pi(\mu_{A},\nu_{B})}{\text{inf}} \losswass(\pi)\bigg)^{\frac{1}{p}}
\label{otw}
\end{equation}

where $$\losswass(\pi)=\underset{A \times B}{\int} d(a,b)^{p} d\pi(a,b)$$
is usually called the $p$-Wasserstein distance (also known  with $p=1$  as Earth Mover's distance \cite{Rubner2000} in the computer vision comunity) between distributions $\mu_{A}$ and $\nu_{B}$.

\begin{figure}[t]
\label{wassex}
\centering
\includegraphics[width=0.60\textwidth]{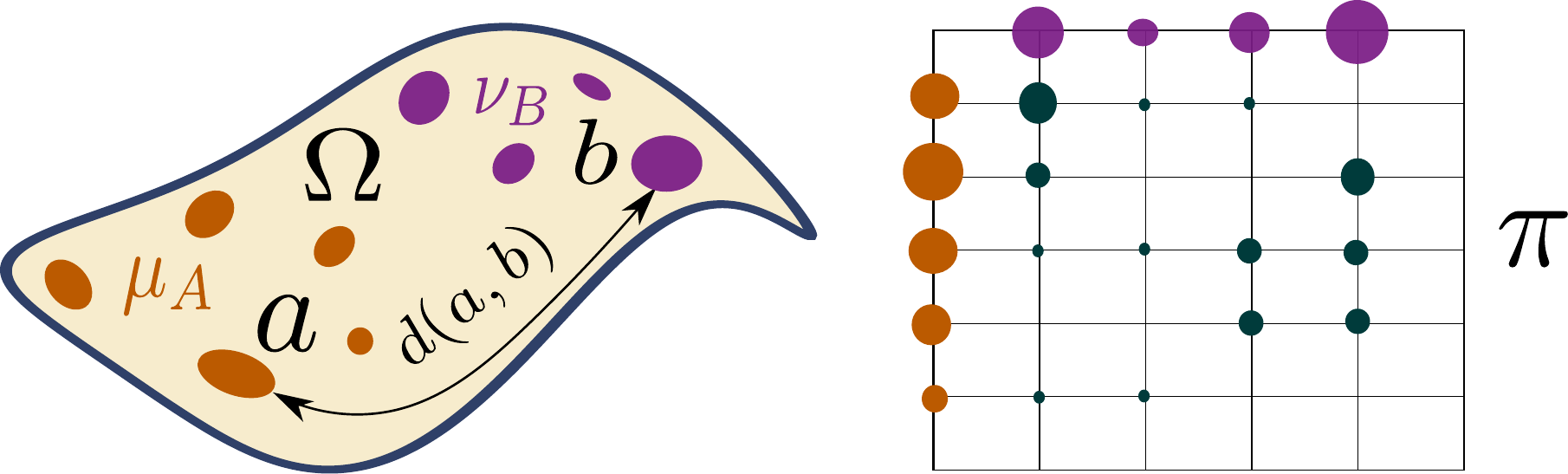}
\caption{Example of coupling between two discrete measures on the same ground space equipped with a distance $d$ that will define the Wasserstein distance. Left: the discrete measures on $\Omega$. Right: One possible coupling between these measures which respects the mass conservation. Image inspired from \cite[Fig 2.6]{peyre2017computational} \label{wassex}}
\end{figure}

Optimal transport theory defines a distance on probability measures such that $d^{\Omega}_{W,p}(\mu_{A},\nu_{B})=0$ \textit{iff} $\mu_{A}=\nu_{B}$. This distance also has a nice geometrical interpretation as it represents the optimal cost \emph{w.r.t.} $d$  to move the measure $\mu_{A}$ onto $\nu_{B}$ with $\pi(a,b)$ the amount of probability mass shifted from $a$ to $b$ (see figure \ref{wassex})
To this extent, the Wasserstein distance quantifies how ``far'' $\mu_{A}$ is from $\nu_{B}$ by measuring how ``difficult'' it is to move all the mass from $\mu_{A}$ onto $\nu_{B}$. Optimal transport can deal with both smooth and discrete measures and has proved very useful for comparing distributions in a shared space but with different (and even non-overlapping) supports.

\paragraph{Gromov Wasserstein distance}

In order to compare measures that are not necessarily in the same ambient space, \cite{Sturm2006,Memoli:2004:CPC:1057432.1057436} define an OT-like distance. By relaxing the classical Hausdorff distance~\cite{Memoli:2004:CPC:1057432.1057436,Villani} that is untractable in practice, authors build a distance over the space of all metric spaces. For two compact mm-spaces $\mathcal{X}=(X,d_{X},\mu_{X} \in \overline{P}(X))$ and $\mathcal{Y}=(Y,d_{Y},\nu_{Y} \in \overline{P}(Y))$, the Gromov-Wasserstein distance is defined as:
\begin{equation}
d_{GW,p}(\mu_{X},\nu_{Y})=\big( \ \underset{\pi \in \Pi(\mu_X, \nu_Y)}{\text{inf}} \lossgromov(\pi)\ \big)^{\frac{1}{p}}
\label{otgw}
\end{equation}
where  $$\lossgromov(\pi)= \underset{X\times Y \times X\times Y}{\int} L(x,y,x',y')^{p} d\pi(x,y)d\pi(x',y')$$ with $$L(x,y,x',y')=|d_{X}(x,x')-d_{Y}(y,y')|$$

Note that, with some abuse of notation, we denote the entire mm-space by its probability measure and that the Gromov-Wasserstein distance depends on the choice of the metrics $d_{X}$ and $d_{Y}$. When it is not clear from the context we will denote by $\lossgromov(d_{X},d_{Y},\pi)$ the Gromov-Wasserstein loss. The resulting coupling tends to associate pairs of points with similar distances within each pair (see figure~\ref{gromex}). The Gromov-Wasserstein distance defines a metric over the space of all metric spaces quotiented by measure-preserving isometries (see def~\ref{isometrydef} and~\ref{preservingdef}), thus allowing the comparison of measures over different ground spaces.   This distance has been used for shape comparison in~\cite{springerlink:10.1007/s10208-011-9093-5} and is invariant to rotations and translations in either spaces.

\begin{figure}[t]
\label{gwex}
\centering
\includegraphics[width=0.6\textwidth]{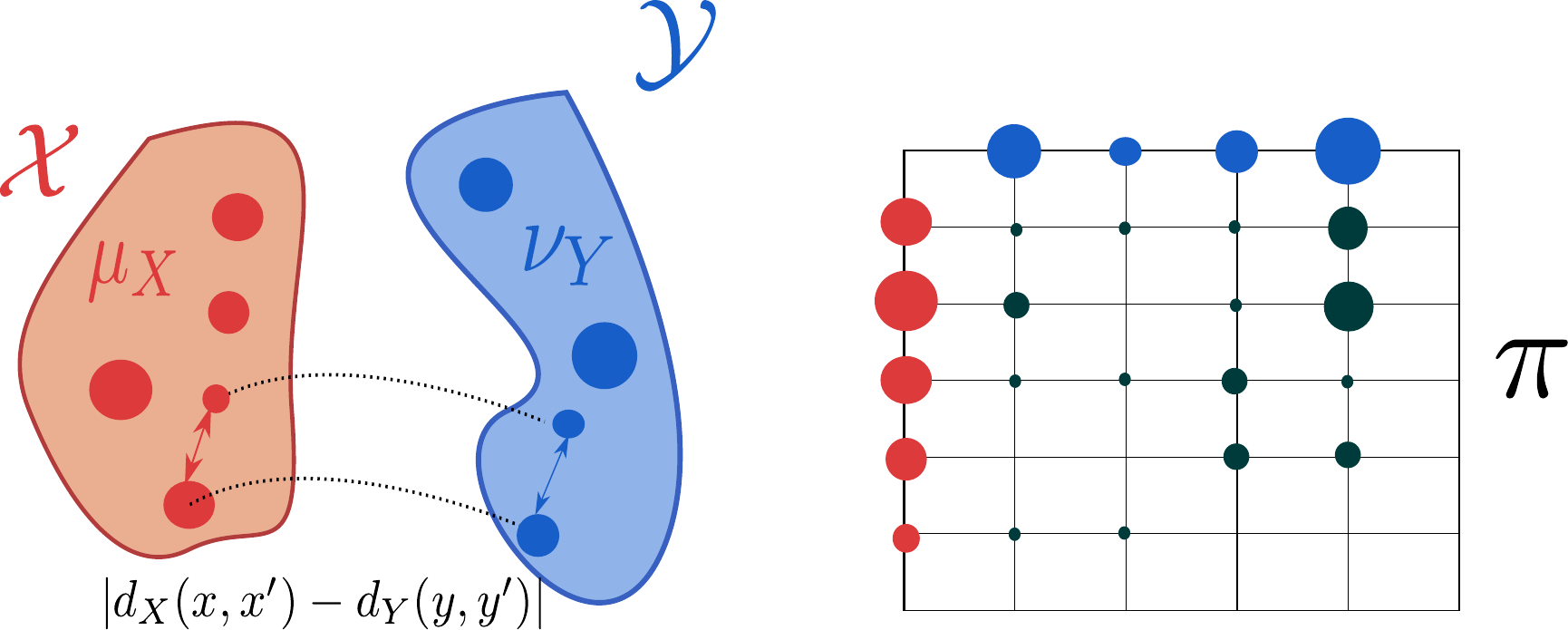}
\caption{Gromov-Wasserstein coupling of two mm-spaces $\mathcal{X}=(X,d_{X},\mu_{X})$ and $\mathcal{Y}=(Y,d_{Y},\nu_{Y})$. Left: the mm-spaces share nothing in common. Similarity between pairwise distances is measured by $|d_{X}(x,x')-d_{Y}(y,y')|$. Right: an admissible coupling of $\mu_{X}$ and $\mu_{Y}$. Image inspired from~\cite[Fig 10.8]{peyre2017computational} \label{gromex}}
\end{figure}

\paragraph{Some adaptations of $W$ and $GW$}

Despite the appealing properties of both Wasserstein and Gromov-Wasserstein distances, they fail at comparing structured objects as originally defined by focusing only on the feature and structure marginals respectively. However, with some hypotheses, one could adapt these distances for structured objects.

If the structure spaces are part of a same ground space $(Z,d_{Z})$, one can build a distance $\hat{d}$ between couples $(x,a)$ and $(y,b)$ and apply the Wasserstein distance so as to compare the two structured objects. In this case, when the Wasserstein distance vanishes it implies that the structured objects are equal in the sense of equality between the structures and the features respectively ($X=Y$ and $A=B$). This approach is very related with the one discussed in \cite{Thorpe2017} where authors define the Transportation $L^{p}$ distance for signal analysis purposes. Their approach can be viewed as a transport between two joint measures $\mu(X\times A)=\lambda(\{x \ \text{s.t} \ x \in X \subset Z=\mathbb{R}^{d}; \ f(x) \in A \subset \mathbb{R}^{m}\})$, $\nu(Y\times B)=\lambda(\{y \ \text{s.t} \ y \in Y \subset Z=\mathbb{R}^{d}; \ g(y) \in B \subset \mathbb{R}^{m}\})$ for function $f,g : Z \rightarrow \mathbb{R}^{m}$ representative of the signal values and $\lambda$ the Lebesgue measure. The distance for the transport is defined as $\hat{d}((x,f(x)),(y,g(y)))= \frac{1}{\alpha} \|x-y\|_{p}^{p} + \|f(x)-g(y)\|_{p}^{p}$ for $\alpha > 0$ and $\|\cdot\|_{p}$ the $l_{p}$ norm. In this case $f(x)$ and $g(y)$ can be interpreted as encoding the feature information of the signal while $x,y$ encode its structure information. In contrast to the $FGW$ approach, invariants of this approach are the feature \emph{and} structure preserving applications from $X\times A$ to $Y \times B$ whereas the invariants for the $FGW$ distance are the feature preserving applications that are isometries in the structure space (as seen further in theorem \ref{Topology}).

The Gromov-Wasserstein distance can also be adapted to structured objects by considering for example the distances $d_{X} \oplus d$ and $d_{Y} \oplus d$ within each space $X \times A$ and $Y \times B$ respectively. When the resulting distance vanishes, structured objects are isomorphic with respect to $d_{X} \oplus d$ and $d_{Y} \oplus d$, yet resulting on a weaker result than when $FGW$ vanishes. Indeed, as seen in theorem \ref{Topology} $FGW$ is null \textit{iff} the structured objects are equivalent and in such case $(X \times A,d_{X} \oplus d,\mu)$ and $(Y \times B,d_{Y} \oplus d,\nu)$ are \textit{de facto} isomorphic. However the converse is not necessarily true. For example in Fig. \ref{equivalent_objects} the structures are isometric and the distances between the features within each space are the same between each structured objects so $(X \times A,d_{X} \oplus d,\mu)$ and $(Y \times B,d_{Y} \oplus d,\nu)$ are isomorphic, yet not equivalent as shown in the example.

\subsection{Fused Gromov-Wasserstein distance}

Building on both Gromov-Wasserstein and Wasserstein distances we define the Fused Gromov-Wasserstein ($FGW$) distance on $H(\Omega)$:

\begin{definition}{Fused Gromov-Wasserstein distance.\label{fgwdef}}

\noindent The Fused-Gromov-Wasserstein distance is defined for $\alpha \in [0,1]$ and $p,q \geq 1$ as:
\begin{equation}
d^{\Omega}_{FGW,\alpha,p,q}(\mu,\nu)=\big( \ \underset{\pi \in \Pi(\mu,\nu)}{\text{inf}}  \lossfgw(\pi)\ \big)^{\frac{1}{p}}
\label{otfgx}
\end{equation}
where
$$\lossfgw(\pi){=}\int\limits_{(X\times A \times Y \times B)^{2}} \big((1-\alpha) d(a,b)^{q} +\alpha L(x,y,x',y')^{q} \big)^{p} \,d\pi((x,a),(y,b))\,d\pi((x',a'),(y',b'))$$

\end{definition}

\begin{figure}[t]
\label{space}
\centering
\includegraphics[width=0.4\textwidth]{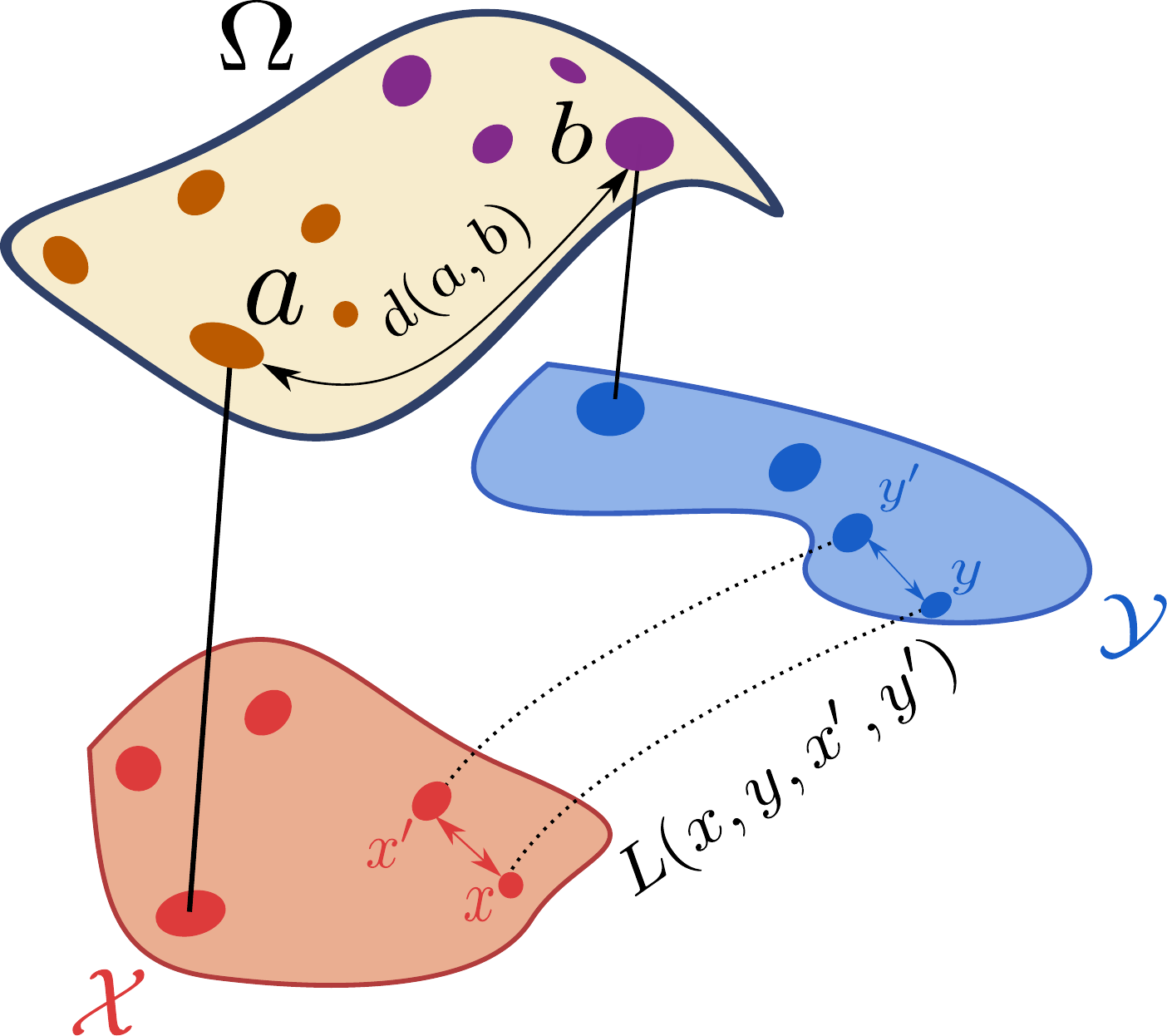}
\caption{Illustration of the definition \ref{fgwdef}. The figure shows two structured objects $(X\times A,d_{X},\mu)$ and $(Y \times B,d_{Y},\mu)$. The feature space $\Omega$ is the common space for all features. The two metric spaces $(X,d_{X})$ and $(Y,d_{Y})$ represent the structures of our two structured objects, the similarity between all pair to pair distances of the structure points is measured by $L(x,y,x',y')$. $\mu$ and $\nu$ are the joint measures on the structure space and the feature space. \label{fgwex}}
\end{figure}

This definition is illustrated in Figure \ref{fgwex}. $\alpha$ acts as a trade-off parameter between the structure term represented by $L(x,y,x',y')$ and the feature term $d(a,b)$. In this way, the convex combination of both terms leads to the use of both information in a single formalism resulting on a single map $\pi$ that behaves as the optimal map with respect to the structure and the feature costs in order to ``move'' the mass from the one joint probability measure to the other.

In contrast to the work presented in \cite{2018arXiv180509114V} where the trade-off is defined \textit{via} $d(a,b)^{q}+ \alpha L(x,y,x',y')^{q}$  for $\alpha \in [0,\infty[$ we rather consider a convex combination  $(1-\alpha)d(a,b)^{q}+ \alpha L(x,y,x',y')^{q}$ for $\alpha \in [0,1]$. Both formulations are strictly equivalent since any optimal plan \textit{w.r.t} the cost with the convex combination leads to an optimal plan \textit{w.r.t} for the other cost and conversely (see \ref{deffgweq} for more details). However the definition with the convex combination of a feature and structure cost carries out more theoretical properties such as the interpolation (theorem \ref{interpolationtheorem}).

Many desirable properties arise from this definition. Among them, one can define a topology over the space of structured objects using the $FGW$ distance to compare structured objects, in the same philosophy as for Wasserstein and Gromov-Wasserstein distances. The definition also implies that $FGW$ acts as a generalization of both Wasserstein and Gromov-Wasserstein distances, with $FGW$ achieving an interpolation of these two distances. More remarkably, $FGW$ distance also realizes geodesic properties over the space of structured objects, allowing the definition of gradient flows. All these properties are detailed in the next section, and before reviewing them, we first compare $FGW$ with $GW$ and $W$ distances and state the following proposition (by assuming for now that $FGW$ exists, which will be shown later).

\begin{proposition}{Comparaison between $FGW$, $GW$ and $W$.}
\label{fgwcomparetogwandw}
\begin{itemize}
\item The following inequalities hold:
\begin{equation}
\label{wassinequality}
d^{\Omega}_{FGW,\alpha,p,q}(\mu,\nu) \geq (1-\alpha)d^{\Omega}_{W,pq}(\mu_{A},\nu_{B})^{q}
\end{equation}
\begin{equation}
\label{gromovinequality}
d^{\Omega}_{FGW,\alpha,p,q}(\mu,\nu) \geq \alpha d_{GW,pq}(\mu_{X},\nu_{Y})^{q}
\end{equation}
\item Let us suppose that the structure spaces $(X,d_{X})$,$(Y,d_{Y})$ are part of a single ground space $(Z,d_{Z})$ (\textit{i.e.} $d_{X}=d_{Y}=d_{Z}$). We consider the Wasserstein distance between $\mu$ and $\nu$ (well defined in this case) for the distance on $Z \times \Omega$ : $\tilde{d}((x,a),(y,b))=(1-\alpha)d(a,b)+\alpha d_{Z}(x,y)$. Then:
\begin{equation}
\label{samespace}
d^{\Omega}_{FGW,\alpha,p,1}(\mu,\nu)^{p} \leq 2d^{Z \times \Omega}_{W,p}(\mu,\nu)^{p}
\end{equation}

\end{itemize}

\end{proposition}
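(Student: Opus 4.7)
For \eqref{wassinequality} and \eqref{gromovinequality}, the strategy is to exploit the elementary pointwise bounds
\[
\bigl((1-\alpha)d(a,b)^q + \alpha L(x,y,x',y')^q\bigr)^p \geq (1-\alpha)^p d(a,b)^{pq}
\]
and
\[
\bigl((1-\alpha)d(a,b)^q + \alpha L(x,y,x',y')^q\bigr)^p \geq \alpha^p L(x,y,x',y')^{pq},
\]
both of which follow from the non-negativity of the two summands. First I would fix an arbitrary $\pi \in \Pi(\mu,\nu)$ and insert these inequalities into the integrand defining $\lossfgw(\pi)$. For the Wasserstein bound, the integrand $d(a,b)^{pq}$ depends only on the first copy of the variables, so Fubini collapses the double integral to $\int d(a,b)^{pq}\,d\pi((x,a),(y,b))$. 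Pushing $\pi$ forward by the projection $((x,a),(y,b)) \mapsto (a,b)$ yields $\pi_{A,B} \in \Pi(\mu_A,\nu_B)$, and therefore $\int d(a,b)^{pq}\,d\pi_{A,B}(a,b) \geq d^{\Omega}_{W,pq}(\mu_A,\nu_B)^{pq}$. Taking the infimum over $\pi$ and then the $p$-th root gives \eqref{wassinequality}. The proof of \eqref{gromovinequality} is parallel: $L(x,y,x',y')^{pq}$ depends only on the structure coordinates, and the pushforward of $\pi\otimes\pi$ under $((x,a),(y,b),(x',a'),(y',b')) \mapsto (x,y,x',y')$ is (the symmetric tensor product of) a coupling in $\Pi(\mu_X,\nu_Y)$, so the remaining double integral is bounded below by $d_{GW,pq}(\mu_X,\nu_Y)^{pq}$.

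\textbf{Plan for the upper bound in the common-ground-space case.} When $d_X = d_Y = d_Z$, the reverse triangle inequality gives
\[
L(x,y,x',y') = |d_Z(x,x') - d_Z(y,y')| \leq d_Z(x,y) + d_Z(x',y'),
\]
so that
\[
(1-\alpha)d(a,b) + \alpha L(x,y,x',y') \leq \tilde{d}((x,a),(y,b)) + \tilde{d}((x',a'),(y',b')),
\]
since the leftover term $(1-\alpha)d(a',b')$ is non-negative. I would then raise this inequality to the $p$-th power via the convexity estimate $(u+v)^p \leq 2^{p-1}(u^p+v^p)$ and integrate against $d\pi\otimes d\pi$; by symmetry in the two dummy variables the double integral collapses to a single one, producing $\lossfgw(\pi) \leq 2^{p}\int \tilde{d}((x,a),(y,b))^p\,d\pi$ (specializing to $p=1$ gives the stated factor of $2$). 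Taking the infimum of the right-hand side over $\pi \in \Pi(\mu,\nu)$ yields the comparison with $d^{Z\times\Omega}_{W,p}(\mu,\nu)^p$.

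\textbf{Expected obstacle.} The argument is essentially a sequence of elementary inequalities, so there is no deep difficulty. The one point that must be handled with care is verifying that the pushforwards by the coordinate projections actually land in $\Pi(\mu_A,\nu_B)$ and $\Pi(\mu_X,\nu_Y)$; this is routine from the definition of $\Pi(\mu,\nu)$ and the fact that $\mu_A, \mu_X, \nu_B, \nu_Y$ are the corresponding marginals of $\mu$ and $\nu$. The other delicate accounting concerns the exact multiplicative constant in \eqref{samespace}: one must be careful about where convexity is invoked and about keeping the symmetrization exact, so that integration against the symmetric measure $\pi\otimes\pi$ gives a single factor of $\int \tilde{d}^p\,d\pi$ rather than a spurious squared quantity.
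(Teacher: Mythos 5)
Your proposal is correct and follows essentially the same route as the paper: drop one of the two non-negative summands in the integrand and project the coupling onto the relevant marginals for the two lower bounds, then use the reverse triangle inequality for $d_Z$ plus a symmetrization to get the upper bound (the paper uses Minkowski where you use $(u+v)^p\le 2^{p-1}(u^p+v^p)$, which yields the same constant $2^p$). Your remark that the factor $2$ in \eqref{samespace} is only obtained at $p=1$ is accurate and applies equally to the paper's own proof, which in fact establishes $d^{\Omega}_{FGW,\alpha,p,1}(\mu,\nu)^{p} \leq 2^{p}\,d^{Z \times \Omega}_{W,p}(\mu,\nu)^{p}$ for general $p$.
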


In particular, following this proposition, when the $FGW$ distance is null then both $GW$ and $W$ distances vanish so that the structure and the feature of the structure object are individually ``the same'' (with respect to their corresponding equivalence relation). However the converse is not necessarily true as shown in the following example.

\paragraph{Toy trees \label{toytreesex}} We construct two trees as illustrated in Figure \ref{mapstoy} where the 1D node features are shown with colors. The shortest path between the nodes is used to capture the structures of the two structured objects and the euclidean distance is used for the features. Figure \ref{mapstoy} illustrates the differences between $FGW$, $GW$ and $W$ distances. The left part is Wasserstein distance between the features: red nodes are transported on red ones and the blue nodes on the blue ones but tree structures are completely discarded. In this case, the Wasserstein distance vanishes. In the right part, we compute the Gromov-Wasserstein distance between structures: all couples of points are transported to another couple of points, which enforces the matching of tree structures without taking into account the features. Since structures are isometric, the Gromov-Wasserstein distance is null. Finally, we compute the $FGW$ using intermediate $\alpha$ (center), the bottom and first level structure is preserved as well as the feature matching (red on red and blue on blue) and $FGW$ discriminates the two structured objects.

\tikzstyle{vertex}=[circle,fill=black,minimum size=4pt,inner sep=0pt]
\tikzstyle{edge} = [draw]
\tikzstyle{transp} = [draw, dashed]
\tikzstyle{feuille1}=[rectangle,draw,fill=blue,text=blue,minimum size=4pt,inner sep=0pt]
\tikzstyle{feuille2}=[circle,draw,fill=red,text=blue,minimum size=4pt,inner sep=0pt]
\tikzstyle{entour}=[ellipse,draw,text=blue]

\begin{figure}[t]
\centering
\begin{tikzpicture}[scale=0.3, auto,swap]
    \foreach \pos/\name in {{(0,4)/a}, {(1,2)/c}, {(1,6)/b},
                             {(2,7)/d}, {(2,5)/e}, {(2,3)/f}, {(2,1)/g}}
        		\node[vertex] (\name) at \pos {};
	    \foreach \pos/\name in {
                             {(3,7.5)/h},  {(3,6.5)/i},  {(3,3.5)/l}, {(3,2.5)/m}}
        		\node[feuille1] (\name) at \pos {};
      \foreach \pos/\name in {
                             {(3,5.5)/j},  {(3,4.5)/k},  {(3,0.5)/n}, {(3,1.5)/o}}
       		\node[feuille2] (\name) at \pos {};
                     \foreach \source/ \dest in {b/a, c/a, d/b,e/b, f/c, g/c, d/h, d/i, e/j, e/k, f/l, f/m, g/n, g/o}
        \path[edge] (\source) -- (\dest);

   \draw[black] (4,0) grid[step=2](12,8);
  \draw[black, thick] (4,0) grid[step=4](12,8);

\draw[fill=black!40] (8,7) rectangle(9,8);
\draw[fill=black!40] (4,6) rectangle(5,7);
\draw[fill=black!40] (11,5) rectangle(12,6);
\draw[fill=black!40] (7,4) rectangle(8,5);
\draw[fill=black!40] (10,3) rectangle(11,4);
\draw[fill=black!40] (6,2) rectangle(7,3);
\draw[fill=black!40] (9,1) rectangle(10,2);
\draw[fill=black!40] (5,0) rectangle(6,1);

    \foreach \pos/\name in {{(8,12)/a}, {(6,11)/b}, {(10,11)/c},
                             {(5,10)/d}, {(7,10)/e}, {(9,10)/f}, {(11,10)/g}}
        		\node[vertex] (\name) at \pos {};
	    \foreach \pos/\name in {
                             {(4.5,9)/h},  {(6.5,9)/j},  {(8.5,9)/l}, {(10.5,9)/n}}
        		\node[feuille1] (\name) at \pos {};
      \foreach \pos/\name in {
                             {(5.5,9)/i},  {(7.5,9)/k},  {(9.5,9)/m}, {(11.5,9)/o}}
       		\node[feuille2] (\name) at \pos {};
    \foreach \source/ \dest in {b/a, c/a, d/b,e/b, f/c, g/c, d/h, d/i, e/j, e/k, f/l, f/m, g/n, g/o}
        \path[edge] (\source) -- (\dest);

\draw  (8,-1) node {$d_{W}=0$};

   \draw[black] (12,0) grid[step=2, ,xshift=1cm](20,8);
  \draw[step=4,black, thick, xshift=1cm] (12,0) grid (20,8);
\draw  (17,-1) node {$d_{FGW}>0$};

    \foreach \pos/\name in {{(8+9,12)/a}, {(6+9,11)/b}, {(10+9,11)/c},
                             {(5+9,10)/d}, {(7+9,10)/e}, {(9+9,10)/f}, {(11+9,10)/g}}
        		\node[vertex] (\name) at \pos {};
	    \foreach \pos/\name in {
                             {(4.5+9,9)/h},  {(6.5+9,9)/j},  {(8.5+9,9)/l}, {(10.5+9,9)/n}}
        		\node[feuille1] (\name) at \pos {};
      \foreach \pos/\name in {
                             {(5.5+9,9)/i},  {(7.5+9,9)/k},  {(9.5+9,9)/m}, {(11.5+9,9)/o}}
       		\node[feuille2] (\name) at \pos {};
    \foreach \source/ \dest in {b/a, c/a, d/b,e/b, f/c, g/c, d/h, d/i, e/j, e/k, f/l, f/m, g/n, g/o}
        \path[edge] (\source) -- (\dest);

   \draw[black] (24,0) grid[step=2, ,xshift=-2cm](32,8);
  \draw[step=4,black, thick, xshift=-2cm] (24,0) grid (32,8);

\draw[fill=black!40] (17,7) rectangle(18,8);
\draw[fill=black!40] (19,6) rectangle(20,7);
\draw[fill=black!40] (18,5) rectangle(19,6);
\draw[fill=black!40] (20,4) rectangle(21,5);
\draw[fill=black!40] (15,3) rectangle(16,4);
\draw[fill=black!40] (13,2) rectangle(14,3);
\draw[fill=black!40] (14,1) rectangle(15,2);
\draw[fill=black!40] (16,0) rectangle(17,1);

    \foreach \pos/\name in {{(8+18,12)/a}, {(6+18,11)/b}, {(10+18,11)/c},
                             {(5+18,10)/d}, {(7+18,10)/e}, {(9+18,10)/f}, {(11+18,10)/g}}
        		\node[vertex] (\name) at \pos {};
	    \foreach \pos/\name in {
                             {(4.5+18,9)/h},  {(6.5+18,9)/j},  {(8.5+18,9)/l}, {(10.5+18,9)/n}}
        		\node[feuille1] (\name) at \pos {};
      \foreach \pos/\name in {
                             {(5.5+18,9)/i},  {(7.5+18,9)/k},  {(9.5+18,9)/m}, {(11.5+18,9)/o}}
       		\node[feuille2] (\name) at \pos {};
    \foreach \source/ \dest in {b/a, c/a, d/b,e/b, f/c, g/c, d/h, d/i, e/j, e/k, f/l, f/m, g/n, g/o}
        \path[edge] (\source) -- (\dest);
\draw  (26,-1) node {$d_{GW}=0$};

\draw[fill=black!40] (24,7) rectangle(25,8);
\draw[fill=black!40] (25,6) rectangle(26,7);
\draw[fill=black!40] (22,5) rectangle(23,6);
\draw[fill=black!40] (23,4) rectangle(24,5);
\draw[fill=black!40] (27,3) rectangle(28,4);
\draw[fill=black!40] (26,2) rectangle(27,3);
\draw[fill=black!40] (28,1) rectangle(29,2);
\draw[fill=black!40] (29,0) rectangle(30,1);

\end{tikzpicture}

\caption{Difference on transportation maps between $FGW$, $GW$ and $W$ distances on synthetic trees. On the left the $W$ distance between the features is nul since feature information are the same, on the middle the $FGW$ is different from zero and discriminate the two structured objects and on the right the $GW$ between the two isometric structures is nul. \label{mapstoy}}
\end{figure}
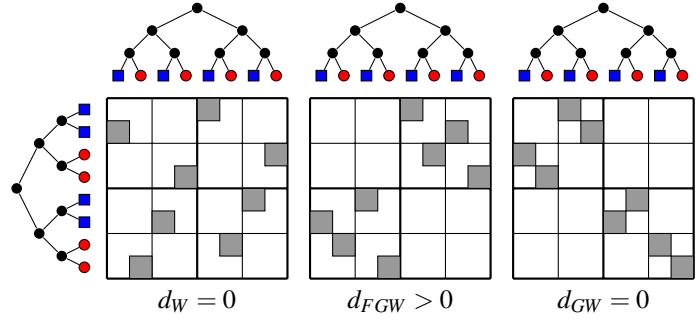

\section{Mathematical properties of $FGW$}

In this section, we establish some mathematical properties of the $FGW$ distance. The first result relates to the existence of the $FGW$ distance and to the topology of the space of structured objects. We then prove that the $FGW$ distance is indeed a distance regarding the equivalence relation between structured objects as defined in Defintion \ref{eqstructobjects}, allowing us to derive a topology on $H(\Omega)$.

\subsection{Topology of $(H(\Omega),d^{\Omega}_{FGW,\alpha,p,q})$}
\label{Topology}

The $FGW$ distance has the following properties:

\begin{theorem}{Metric properties}
\label{metrictheo}

\noindent Let $p,q\geq 1$ and $\alpha \in ]0,1[$, $\pi \rightarrow \lossfgw(\pi)$ always achieves a infimum $\pi^{*}$ in $\Pi(\mu,\nu)$ \textit{s.t} $d^{\Omega}_{FGW,\alpha,p,q}(\mu,\nu)=\lossfgw(\pi^{*})$ and:
\begin{itemize}
\item[$\bullet$] for all $q \geq 1$, $d^{\Omega}_{FGW,\alpha,p,q}(\mu,\nu)=0$ \emph{iff} there exists $ f=(f_{1},f_{2}): X\times A \rightarrow Y \times B$ with $f(x,a)=(f_{1}(x),f_{2}(a))$ such that:
\begin{equation}
\label{metrictheo:conservation}
f\#\mu=\nu
\end{equation}
\begin{equation}
\label{metrictheo:isometry}
f_{1}:X \rightarrow Y \text{ is an isometry}
\end{equation}
\begin{equation}
\label{metrictheo:conservationfeature}
f_{2}: A \rightarrow B \text{ is an identity map } (A=B).
\end{equation}

\item[$\bullet$] $d^{\Omega}_{FGW,\alpha,p,q}$ is symmetric and, for $q=1$, satisfies the triangle inequality. For $q\geq 2$, the triangular inequality is relaxed by a factor $2^{q-1}$ .
\end{itemize}
Proof of this theorem can be found in Section~\ref{allproofs}.
\end{theorem}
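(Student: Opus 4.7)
The plan is to apply the direct method in the calculus of variations: since the base spaces are compact, $\Pi(\mu,\nu)$ is tight and weakly closed, hence weakly compact by Prokhorov's theorem, and the integrand of $\lossfgw$ is a bounded continuous function on a compact space, making $\pi\mapsto\lossfgw(\pi)$ weakly continuous on product couplings. The infimum is therefore attained. Symmetry is immediate by pushing forward through the factor-swap $\tau(z,w)=(w,z)$, under which both $d(a,b)$ and $L(x,y,x',y')$ are invariant, giving a cost-preserving bijection $\Pi(\mu,\nu)\to\Pi(\nu,\mu)$.

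\textbf{Characterisation of the vanishing.} The easy direction takes the map $f=(f_{1},f_{2})$ given by the hypotheses \eqref{metrictheo:conservation}--\eqref{metrictheo:conservationfeature} and exhibits $\pi=(\mathrm{id}\times f)\#\mu\in\Pi(\mu,\nu)$; on its support the feature cost $d(a,b)$ vanishes because $f_{2}=\mathrm{Id}$, and the structural cost $L(x,y,x',y')$ vanishes because $f_{1}$ is an isometry, so $\lossfgw(\pi)=0$. For the converse, I would start from an optimal coupling $\pi^{*}$ of zero cost. Since $\alpha\in\,]0,1[$ and both summands in the integrand are nonnegative, both $d(a,b)=0$ and $L(x,y,x',y')=0$ must hold $\pi^{*}\otimes\pi^{*}$-almost everywhere, and by continuity of the metrics these identities extend to every pair of points in $\text{supp}(\pi^{*})$. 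The feature identity yields $a=b$ throughout $\text{supp}(\pi^{*})$ and, invoking full support, $A=B$, so we set $f_{2}=\mathrm{Id}_{A}$. The structural identity applied with the first coordinate fixed forces $y$ to depend only on $x$, which defines $f_{1}$ on $\mathrm{proj}_{X}\text{supp}(\pi^{*})$; this projection equals $\text{supp}(\mu_{X})=X$ by full support and compactness, so $f_{1}$ is globally defined on $X$, is an isometry by the structural identity, and is surjective by running the symmetric argument on $\nu$. Finally, $\pi^{*}$ is concentrated on the graph of $f=(f_{1},f_{2})$, whence $\pi^{*}=(\mathrm{id}\times f)\#\mu$ and reading off the second marginal gives $f\#\mu=\nu$.

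\textbf{Triangle inequality.} The plan here is the classical gluing-lemma argument. Given optimal couplings $\pi_{12}\in\Pi(\mu_{1},\mu_{2})$ and $\pi_{23}\in\Pi(\mu_{2},\mu_{3})$, the Polish gluing lemma produces a probability measure $\gamma$ on $\prod_{k=1}^{3}(X_{k}\times A_{k})$ whose $(1,2)$- and $(2,3)$-marginals are $\pi_{12}$ and $\pi_{23}$, and its $(1,3)$-marginal $\pi_{13}$ lies in $\Pi(\mu_{1},\mu_{3})$. Writing $d_{ij}=d(a_{i},a_{j})$ and $L_{ij}=L(x_{i},x_{j},x'_{i},x'_{j})$, the triangle inequality for $d$ and the reverse triangle inequality for each $d_{X_{k}}$ give the pointwise bounds $d_{13}\le d_{12}+d_{23}$ and $L_{13}\le L_{12}+L_{23}$. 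Combining these with the elementary convexity inequality $(u+v)^{q}\le 2^{q-1}(u^{q}+v^{q})$ produces $h_{13}\le 2^{q-1}(h_{12}+h_{23})$ pointwise, where $h_{ij}=(1-\alpha)d_{ij}^{q}+\alpha L_{ij}^{q}$ is the integrand indexed by $(i,j)$. Applying Minkowski's inequality in $L^{p}(\gamma\otimes\gamma)$, and noticing that $h_{ij}$ depends only on the coordinates indexed by $(i,j)$ so that $\|h_{ij}\|_{L^{p}(\gamma^{\otimes 2})}=\|h_{ij}\|_{L^{p}(\pi_{ij}^{\otimes 2})}$, one obtains
\begin{equation*}
d^{\Omega}_{FGW,\alpha,p,q}(\mu_{1},\mu_{3})\;\le\;2^{q-1}\bigl(d^{\Omega}_{FGW,\alpha,p,q}(\mu_{1},\mu_{2})+d^{\Omega}_{FGW,\alpha,p,q}(\mu_{2},\mu_{3})\bigr),
\end{equation*}
and the constant collapses to $1$ when $q=1$.

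\textbf{Main obstacle.} The most delicate step is the converse of the vanishing characterisation, specifically showing that $f_{1}$ is a \emph{globally defined} isometry of $X$ onto $Y$: one must upgrade the $\pi^{*}\otimes\pi^{*}$-almost sure identities to equalities on the entire closed support using continuity of the metrics, exploit the fully-supported hypothesis to exhaust $X$ by the projection of $\text{supp}(\pi^{*})$, and crucially use $\alpha\in\,]0,1[$ to decouple the structure and feature vanishing so that the single-valued fibre argument applies; the surjectivity and measure-preservation claims then follow symmetrically.
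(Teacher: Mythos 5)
Your proposal is correct, and two of its three parts coincide with the paper's argument: the existence proof (weak compactness of $\Pi(\mu,\nu)$ plus continuity of the quadratic cost functional — the paper merely routes this through a Lipschitz-integrand lemma of M\'emoli rather than invoking bounded continuity on a compact space directly) and the triangle inequality (gluing lemma, the pointwise bounds $d_{13}\le d_{12}+d_{23}$ and $L_{13}\le L_{12}+L_{23}$, the inequality $(u+v)^{q}\le 2^{q-1}(u^{q}+v^{q})$, then Minkowski in $L^{p}$) are exactly the paper's. Where you genuinely diverge is the converse of the vanishing characterisation. The paper does not argue on the support of $\pi^{*}$: it equips $X\times A$ and $Y\times B$ with the averaged metrics $\tfrac12 d_{X}+\tfrac12 d$ and $\tfrac12 d_{Y}+\tfrac12 d$, shows — via an inequality relating the two costs and a binomial expansion proving that the cross term involving $d(a,b)$ vanishes — that the Gromov--Wasserstein distance between these two mm-spaces is zero, invokes the known GW metric theorem to obtain a measure-preserving isometry $f$ for the sum metric, and only afterwards disentangles $f_{2}=\mathrm{Id}$ (from the vanishing Wasserstein part and full support) and the isometry property of $f_{1}$ (by cancellation in the sum-metric identity). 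Your route — using $\alpha\in\,]0,1[$ to force both nonnegative summands to vanish, upgrading the $\pi^{*}\otimes\pi^{*}$-a.e.\ identities to the closed support, extracting $f_{1}$ from the single-valued fibres ($d_{Y}(y_{1},y_{2})=d_{X}(x,x)=0$) and $f_{2}=\mathrm{Id}$ from $a=b$, and using compactness to identify the projection of the support with $X$ — is more elementary and self-contained, avoids the GW characterisation as a black box, and delivers the product form $f=(f_{1},f_{2})$ directly; the paper's reduction buys reuse of an existing theorem at the cost of extra bookkeeping to verify its hypotheses and to post-process the resulting sum-metric isometry. Both arguments are valid.
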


The previous theorem states that $FGW$ is a distance over $H(\Omega)$ quotiented by the measure preserving maps that are feature and structure preserving (through an isometry). Informally, invariants of the $FGW$ are objects that have both the same structure and the same features "in the same place". In other words, the $FGW$ distance vanishes \textit{iff} the structured objects are equivalent with respect to the equivalence relation $\sim$ defined in Definition \ref{eqstructobjects}.

Theorem~\ref{metrictheo} allows a wide set of applications for $FGW$ such as $k$-nearest-neighbors, distance-substitution kernels, pseudo-Euclidean embeddings, or representative-set methods \cite{distsubstitutionnkernel,BorgGroenen2005,2017arXiv170306476B}. Arguably, such a distance allows for a better interpretation than to end-to-end learning machines such as neural networks because the $\pi$ matrix exhibits the relationships between the elements of the objects.

The metric property naturally endows the structured object space with a notion of convergence as described in the next definition:

\begin{definition}{Convergence of structured objects.}

\noindent Let $(\mathbb{S}_{n})_{n \in \mathbb{N}}=\big((X_{n} \times A_{n},d_{X_{n}}, \mu_{n})\big)_{n \in \mathbb{N}}$ be a sequence of structured objects. $(\mathbb{S}_{n})_{n \in \mathbb{N}}$ is said to converge to $\mathbb{S}=(X \times A, d_{X},\mu)$ in the Fused Gromov-Wasserstein sense (denoted as $\mathbb{S}_{n} \underset{n \to +\infty}{\overset{FGW}{\longrightarrow}} \mathbb{S}$) if

$$\lim\limits_{n \rightarrow \infty}d^{\Omega}_{FGW,\alpha,p,1}(\mu_{n},\mu)=0$$

\end{definition}

Using Prop. \ref{fgwcomparetogwandw}, it is straightforward to see that, if $(\mathbb{S}_{n})_{n \in \mathbb{N}}$ converges to $\mathbb{S}$ in $FGW$ sense,  both the features and the structure of $(\mathbb{S}_{n})_{n \in \mathbb{N}}$ converge respectively in Wasserstein and Gromov-Wasserstein sense (see \cite{Memoli:2004:CPC:1057432.1057436} for the definition of convergence in the Gromov-Wasserstein sense).

An interesting question arises from this definition. Let us consider a structured object $\mathbb{S}=(X \times A,d_{X}, \mu )$ and let us sample the joint distribution so as to consider $(\mathbb{S}_{n})_{n \in \mathbb{N}}=\{(x_{i},a_{i})\}_{i \in \{1,..,n\} }, d_{X}, \mu_{n} )_{n \in \mathbb{N}}$ with $\mu_{n}=\frac{1}{n}\sum\limits_{i=1}^{n} \delta_{x_{i},a_{i}}$ where $(x_{i},a_{i}) \in X_{n}\times A_{n}$ are sampled from $\mu$. Does $(\mathbb{S}_{n})_{n \in \mathbb{N}}$ converge to $\mathbb{S}$ in the $FGW$ sense and how fast is the convergence?

To answer this question, we will use the theory developped in \cite{weedbach2017}. We recall the following definitions:

\begin{definition}{Upper Wasserstein dimension.}

\noindent Let $S$ be a subset of some polish metric space $\Omega$. The $\epsilon$-covering of $S$, denoted $N_{\epsilon}(S)$, is the minimum integer $m$ such that there exists closed balls $B_{1},...,B_{m}$ of diameter $\epsilon$ which cover $S$. More precisely, the balls verify $ S \subset \cup_{i=1}^{m} B_{i} $. The $\epsilon$-dimension of $S$ is defined by:
\begin{equation}
\label{epsdim}
\text{dim}_{\epsilon}(S)=\frac{\log(N_{\epsilon}(S))}{-\log(\epsilon)}.
\end{equation}
Given a measure $\mu$ on $\Omega$, we consider its ($\epsilon$-$\tau$) covering as the number $$N_{\epsilon,\tau}(\mu)=\text{inf}\{N_{\epsilon}(S) \ \text{\textit{s.t}} \ \mu(S) \geq 1- \tau \}$$ which represents the smallest $\epsilon$-covering of sufficiently ``large'' subsets (with respect to $\mu$).
The ($\epsilon$-$\tau$) dimension of $\mu$ is then defined as:
\begin{equation}
\label{epstaudim}
\text{dim}_{\epsilon}(\mu,\tau)=\frac{\log(N_{\epsilon,\tau}(\mu))}{-\log(\epsilon)}.
\end{equation}
The \emph{upper Wasserstein dimension} is defined by:
\begin{equation}
\label{upperwassdim}
d_{p}^{*}(\mu)=\text{inf}\{s \in [2p,\infty[ \ \text{\textit{s.t.}} \ \underset{\epsilon \rightarrow \infty}{\text{limsup}} \ \text{dim}_{\epsilon}(\mu,\epsilon^{\frac{sp}{s-2p}}) \leq s \}
\end{equation}
\end{definition}

This notion of dimension exists due to the monotonicity of $\text{dim}_{\epsilon}(\mu,\tau)$ and coincides with the intuitive notion of ``dimension''  when the measure is sufficiently well behaved. For example, for any absolutely continuous measure $\mu$ with respect to the Lebesgue measure on $[0,1]^{d}$, we have $d_{p}^{*}(\mu)=d$ for any $p \in [1,\frac{d}{2}]$. For more general cases see Prop.7 in \cite{weedbach2017}.

Using these definitions and the results in \cite{weedbach2017},  we answer the question of convergence of finite sample in the following proposition (proof can be found in Section~\ref{allproofs}) :

\begin{proposition}{Convergence of finite samples and a concentration inequality}
\label{concentration}

\noindent Let $p \geq 1$. We have:
$$\lim\limits_{n \rightarrow \infty}d^{\Omega}_{FGW,\alpha,p,1}(\mu_{n},\mu)=0.$$
Moreover, suppose that $s > d_{p}^{*}(\mu)$. Then:
\begin{equation}
\label{epstaudim}
\mathbb{E}[d^{\Omega}_{FGW,\alpha,p,1}(\mu_{n},\mu)] \underset{\sim}{<} n^{\frac{-1}{s}}.
\end{equation}
\end{proposition}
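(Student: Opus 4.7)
The plan is to reduce both claims to analogous statements about the Wasserstein distance on the product space $X\times A$, which then follow from the empirical-measure theory developed by Weed--Bach. The key enabling observation is that $\mu_n$ is supported on a finite subset of $X$ while $\mu$ lives on $X$, both equipped with the same metric $d_X$, so the two structured objects share a common ground structure space. Consequently, inequality \eqref{samespace} of Proposition \ref{fgwcomparetogwandw} applies with $Z=X$, and yields
\begin{equation*}
d^{\Omega}_{FGW,\alpha,p,1}(\mu_n,\mu)^{p} \;\leq\; 2\, d^{X\times\Omega}_{W,p}(\mu_n,\mu)^{p},
\end{equation*}
where the Wasserstein distance on the right is taken with respect to the metric $\tilde d((x,a),(y,b))=(1-\alpha)\,d(a,b)+\alpha\,d_{X}(x,y)$ on $X\times A$. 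Taking $p$-th roots and expectations gives
\begin{equation*}
\mathbb{E}\bigl[d^{\Omega}_{FGW,\alpha,p,1}(\mu_n,\mu)\bigr] \;\leq\; 2^{1/p}\,\mathbb{E}\bigl[d^{X\times\Omega}_{W,p}(\mu_n,\mu)\bigr],
\end{equation*}
so it suffices to control the empirical Wasserstein distance on the product space.

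For the first assertion, I would invoke the classical convergence of empirical measures in the Wasserstein metric on a compact Polish space: since $(X\times A,\tilde d)$ is compact and $\mu$ has bounded support (hence all $p$-moments), a standard Glivenko--Cantelli-type result for $W_{p}$ (e.g.\ Villani) gives $d^{X\times\Omega}_{W,p}(\mu_n,\mu)\to 0$ almost surely, and therefore in expectation by dominated convergence using the boundedness of $\tilde d$. Combined with the displayed inequality, this gives $d^{\Omega}_{FGW,\alpha,p,1}(\mu_n,\mu)\to 0$.

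For the rate, I would apply Theorem~1 of \cite{weedbach2017}, which states that for every $s>d_{p}^{*}(\mu)$ one has $\mathbb{E}[W_{p}(\mu_n,\mu)]\lesssim n^{-1/s}$. Plugging this into the Wasserstein-to-$FGW$ reduction above delivers the advertised bound. The single point that requires care is to verify that the upper Wasserstein dimension $d_{p}^{*}(\mu)$ appearing in the statement is the one attached to $\mu$ viewed on the product space $X\times A$ equipped with $\tilde d$ (or equivalently $d_X\oplus d$, since the two metrics are bi-Lipschitz equivalent and hence have identical covering dimensions up to constants that are absorbed in the asymptotic $\lesssim$). Once this is observed, the proposition is essentially a corollary of Weed--Bach's theorem; the main effort is really only the clean assembly of the reduction, since \eqref{samespace} already does the heavy lifting of converting a $GW$-type quantity into a transport cost on a common ambient space.
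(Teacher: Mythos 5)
Your proof is correct and follows essentially the same route as the paper: the concentration bound is obtained exactly as in the paper by applying inequality \eqref{samespace} with the common ground space $Z=X$ and then invoking Theorem~1 of \cite{weedbach2017}. The only (harmless) difference is in the first claim, where the paper deduces convergence from weak convergence of the empirical measure together with Lemma~\ref{minimizer}, whereas you also route it through \eqref{samespace} and classical empirical Wasserstein convergence; your added remark that $d_{p}^{*}(\mu)$ must be read on the product space $(X\times A,\tilde d)$ (equivalently $d_{X}\oplus d$, up to bi-Lipschitz constants that do not affect the dimension) is a point the paper leaves implicit.
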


A particular case of this inequality is when $\alpha=1$ so that we can use the result above to derive a concentration result for the Gromov-Wassersten distance. More precisely, if $\nu_{n}=\frac{1}{n} \sum_{i} \delta_{x_{i}}$ denotes the empirical measure of $\nu \in \overline{P}(X)$ and if $s' > d_{p}^{*}(\nu)$ we have:
\begin{equation}
\label{concentrationgromov}
\mathbb{E}[d^{\Omega}_{GW,p}(\nu_{n},\nu)] \underset{\sim}{<} n^{\frac{-1}{s'}}.
\end{equation}
To the best of our knowledge, this is the first result about concentration for the Gromov-Wasserstein distance. In contrast to the Wasserstein distance case, it is not necessary sharp but it proves that considering the $GW$ and $FGW$ distances by sampling a continuous distribution makes sense as the finite samples concentrate around the expectation.

\subsection{Interpolation properties between Wasserstein and Gromov-Wasserstein distances}

In this section, we prove that the $FGW$ distance is a generalization of both Wasserstein and Gromov-Wasserstein distances in the sense that it achieves an interpolation between them. More precisely, we have the following theorem:

\begin{theorem}{Interpolation properties.}
\label{interpolationtheorem}

\noindent As $\alpha$ tends to zero, one recovers the Wasserstein distance between the feature information and as $\alpha$ goes to one, one recovers the Gromov-Wasserstein distance between the structure information :

Let  $\big((X,d_{X}), A,\mu \in \overline{P}(X \times A) \big),\big((Y,d_{Y}), B,\nu \in \overline{P}(Y \times B) \big) \in H(\Omega)^{2}$
\begin{equation}
\lim\limits_{\alpha \rightarrow 0}d^{\Omega}_{FGW,\alpha,p,q}(d_{X},d_{Y},\mu,\nu)=(d^{\Omega}_{W,qp}(\mu_{A},\nu_{B}))^{q}
\end{equation}

\begin{equation}
\lim\limits_{\alpha \rightarrow 1}d^{\Omega}_{FGW,\alpha,p,q}(d_{X},d_{Y},\mu,\nu)=(d^{\Omega}_{GW,qp}(\mu_{X},\nu_{Y}))^{q}
\end{equation}
Proof of this theorem can be found in Section~\ref{allproofs}.
\end{theorem}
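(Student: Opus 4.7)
The plan is to reduce both limit statements to the convergence of the infima $\alpha \mapsto \inf_{\pi \in \Pi(\mu,\nu)} E^{\Omega}_{p,q,\alpha}(\pi)$ at the endpoints $\alpha = 0$ and $\alpha = 1$, and then to identify those infima with the Wasserstein and Gromov–Wasserstein costs via a marginal-projection argument. Since $x \mapsto x^{1/p}$ is continuous on $[0,\infty)$, it suffices to show $\inf_{\pi} E^{\Omega}_{p,q,\alpha}(\pi) \to \inf_{\pi} E^{\Omega}_{p,q,0}(\pi)$ as $\alpha \to 0$ and the analogous statement as $\alpha \to 1$; then the $p$-th roots line up because $((d^{\Omega}_{W,pq})^{pq})^{1/p} = (d^{\Omega}_{W,pq})^q$ and similarly for $GW$.

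The first ingredient is a uniform estimate in $\pi$. Compactness of $(X,d_X)$, $(Y,d_Y)$, $A$ and $B$ yields a constant $M < \infty$ bounding $d$, $d_X$, $d_Y$, and hence $L$. Using the elementary inequality $|u^p - v^p| \leq p \max(u,v)^{p-1} |u - v|$ for $u,v \geq 0$ (valid since $p \geq 1$) together with $0 \leq (1-\alpha)d(a,b)^q + \alpha L(x,y,x',y')^q \leq M^q$, one gets the pointwise bound
\[
\bigl|\bigl((1-\alpha)d(a,b)^q + \alpha L^q\bigr)^p - d(a,b)^{pq}\bigr| \leq p\, M^{q(p-1)}\, \alpha\, |L^q - d^q| \leq p\, M^{pq}\, \alpha,
\]
and integrating against the probability measure $\pi \otimes \pi$ gives $\sup_{\pi \in \Pi(\mu,\nu)} |E^{\Omega}_{p,q,\alpha}(\pi) - E^{\Omega}_{p,q,0}(\pi)| \leq p M^{pq} \alpha$. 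Passing to the infimum, $|\inf_{\pi} E^{\Omega}_{p,q,\alpha}(\pi) - \inf_{\pi} E^{\Omega}_{p,q,0}(\pi)| \to 0$ as $\alpha \to 0$. Swapping the roles of $d^q$ and $L^q$ in the same computation handles the limit $\alpha \to 1$.

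The second ingredient identifies the endpoint infima with the target distances. At $\alpha = 0$ the integrand depends only on $(a,b)$, so Fubini collapses the double integral to
\[
E^{\Omega}_{p,q,0}(\pi) = \int_{A \times B} d(a,b)^{pq}\, d(P_{2,4}\#\pi)(a,b).
\]
The projected measure $P_{2,4}\#\pi$ automatically lies in $\Pi(\mu_A,\nu_B)$, and conversely every $\pi_0 \in \Pi(\mu_A,\nu_B)$ admits a lift: disintegrating $\mu$ and $\nu$ along their feature marginals as $\mu(dx,da) = \mu_A(da)\mu^a(dx)$ and $\nu(dy,db) = \nu_B(db)\nu^b(dy)$ (legitimate because all spaces are Polish and the measures are Borel), and setting $\pi(dx\,da\,dy\,db) = \pi_0(da,db)\, \mu^a(dx)\, \nu^b(dy)$ yields $\pi \in \Pi(\mu,\nu)$ whose $(a,b)$-marginal is $\pi_0$. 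Thus $\inf_{\pi} E^{\Omega}_{p,q,0}(\pi) = \inf_{\pi_0 \in \Pi(\mu_A,\nu_B)} \int d(a,b)^{pq}\, d\pi_0 = d^{\Omega}_{W,pq}(\mu_A,\nu_B)^{pq}$. The same projection-and-lift argument applied to $P_{1,3}\#\pi$ gives $\inf_{\pi} E^{\Omega}_{p,q,1}(\pi) = d_{GW,pq}(\mu_X,\nu_Y)^{pq}$.

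Combining these three steps and taking $p$-th roots yields both limits claimed in the theorem. The only non-routine step is the surjectivity of $\pi \mapsto P_{2,4}\#\pi$ (and its analogue at $\alpha = 1$); without it one would only obtain inequalities in the endpoint identification. This is where the main care is needed and is resolved cleanly by the disintegration construction above, which relies on the Polish-compact setting assumed throughout the paper. The rest is a uniform continuity argument that exploits the convexity of the cost in $\alpha$ and the boundedness of all underlying distances.
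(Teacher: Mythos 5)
Your proof is correct, but it takes a different route from the paper's. The paper sandwiches $d^{\Omega}_{FGW,\alpha,p,q}$ between the lower bound $(1-\alpha)\,(d^{\Omega}_{W,pq}(\mu_A,\nu_B))^{q}$ from Proposition \ref{fgwcomparetogwandw} and an upper bound obtained by evaluating the cost at one specific coupling $\rho$ lifted from the optimal Wasserstein plan via the gluing lemma, then expands $\bigl((1-\alpha)d^q+\alpha L^q\bigr)^p$ binomially and observes that every cross term carries a factor $\alpha^{p-k}$ with $k\le p-1$ and so vanishes as $\alpha\to 0$ (and symmetrically at $\alpha\to 1$). You instead prove uniform convergence of the functionals $E^{\Omega}_{p,q,\alpha}\to E^{\Omega}_{p,q,0}$ over all of $\Pi(\mu,\nu)$ via the mean-value estimate $|u^p-v^p|\le p\max(u,v)^{p-1}|u-v|$, exchange limit and infimum, and then identify the endpoint infimum exactly by showing the projection $\pi\mapsto P_{2,4}\#\pi$ is onto $\Pi(\mu_A,\nu_B)$ (your disintegration lift is essentially the same object as the paper's glued coupling, so that ingredient is shared). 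Your route buys two things: it yields an explicit $O(\alpha)$ rate, and it is valid for arbitrary real $p\ge 1$, whereas the paper's binomial expansion with $\binom{p}{k}$ literally only makes sense for integer $p$ even though the theorem is stated for all $p\ge 1$. One cosmetic remark: your closing sentence attributes the uniform continuity to ``convexity of the cost in $\alpha$,'' but what you actually use is affineness of the inner expression in $\alpha$ together with boundedness; this does not affect the validity of the argument.
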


This result shows that  $FGW$ can revert to one of the other distances. In machine learning, this allows for a validation of the  $\alpha$ parameter to better fit the data properties (\textit{i.e.} by tuning the relative importance of the feature \textit{vs} structure information). One can also see the choice of $\alpha$ as a representation learning problem and its value can be found by optimizing a given criterion.

\subsection{Geodesic properties}

In this section we present some geodesic properties about the $FGW$ distance. These properties are useful in order to define dynamic formulation of OT problems. This dynamic point of view is inspired by fluid dynamics and found its origin in the Wasserstein context with \cite{Benamou2000}. Various applications in machine learning can be derived from this formulation: interpolation along geodesic paths was used in computer graphics for color
or illumination interpolations \cite{Bonneel:2011:DIU:2024156.2024192}; more recently, \cite{2018arXiv180509545C} used Wasserstein gradient flows in an optimization context, deriving global minima results for non-convex particle gradient descent paving the way for new methods for training neural networks; \cite{pmlr-v80-zhang18a} used Wasserstein gradient flows in the context of reinforcement learning for policy optimization.

The main idea of this dynamic formulation is to describe the optimal transport problem between two measures as a curve in the space of measures minimizing its total length. We first describe some generality about geodesic spaces and recall classical results for dynamic formulation in both Wasserstein and Gromov-Wasserstein contexts. In a second part, we derive new geodesic properties in the $FGW$ context.

\paragraph{Generality about geodesic spaces}

Let $(X,d)$ be a metric space and $x,y$ two points in $X$. We say that a curve $w:[0,1] \rightarrow X$ joining the \textit{endpoints} $x$ and $y$ (\textit{i.e.} with $w(0)=x$ and $w(1)=y$) is a \textit{constant speed geodesic} if it satisfies $d(w(t),w(s)) \leq |t-s| d(w(0),w(1))=|t-s| d(x,y)$ for $t,s \in [0,1]$. Moreover, if $(X,d)$ is a length space (\textit{i.e.} if the distance between two points of $X$ is equal to the infimum of the lengths of the curves connecting these two points) then the converse is also true and a constant speed geodesic satisfies  $d(w(t),w(s)) =|t-s| d(x,y)$. It is easy to compute distances along such curve as they are directly embedded into $\mathbb{R}$.

In the Wasserstein context, if the ground space is a complete separable, locally compact length space and if endpoints of the geodesic are given, then there exists a geodesic curve. Moreover, if the optimal transport between the endpoints is unique then there is a unique displacement interpolation between the endpoints (see Corollary 7.22 and 7.23 in \cite{Villani}). For example, if the ground space is $\mathbb{R}^{d}$ and the distance between the points is measured via the $\ell_{2}$ norm, then geodesics exist and are uniquely determined (this can be generalized to strictly convex cost).

In the Gromov-Wasserstein context, there always exists constant speed geodesics as long as the endpoints are given and these geodesics are unique modulo the isomorphism equivalence relation (see \cite{Sturm2006}).

\paragraph{The $FGW$ case}

In this paragraph we suppose that $\Omega=\mathbb{R}^{d}$.

We are interested in finding a geodesic curve in the space $\left(H(\mathbb{R}^{d}),d^{\mathbb{R}^{d}}_{FGW,\alpha,p,q}\right)$, \textit{i.e.} a constant speed curve of structured objects joining two structured objects. As for Wasserstein and Gromov-Wasserstein, the structured object space endowed with the Fused Gromov-Wasserstein distance maintains some geodesic properties. The following result proves the existence of such a geodesic and characterizes it:

\begin{theorem}{Constant speed geodesic.}
\label{cstespeedtheo}

\noindent Let $p,q \geq 1$ and $( X_{0} \times A_{0} ,d_{X_{0}},\mu_{0})$ and $(X_{1} \times A_{1} ,d_{X_{1}},\mu_{1}) \in H(\mathbb{R}^{d})$. Let $\pi^{*}$ be the optimal coupling for the Fused Gromov-Wasserstein distance between those two sets and $t \in [0,1]$. We equip $\mathbb{R}^{d}$ with any $\ell_{m}$ norm for all $m \geq 1$.

We define $\eta_{t} : X_{0} \times A_{0} \times X_{1} \times A_{1} \rightarrow X_{0} \times X_{1} \times \hat{A_{t}}$ with $\hat{A_{t}}\stackrel{def}{=}\{(1-t)a_{0}+ta_{1}|\forall a_0\in A_0,\forall a_1\in A_1\}$ such that
$$ \eta_{t}(x_{0},a_{0},x_{1},a_{1})=(x_{0},x_{1},(1-t)a_{0}+ta_{1}),\quad \forall x_{0},a_{0},x_{1},a_{1} \in X_{0} \times A_{0} \times X_{1} \times A_{1}$$
Then
\begin{equation}
\label{geodesic}
\left((X_{0}\times X_{1},(1-t)d_{X_{0}} \oplus t d_{X_{1}}), \hat{A_{t}},\mu_{t}=\eta_{t} \# \pi^{*} \in \overline{P}(X_{0}\times X_{1} \times \hat{A_{t}})\right)_{t \in [0,1]}
\end{equation}
is a constant speed geodesic in $\left(H(\mathbb{R}^{d}), d^{\mathbb{R}^{d}}_{FGW,\alpha,p,q}\right)$.

\end{theorem}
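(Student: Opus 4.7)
The plan is to adapt the Wasserstein displacement-interpolation construction to the FGW setting. For any $s,t\in[0,1]$ I will exhibit an explicit admissible coupling $\pi^{s,t}\in\Pi(\mu_s,\mu_t)$ and show by direct calculation that $\lossfgw(\pi^{s,t})=|t-s|^{pq}\,\lossfgw(\pi^*)$; taking $p$-th roots and infima then gives the constant-speed bound. Before that, one should verify well-posedness: $X_0\times X_1$ is compact, $\hat{A_t}$ is the continuous image of the compact set $A_0\times A_1$ hence compact, and $\mu_t=\eta_t\#\pi^*$ is a Borel probability measure (fully supported on its image after the usual restriction). The convex combination $(1-t)d_{X_0}\oplus td_{X_1}$ is a genuine metric for $t\in(0,1)$; at $t=0$ (resp.\ $t=1$) the second (resp.\ first) coordinate becomes irrelevant both for the distance and for the features, so the resulting structured object is $\sim$-equivalent (in the sense of Definition \ref{eqstructobjects}) to $(X_0\times A_0,d_{X_0},\mu_0)$ (resp.\ $(X_1\times A_1,d_{X_1},\mu_1)$), identifying the endpoints of the curve with the original objects.

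For the core step I would define
\[
\eta_{s,t}(x_0,a_0,x_1,a_1)=\bigl((x_0,x_1),\,(1-s)a_0+sa_1,\,(x_0,x_1),\,(1-t)a_0+ta_1\bigr),
\]
and set $\pi^{s,t}=\eta_{s,t}\#\pi^*$. By construction its first and second marginals are exactly $\mu_s$ and $\mu_t$, so $\pi^{s,t}\in\Pi(\mu_s,\mu_t)$. Two linearity identities then drive the cost computation. By absolute homogeneity of the $\ell_m$ norm,
\[
d\bigl((1-s)a_0+sa_1,\,(1-t)a_0+ta_1\bigr)=|t-s|\,\|a_0-a_1\|_m,
\]
and because $\eta_{s,t}$ preserves the \emph{same} pair $(x_0,x_1),(x_0',x_1')$ on both sides,
\[
L=\bigl|(1-s)d_{X_0}(x_0,x_0')+sd_{X_1}(x_1,x_1')-(1-t)d_{X_0}(x_0,x_0')-td_{X_1}(x_1,x_1')\bigr|=|t-s|\,\bigl|d_{X_0}(x_0,x_0')-d_{X_1}(x_1,x_1')\bigr|.
\]
Factoring $|t-s|^q$ out of the bracket $(1-\alpha)d(a,b)^q+\alpha L^q$, raising to the $p$-th power, and pulling back the integral from $\pi^{s,t}\otimes\pi^{s,t}$ to $\pi^*\otimes\pi^*$ via the push-forward yields $\lossfgw(\pi^{s,t})=|t-s|^{pq}\,\lossfgw(\pi^*)=|t-s|^{pq}\,d^{\mathbb{R}^d}_{FGW,\alpha,p,q}(\mu_0,\mu_1)^p$.

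Taking $p$-th roots and infima gives $d^{\mathbb{R}^d}_{FGW,\alpha,p,q}(\mu_s,\mu_t)\le|t-s|^{q}\,d^{\mathbb{R}^d}_{FGW,\alpha,p,q}(\mu_0,\mu_1)\le|t-s|\,d^{\mathbb{R}^d}_{FGW,\alpha,p,q}(\mu_0,\mu_1)$ for $|t-s|\le 1$ since $q\ge 1$, which is precisely the constant-speed geodesic inequality stated in the paper. The main obstacle is to have the two linearity identities hold \emph{simultaneously} and with matching prefactors: the $|t-s|$ factor in $L$ survives only because $\eta_{s,t}$ keeps the $(x_0,x_1)$-pair aligned on both sides of the interpolation, so that the convex combinations of structure distances differ by a single scalar $(t-s)$ multiple; losing this alignment would break the factorisation. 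Everything else (measurability of $\eta_{s,t}$, push-forward marginals, compactness) is routine measure-theoretic bookkeeping.
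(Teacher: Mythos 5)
Your construction $\pi^{s,t}=\eta_{s,t}\#\pi^*$ is exactly the paper's ``diagonal'' coupling $\gamma^t_s$ (which the paper builds as $(\eta_t\times\eta_s)\#\pi^*$ restricted to the diagonal in the $(x_0,x_1)$ coordinates), and your cost computation — factoring $|t-s|^q$ out of both the feature and structure terms, then using $|t-s|^q\le|t-s|$ for $q\ge1$ — is the same argument, including the bound $d^{\mathbb{R}^d}_{FGW,\alpha,p,q}(\mu_s,\mu_t)\le|t-s|^q\,d^{\mathbb{R}^d}_{FGW,\alpha,p,q}(\mu_0,\mu_1)$ that appears as Eq.~\eqref{geodesicforanyq}. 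The proposal is correct and takes essentially the same route as the paper; your explicit remarks on the endpoint identification and on $(1-t)d_{X_0}\oplus t d_{X_1}$ degenerating to a pseudometric at $t\in\{0,1\}$ are welcome details the paper leaves implicit.
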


From the existence of a geodesic in the structured object space, one can wonder if this geodesic is unique so as to define properly the velocity field associated to the geodesic curve. Informally, if one tries to define the speed of a particle passing a point $p$ (here a structured object) at a time $t$ then the uniqueness of this particle passing through $p$ at $t$ seems mandatory. The following result proves that it is indeed the case modulo the equivalence relation of structured objects $\sim$ in the case where $\Omega=\mathbb{R}^{d}$

\begin{theorem}{Unicity of geodesic in $\left(H(\mathbb{R}^{d}),d^{\mathbb{R}^{d}}_{FGW,\alpha,1,q}\right)$.}
\label{unicitytheorem}

\noindent Let $p=1$ and $q\geq 2$. We equip $\mathbb{R}^{d}$ with the $\ell_q$ norm. Then each geodesic $ (\mathbb{S}_{t})_{t\in [0,1]}=\left((X_{t} \times A_{t},d_{X_{t}}, \mu_{t})\right)_{t \in [0,1]}$ in $H(\mathbb{R}^{d})$ is of the same form as stated in Eq. (\ref{geodesic}).

More precisely, for each geodesic $(\mathbb{S}_{t})_{t\in [0,1]} \in H(\mathbb{R}^{d})$ there exists an optimal coupling $\pi^{*} \in \overline{P}(X_{0} \times A_{0} \times X_{1}\times A_{1})$ of measures $\mu_{0}$ and $\mu_{1}$, representative as the endpoints, for the $d^{\mathbb{R}^{d}}_{FGW,\alpha,1,q}$ distance,  such that for each $t\in [0,1]$ a representative of the equivalence class $\sim$ of $\mathbb{S}_{t}$ is given by: $$\left(X_{0}\times X_{1} \times \hat{A_{t}},(1-t)d_{X_{0}} \oplus t d_{X_{1}},\eta_{t} \# \pi^{*} \right)$$ with $\eta_{t}$ and $\hat{A_{t}}$ defined in theorem \ref{cstespeedtheo}..

\end{theorem}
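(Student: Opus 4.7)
The plan is to run the construction of Theorem \ref{cstespeedtheo} in reverse: given an abstract geodesic $(\mathbb{S}_t)_{t\in[0,1]}$, I would recover an optimal coupling $\pi^\ast\in\Pi(\mu_0,\mu_1)$ of the endpoints such that each intermediate $\mathbb{S}_t$ is equivalent, via the map $\eta_t$ of Theorem \ref{cstespeedtheo}, to $(X_0\times X_1\times\hat{A_t},(1-t)d_{X_0}\oplus t d_{X_1},\eta_t\#\pi^\ast)$. The engine of the argument is a gluing of optimal couplings between consecutive structured objects along the curve, combined with a rigidity statement forced by the saturation of a chain of triangle-type inequalities whose upper bound is exactly the constant-speed-geodesic value.

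Concretely, fix $t\in(0,1)$ and pick optimal couplings $\pi_{0t}\in\Pi(\mu_0,\mu_t)$ and $\pi_{t1}\in\Pi(\mu_t,\mu_1)$ attaining $d^{\mathbb{R}^d}_{FGW,\alpha,1,q}(\mathbb{S}_0,\mathbb{S}_t)$ and $d^{\mathbb{R}^d}_{FGW,\alpha,1,q}(\mathbb{S}_t,\mathbb{S}_1)$. The classical gluing lemma over the shared marginal $\mu_t$ produces a measure $\Lambda_t\in P(X_0\times A_0\times X_t\times A_t\times X_1\times A_1)$ with those as pairwise marginals, and whose $(0,1)$-projection $\pi_{01}$ lies in $\Pi(\mu_0,\mu_1)$. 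I would then plug $\pi_{01}$ into $E^{\mathbb{R}^d}_{1,q,\alpha}$ and apply, successively, the pointwise triangle inequalities $\|a_0-a_1\|_q\le\|a_0-a_t\|_q+\|a_t-a_1\|_q$ and $|d_{X_0}(x_0,x_0')-d_{X_1}(x_1,x_1')|\le|d_{X_0}(x_0,x_0')-d_{X_t}(x_t,x_t')|+|d_{X_t}(x_t,x_t')-d_{X_1}(x_1,x_1')|$, Minkowski in $L^q(\Lambda_t)$ and $L^q(\Lambda_t\otimes\Lambda_t)$ for the feature and structure integrals, and Minkowski in the weighted $\ell_q$ space with weights $(1-\alpha,\alpha)$ to combine them. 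The resulting estimate is $E^{\mathbb{R}^d}_{1,q,\alpha}(\pi_{01})^{1/q}\le E^{\mathbb{R}^d}_{1,q,\alpha}(\pi_{0t})^{1/q}+E^{\mathbb{R}^d}_{1,q,\alpha}(\pi_{t1})^{1/q}$. Since along a constant-speed geodesic $d^{\mathbb{R}^d}_{FGW,\alpha,1,q}(\mathbb{S}_0,\mathbb{S}_t)=t^q\,d^{\mathbb{R}^d}_{FGW,\alpha,1,q}(\mathbb{S}_0,\mathbb{S}_1)$ (and the analog on $[t,1]$), the right-hand side collapses to $d^{\mathbb{R}^d}_{FGW,\alpha,1,q}(\mathbb{S}_0,\mathbb{S}_1)^{1/q}$; together with $d^{\mathbb{R}^d}_{FGW,\alpha,1,q}(\mathbb{S}_0,\mathbb{S}_1)\le E^{\mathbb{R}^d}_{1,q,\alpha}(\pi_{01})$, every inequality in the chain is an equality, and in particular $\pi_{01}$ is optimal between the endpoints.

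From these saturations I would read off, $\Lambda_t$-almost everywhere, the pointwise feature identity $\|a_0-a_1\|_q=\|a_0-a_t\|_q+\|a_t-a_1\|_q$ with ratio $\|a_0-a_t\|_q:\|a_t-a_1\|_q=t:(1-t)$, which by strict convexity of the $\ell_q$ unit ball for $q\ge 2$ pins down $a_t=(1-t)a_0+t a_1$; and the analogous triangle identity in $\mathbb{R}$ giving $d_{X_t}(x_t,x_t')=(1-t)d_{X_0}(x_0,x_0')+t d_{X_1}(x_1,x_1')$. These two almost-everywhere identifications match exactly the output of $\eta_t$: the map $(x_0,x_1,a_t)\mapsto(x_t,a_t)$ implements a measure-preserving isometry from $(X_0\times X_1\times\hat{A_t},(1-t)d_{X_0}\oplus t d_{X_1},\eta_t\#\pi_{01})$ onto $(X_t\times A_t,d_{X_t},\mu_t)$, certifying the equivalence $\mathbb{S}_t\sim(X_0\times X_1\times\hat{A_t},(1-t)d_{X_0}\oplus t d_{X_1},\eta_t\#\pi_{01})$. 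A final consistency step across $t$ shows that the coupling produced from different time slices can be taken to be a single $\pi^\ast$.

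The main obstacle will be this rigidity step: tracking the two nested Minkowski inequalities so that saturation simultaneously yields the pointwise triangle equality \emph{and} the correct proportionality constant $t:(1-t)$, and then cleanly upgrading the $\Lambda_t$-almost-everywhere identifications into a bona fide measure-preserving isometry realizing $\sim$. The hypothesis $q\ge 2$ enters precisely here, as strict convexity of the $\ell_q$ norm is what promotes triangle saturation to the affine identity $a_t=(1-t)a_0+t a_1$; without it the feature $a_t$ would only be constrained to lie somewhere on the segment $[a_0,a_1]$. Ensuring that the recovered $\pi^\ast$ does not depend on $t$ requires either a continuity argument along the curve or a single global gluing performed over all time slices at once.
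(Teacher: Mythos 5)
Your core mechanism is genuinely different from the paper's and, for a fixed $t$, it is essentially sound. The paper never splits at a single time: it refines the curve at all dyadic times $i2^{-k}$, glues the $2^{k}$ consecutive optimal couplings into one measure, and compares the endpoint cost with a telescoped upper bound built from the weighted power-sum inequality \eqref{sommepuissancep} and the uniform-convexity inequality \eqref{convexe} (Sturm's Lemma 3.4, which is where $q\ge 2$ is actually used); the defect term $(II')$ then vanishes and directly yields $a_{t}=(1-t)a_{0}+ta_{1}$ and $d_{X_{t}}=(1-t)d_{X_{0}}+td_{X_{1}}$. You replace this with the observation that $E^{1/q}$ obeys a genuine (unrelaxed) triangle inequality via nested Minkowski, whose saturation plus strict convexity of $\ell_{q}$ gives the same two identities; this is arguably cleaner and would work for any $q>1$. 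Two caveats. First, the collapse of your chain requires reading ``constant speed geodesic'' as $d_{FGW}(\mathbb{S}_{s},\mathbb{S}_{t})\le |t-s|^{q}\, d_{FGW}(\mathbb{S}_{0},\mathbb{S}_{1})$, i.e.\ a geodesic for $d_{FGW}^{1/q}$; with the $|t-s|$ normalization of the paper's general definition your right-hand side is $\bigl(t^{1/q}+(1-t)^{1/q}\bigr)d^{1/q}(\mathbb{S}_{0},\mathbb{S}_{1})$, which exceeds $d^{1/q}(\mathbb{S}_{0},\mathbb{S}_{1})$ and nothing saturates. The paper's proof makes the same implicit choice (it applies \eqref{geodesicforanyq} to the given curve), so this is a shared ambiguity rather than an error of yours, but you state the $t^{q}$ scaling as an identity when the definition only gives an inequality (the identity is only forced a posteriori). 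Second, the passage from the a.e.\ identities to $\mathbb{S}_{t}\sim(X_{0}\times X_{1}\times\hat{A_{t}},\dots)$ should not go through a pointwise ``map'' $(x_{0},x_{1},a_{t})\mapsto(x_{t},a_{t})$, which need not be well defined; the clean route is to note that $(\eta_{t}\times \mathrm{id})\#\Lambda_{t}$ is a coupling of $\eta_{t}\#\pi_{01}$ and $\mu_{t}$ with zero $FGW$ cost and invoke Theorem \ref{metrictheo}, which is exactly what the paper does.

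The genuine gap is the final uniformity step, which you flag but do not carry out: your construction produces one optimal endpoint coupling $\pi_{01}^{(t)}$ for each $t$, whereas the theorem asserts a single $\pi^{*}$ valid for all $t$ simultaneously, and optimal couplings between the endpoints need not be unique. A ``single global gluing over all time slices at once'' cannot be performed literally over a continuum of times, and a bare continuity appeal does not by itself select one coupling. The paper's resolution is precisely the portion of its proof your sketch omits: glue over the finitely many dyadic times at level $k$ to obtain one endpoint coupling $\tilde{\pi}^{k}$ serving all $t=i2^{-k}$ at once, extract a weak accumulation point $\tilde{\pi}^{\infty}$ by sequential compactness of $\Pi(\mu_{0},\mu_{1})$, and only then extend from dyadic to all $t\in[0,1]$ by continuity. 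Without some version of this multi-scale consistency argument, the statement as written is not established.
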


Proofs of the previous theorems can be found in Section~\ref{allproofs}. In a sense this result combines the geodesics in the Wasserstein space and in the space of all metric spaces since it suffices to interpolate the distances in the structure space and the features to construct the geodesic. The main interest is that it defines the minimum path between two structured objects. For example, considering two discrete structured objects represented by the measures $\mu_{0}=\sum_{i=1}^{n} h_{i}  \delta_{(x_{i},a_{i})}$ and $\mu_{1}=\sum_{j=1}^{m} g_{j}  \delta_{(y_{j},b_{j})}$, the interpolation path is given for $t\in [0,1]$ by the measure $\mu_{t}=\sum_{i=1}^{n}\sum_{j=1}^{m} \pi^{*}(i,j) \delta_{(x_{i},y_{j},(1-t)a_{i} +t b_{j})}$ where $\pi^{*}$ is the optimal coupling for the $FGW$ distance. However this geodesic is difficult to handle in practice since it requires the computation of the cartesian product $X_{0}\times X_{1}$. To overcome this obstacle, an extension using Fréchet mean is defined in section \ref{fgwbarysection}. The proper definition and properties of velocity fields associated to this geodesic is postponed to further works.

\section{Examples and applications for the discrete case}

In this section, we illustrate the behavior of $FGW$ on simple cases where structured objects are involved.

\subsection{FGW in the discrete case}

In the following section, $(\Omega,d)$ is the feature space and $\mathbb{X}_{n}$ is the set of all discrete metric spaces of size $n \geq 1$. Picking a structured object of size $n$ in the discrete case is choosing a metric space $(X,C)$, a set of $n$ elements $(x_{i},a_{i})$ where $(a_{i})_{i}$ denotes the feature information and $(x_{i})_{i}$ denotes the structure information. The matrix $C(i,j)$ aims at comparing the structure points $x_{i}$ and $x_{j}$. From this set we derive a fully supported probability measure $\mu$ by choosing a histogram $h \in \Sigma_{n}$ and $\mu=\sum_{i=1}^{n} h_{i}  \delta_{(x_{i},a_{i})}$.

More precisely, $H(\Omega)=\bigcup\limits_{n\in \mathbb{N}} H_{n}(\Omega)$ where $$H_{n}(\Omega) \stackrel{def}{=} \{(x_{i},a_{i})_{i \in \{1,..,n\}} \in (X \times \Omega)^{n}, \ C \in \mathbb{R}^{n \times n},  \ \mu=\sum_{i=1}^{n} h_{i}  \delta_{(x_{i},a_{i})} | (X,C) \in \mathbb{X}_{n}, h \in \Sigma_{n}  \}$$

This set includes all graphs with any number of vertices (each from a given metric space), where each vertex $x_{i}$ is associated to a feature $a_{i}$ in $\Omega$ and a weight $h_{i}$ on the simplex.

In the next paragraphs, $\mu \in H_{n}(\Omega)$ and $\nu \in H_{m}(\Omega)$ are structured data as described in the previous part. We suppose that $C_{1}$ and $C_{2}$ are the distance matrices inherent to each structure information of $\mu$ and $\nu$ respectively, and $a_{i}$, $b_{j}$ are the features. Let $p,q\geq 1$.

Using previous notations, the Fused Gromov-Wasserstein distance is defined as:
\begin{equation}
\label{discretefgw}
d^{\Omega}_{FGW,\alpha,p,q}(\mu,\nu)= \bigg(\underset{\pi \in \couplingset(h,g)}{\min}E_{p,q}(M_{AB},C_{1},C_{2},\pi) \bigg)^{\frac{1}{p}} \\
\end{equation}
where:
\begin{equation*}
E_{p,q}(M_{AB},C_{1},C_{2},\pi)=\sum\limits_{i,j,k,l} \big((1-\alpha) d(a_{i},b_{j})^{q}+\alpha |C_{1}(i,k)-C_{2}(j,l)|^{q}\big)^{p} \pi_{i,j} \, \pi_{k,l}
\end{equation*}

Algorithms for solving the numerical optimization above are given in \cite{2018arXiv180509114V}. They rely on Conditional Gradient but converge only to a local minimum due to the non-convexity of the optimization problem. We used these algorithms for all the applications below.

\subsection{Illustrations of FGW}

In this section, we present several applications of $FGW$ as a distance betweeen structured objects and provide interpretation of the OT matrix.

\paragraph{Example with 1D features and structure spaces}

Figure \ref{fig:illus_emp} illustrates the differences between Wasserstein, Gromov-Wasserstein and Fused Gromov-Wasserstein couplings $\pi^{*}$. In this example both the feature and structure space are 1-dimensional (Figure \ref{fig:illus_emp} left). The feature space denotes two clusters among the elements of both objects illustrated in the OT matrix $M_{AB}$ , the structure space denotes a noisy temporal sequence along the indexes liustrated in the matrices $C_1$ and $C_2$ (Figure \ref{fig:illus_emp} center). Wasserstein respects the clustering but forgets the temporal structure, Gromov-Wasserstein respects the structure but do not take the clustering into account. Only FGW retrieves a transport matrix respecting both feature and structure.

\begin{figure*}[t]
    \centering
        \includegraphics[height=4.1cm]{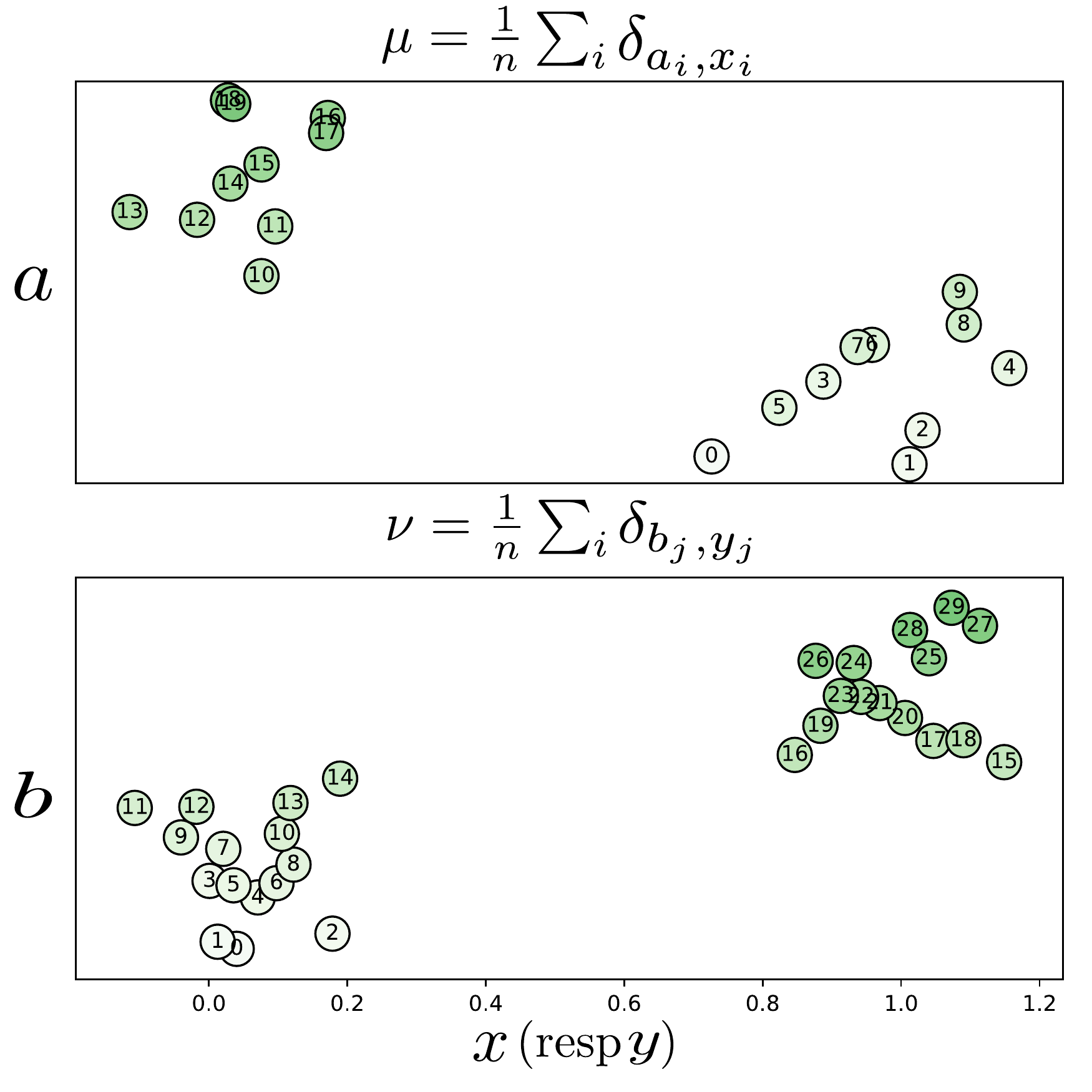}
  \hspace{-2mm}
  \includegraphics[height=4.1cm]{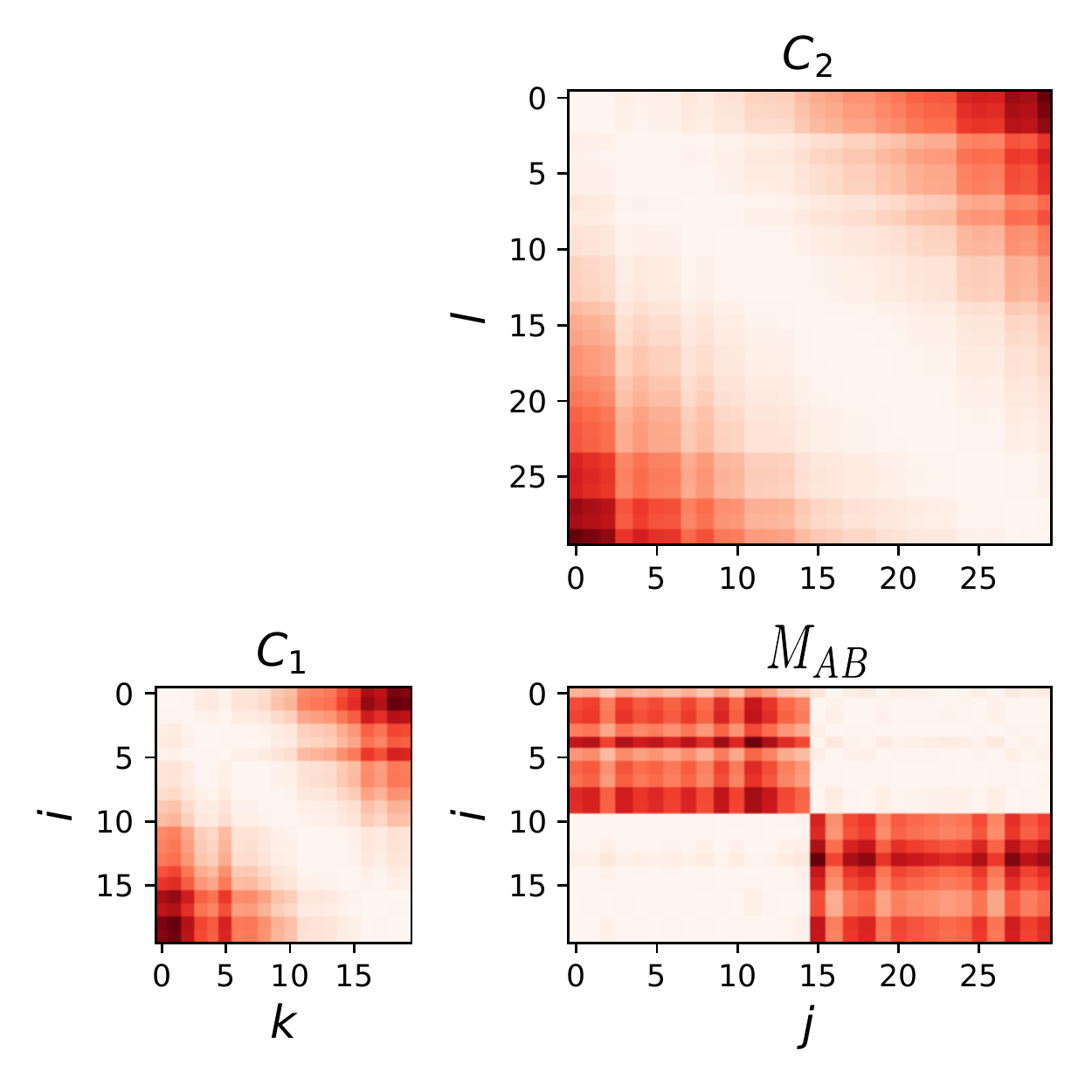}
  \hspace{-2mm}
      \includegraphics[height=4.1cm]{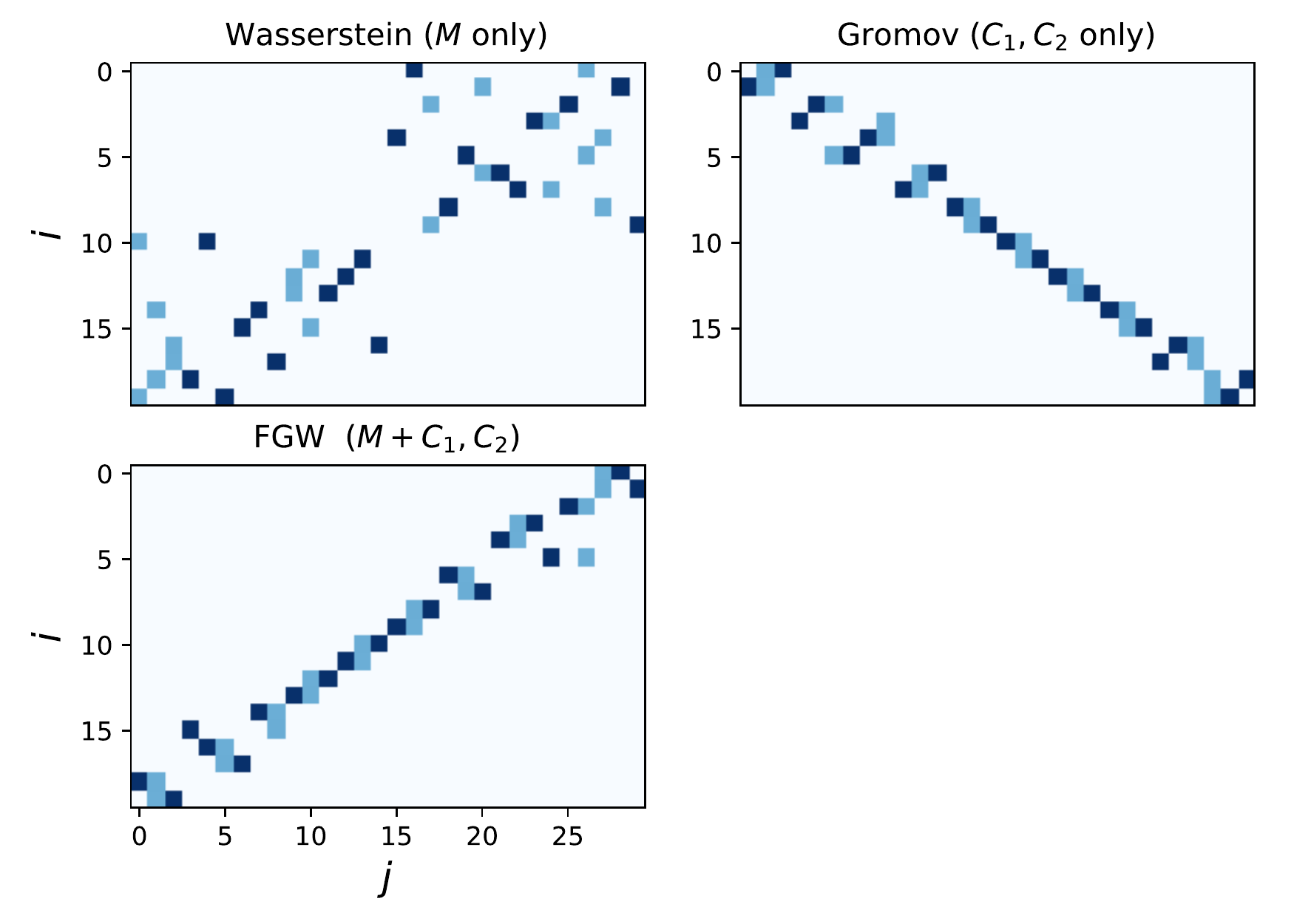}
    \caption{Illustration of the difference between $W$, $GW$ and $FGW$ couplings. (left) empirical distributions $\mu$ with 20 samples and $\nu$ with 30 samples which color is proportional to their index. (middle) Cost matrices in the feature ($M_{{AB}}$) and structure domains ($C_1,C_2$) with similar samples in white.   (right) Solution for all methods. Dark blue indicates a non zero coefficient of the transportation map between $i$ and $j$. Feature distances are large between points laying on the diagonal of $M_{AB}$ such that Wasserstein maps is anti-diagonal but unstructured. Fused Gromov-Wasserstein incorporates both feature and structure maps in a single transport map.}
    \label{fig:illus_emp}
\end{figure*}

\paragraph{Example on two simple images}
We extract one $28\times28$ image from the MNIST dataset and generate a second one by simply re-centering the digit on the frame. Features represent the gray level of each pixel, the structure is defined as the city-block distance on the pixel coordinate grid and we use equal weights for all the pixels in the image. Figure \ref{fig:illus_digit} shows the different couplings obtained when considering either the features only, the structure only or both information. $FGW$ aligns the pixels of the digits, recovering the correct order of the pixels, while both Wassertein and Gromov-Wasserstein distances fail at providing a meaningful transportation map. Note that in the Wasserstein and Gromov-Wasserstein case, the distances are equal to 0, whereas $FGW$ manages to spot that the two images are different. Also note that, in the $FGW$ sense, the original digit and its mirrored version are also equivalent as there exists an isometry between their structure spaces, making $FGW$ invariant to rotations or flips in the structure space in this case.
\begin{figure*}[t]
    \centering
        \includegraphics[height=3cm]{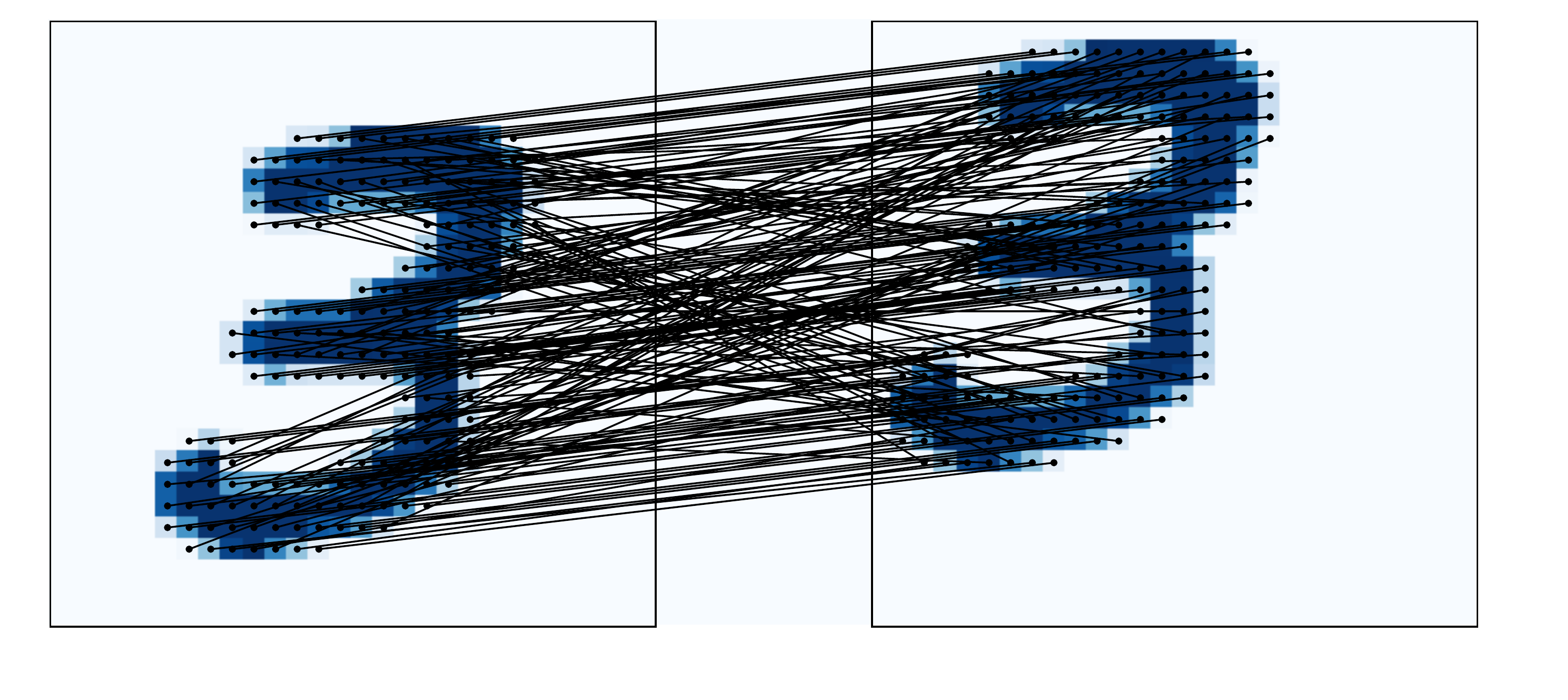}
  \hspace{-2mm}
  \includegraphics[height=3cm]{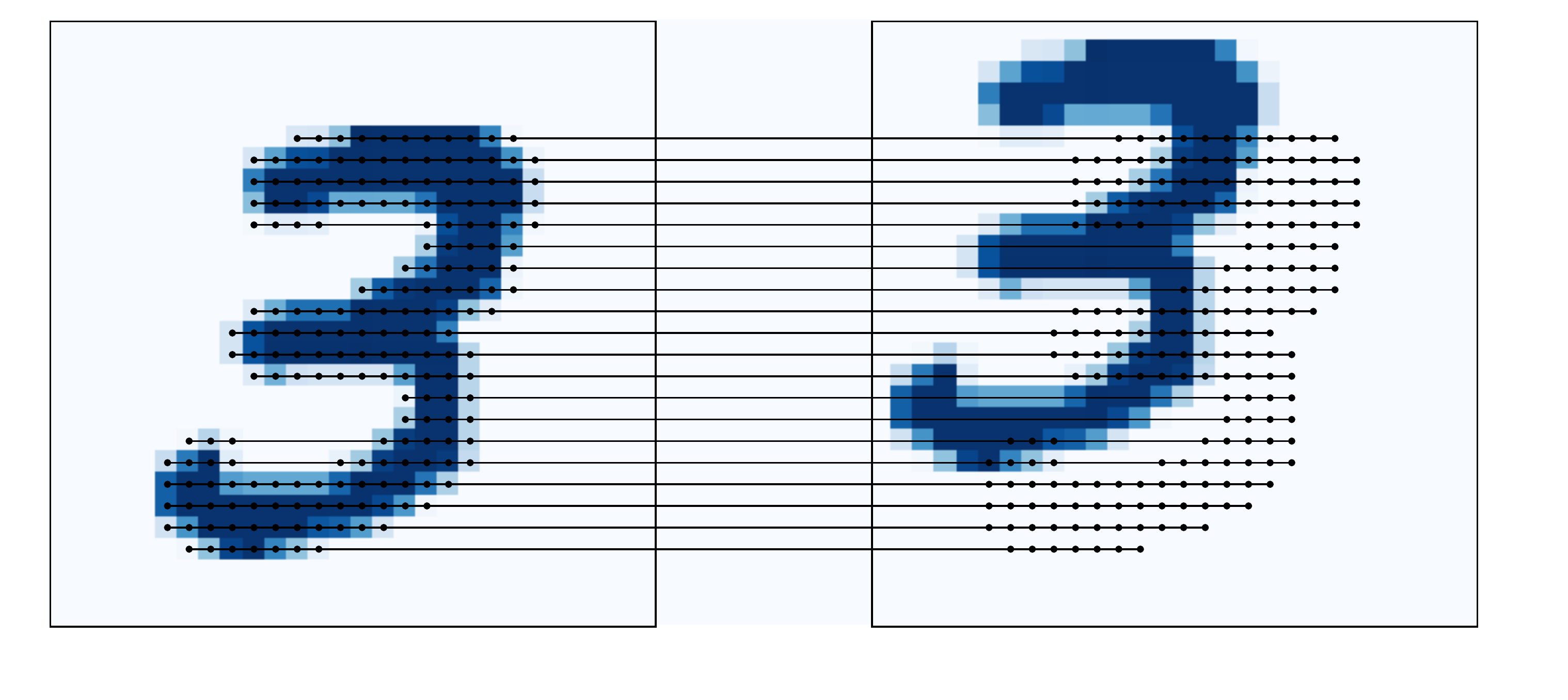}
  \hspace{-2mm}
      \includegraphics[height=3cm]{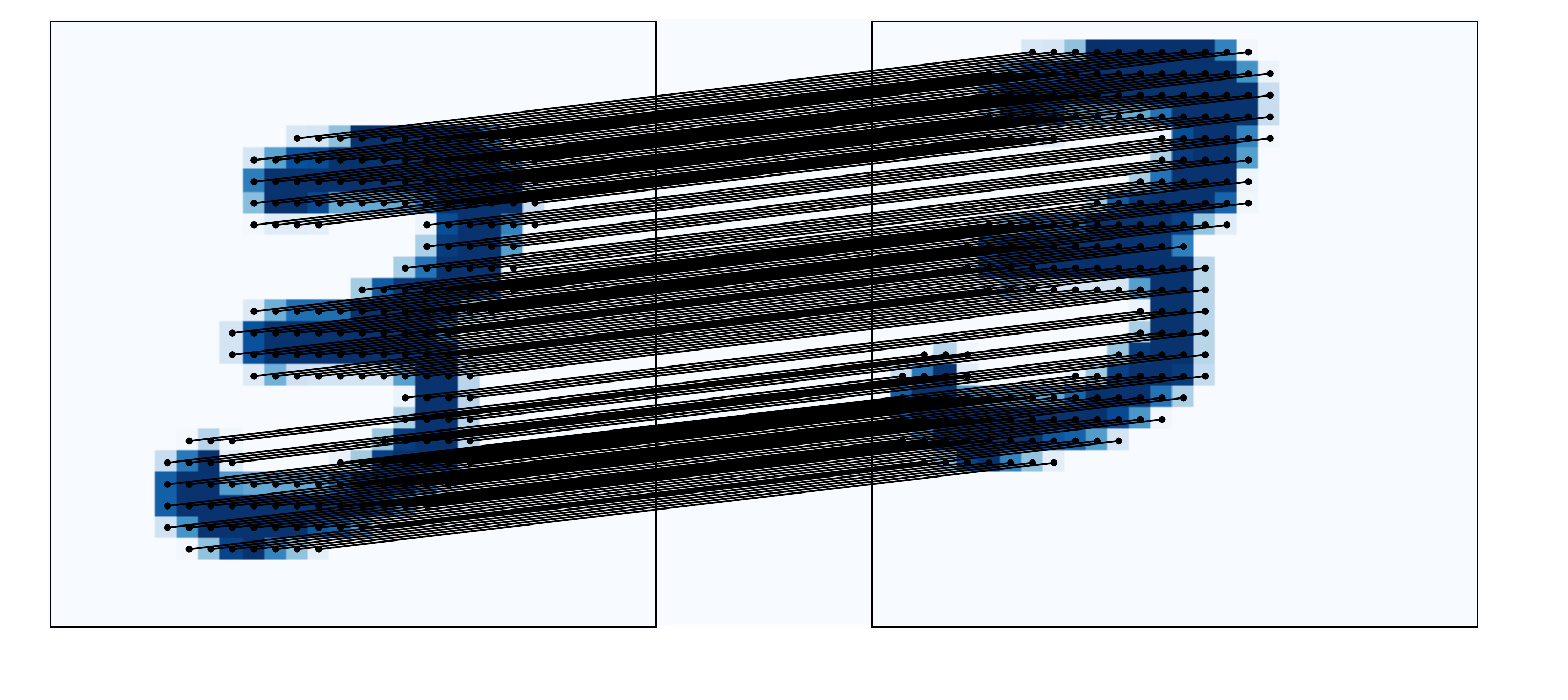}
     \hspace{-2mm}
   \includegraphics[height=3cm]{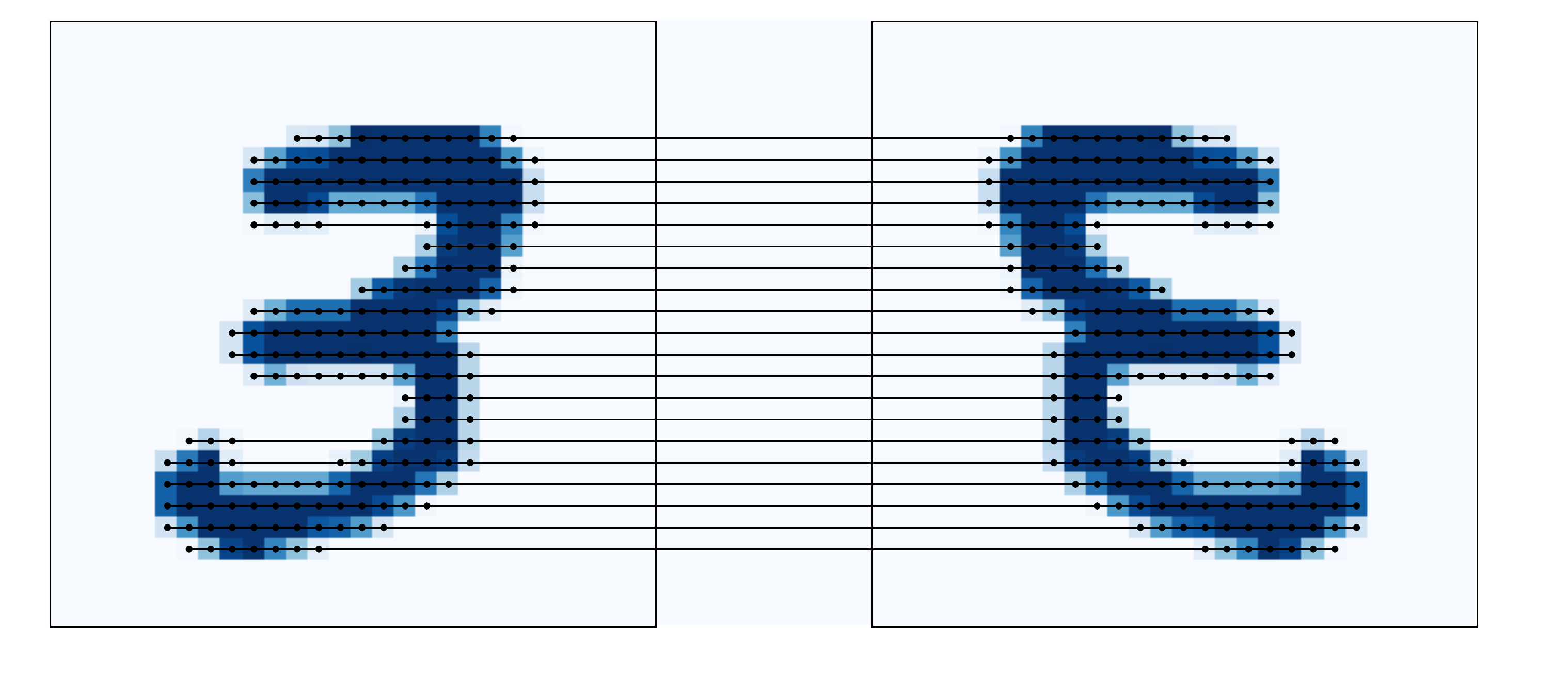}
    \caption{Couplings obtained when considering (Top left) the features only, where we have $d^{\Omega}_{W,1}=0$ (Top right) the structure only, with  $d_{GW,1}=0$ (Bottom left and right) both the features and the structure, with $d^{\Omega}_{FGW,0.1,1,2}$. For readibility issues, only the couplings starting from non white pixels on the left picture are depicted. }
    \label{fig:illus_digit}
\end{figure*}

\paragraph{Time series example}

One of the main assets of $FGW$ is that it can be used on a wide class of objects and time series are one more example of this. We consider here 25 monodimensional time series composed of two humps in $[0,1]$ with random uniform height between 0 and 1.  Signals are distributed according to two classes translated from each other with a fixed gap. The $FGW$ distance is computed by considering $d$ as the euclidean distance between the features of the signals (here the value of the signal in each point) and $d_X$ and $d_Y$ as the euclidean distance between timestamps.

 A 2D embedding is computed from a $FGW$ distance matrix between a number of examples in this dataset with multidimensional scaling (MDS) in Figure \ref{mds} (top). One can clearly see that the representation with a reasonable $\alpha$ value in the center is the most discriminant one. This can be better understood by looking as the OT matrices between the classes.
Figure \ref{mds} (bottom) illustrates the behavior of $FGW$ on one pair of examples when going from  Wasserstein to Gromov-Wasserstein. The black line depict the affectation provided by the transport matrix and one can clearly see that while Wasserstein on the left assigns samples completely independently to their temporal position, the Gromov-Wasserstein on the right tends to align perfectly the samples (note that it could have reversed exactly the alignment with the same loss) but discards the values in the signal. Only the true $FGW$ in the center finds a transport matrix that both respects the time sequences and aligns similar values in the signals.

\begin{figure*}[t]
\centering
\includegraphics[width=1\linewidth]{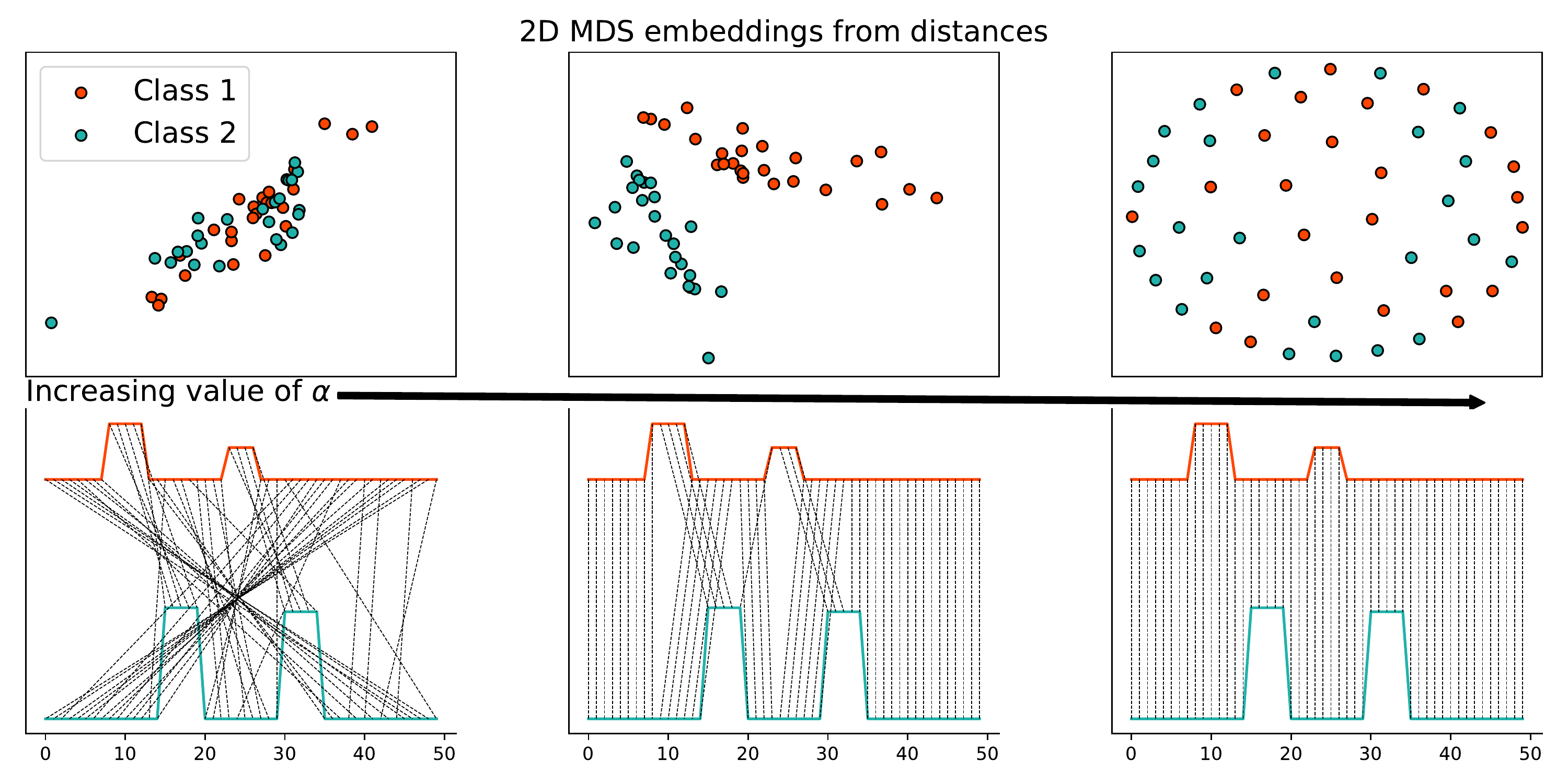}
\caption{Behavior of trade-off parameter $\alpha$ on a toy time series classification problem. $\alpha$ is increasing from left ($\alpha=0$ : Wasserstein distance) to right ($\alpha=1$ : Gromov-Wasserstein distance). (top row) 2D-embedding is computed from the set of pairwise distances between samples with MDS (bottom row) illustration of couplings between two sample time series from opposite classes.}
\label{mds}
\end{figure*}

\subsection{Structured Optimal Transport Barycenter \label{fgwbarysection}}

An interesting use of the $FGW$ distance is to define a barycenter of a set of structured data as a Fr\'echet mean. In that context, one seeks the structured object that minimizes the sum of the (weighted) $FGW$ distances with a given set of objects. OT barycenters have many desirable properties and applications \cite{agueh2011barycenters,peyre2016gromov}, yet no formulation can leverage both structural and feature information in the barycenter computation. Here we propose to use the $FGW$ distance to compute the barycenter of structured objects $(\mu_{k})_{k} \in H(\Omega)^{k}$ associated with structures $(C_{k})_{k}$, features $(B_{k})_{k}$ and base histograms $(h_{k})_{k}$.

We suppose that the feature space is $\Omega=(\mathbb{R}^{d},\ell_{2})$ and $p=1$. For simplicity, we assume that the base histograms and the histogram $h$ associated to the barycenter are known and fixed.

In this context, for a fixed $N \in \mathbb{N}$ and $(\lambda_{k})_{k}$ such that $\sum_{k} \lambda_{k}=1$ , we aim to find:
\begin{equation}
\underset{\mu}{\text{min}} \sum_{k} \lambda_{k} d^{\mathbb{R}^{d}}_{FGW,\alpha,1,q}(\mu,\mu_{k})=\underset{C,\ A \in \mathbb{R}^{N \times d},(\pi_{k})_{k}}{\text{min}} \sum_{k} \lambda_{k} E_{1,q}(M_{AB_{k}},C,C_{k},\pi_{k})
\label{fgwbarycenter}
\end{equation}
Note that this problem is convex \textit{w.r.t} $C$ and $A$ but not \textit{w.r.t} $\pi_{k}$. An algorithm to solve this problem is presented in~\cite{2018arXiv180509114V}. Intuitively, looking for a barycenter means finding feature values supported on a fixed size support, and the structure that relates them. Interestingly enough, there are several variants of this problem, where  features or structure can be fixed for the barycenter. Solving the related simpler optimization problems extend straightforwardly.

\paragraph{Graph barycenter and compression}

In this experiment, we use $FGW$ to compute barycenters and approximations of toy graphs.

\begin{figure}[t]
    \centerline{\includegraphics[width=1.3\textwidth]{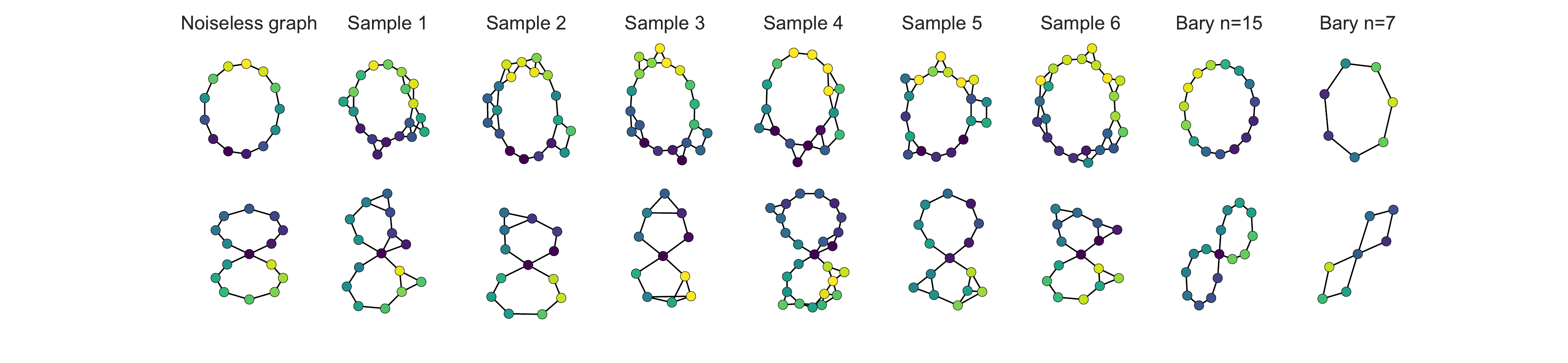}}
    \caption{\label{barygraph}Illustration of $FGW$ graph barycenter. The first column illustrates the original settings with the denoised graphs, and columns 2 to 7 are noisy samples that constitute the datasets. Columns 8 and 9 show the barycenters for each setting, with different number of nodes. Blue nodes indicates a feature value close to $-1$, yellow nodes close to $1$.  }
    \end{figure}

In the first example, we generate graphs following either a circle or $8$ symbol with 1D features following a sine and linear variation respectively.
For each example, the number of nodes is drawn randomly between 10 and 25, Gaussian noise is added to the features and a small noise is applied to the structure (some connections with the third neighbors are randomly added). An example graph  with no noise is provided for each class in the first column of Figure \ref{barygraph}. One can see from there that the circle class has a feature varying smoothly (sine) along the graph but the $8$ has a sharp feature change at its center (so that low pass filtering would loose some information). Some examples of the generated graphs are provided in the 2nd-to-7th columns of Figure \ref{barygraph}.
We compute the $FGW$ barycenter containing 10 samples using the shortest path distance between the nodes as the structural information and the $\ell_{2}$ distance for the features.
We recover an adjency matrix by thresholding the similarity matrix $C$ given by the barycenter. The threshold is tuned so as to minimize the Frobenius norm between the original $C$ matrix and the shortest path matrix constructed after thresholding $C$. Resulting barycenters are showed in Figure \ref{barygraph} for $n=15$ and $n=7$ nodes. First, one can see that the barycenters are denoised both in the feature space and the structure space. Also note that the sharp change at the center of the $8$ class is conserved in the barycenters which is a nice result compared to other divergences that tend to smooth-out their barycenters ($\ell_2$ for instance). Finally, note that by selecting the number of nodes in the barycenter one can compress the graph or estimate a "high resolution'' representation from all the samples.  To the best of our knowledge, no other method can compute such graph barycenters. Finally, note that $FGW$ is interpretable because the resulting OT matrix provides correspondence between the nodes from the samples and those from the barycenter.

\begin{figure}[t]
    \centering
    \includegraphics[width=.9\linewidth]{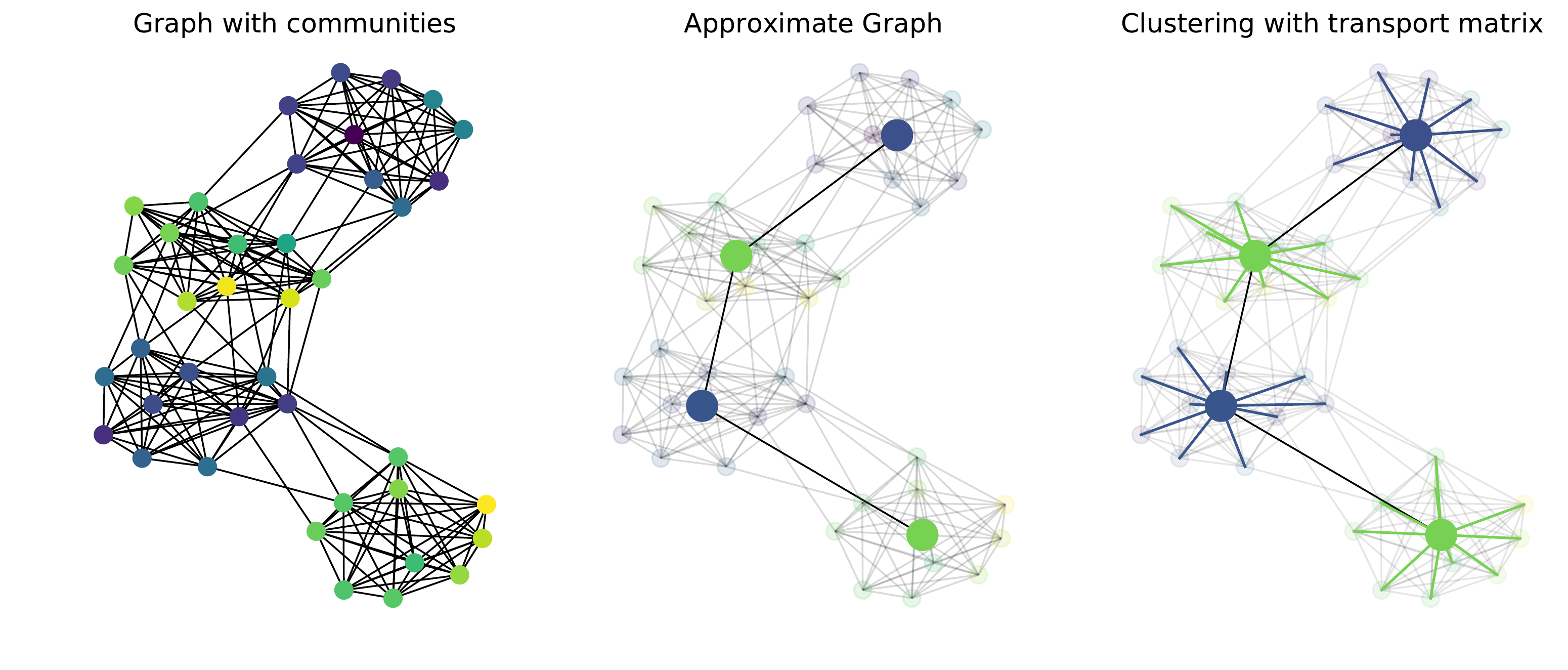}
    \includegraphics[width=.9\linewidth]{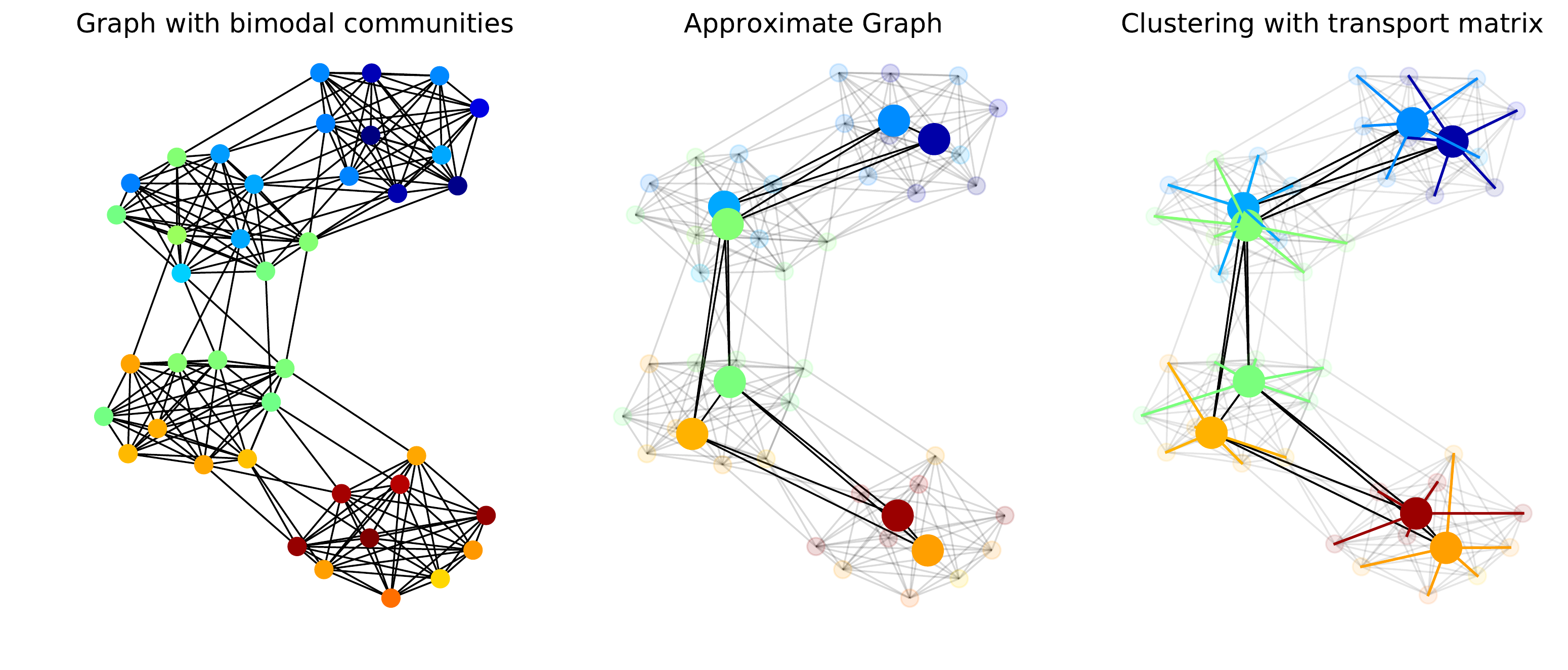}
    \caption{Example of community clustering on graphs using $FGW$. (up) Community clustering with 4 communities and uniform features per cluster. (down)  Community clustering with 4 communities and bimodal features per cluster (and two nodes per cluster in the approximate graph).}
    \label{fig:graphclustr}
\end{figure}

In the second experiment, we evaluate the ability of FGW to perform graph approximation and compression on a simple Stochastic Block Model graph \cite{wang1987stochastic,nowicki2001estimation}. The question is to see if estimating an approximated graph can recover the relation between the blocks and perform simultaneously a community clustering on the original graph (using the OT matrix). We generate two community graphs illustrated in the left column of Figure \ref{fig:graphclustr}. 
We can see that the relation between the blocks is sparse and has a 'linear' structure, the example in the first line has features that follow the blocks (noisy but similar in each block) whereas the example in the second line has two modes per blocks. The first graph approximation (top line) is done with 4 nodes and we can recover both the blocks in the graph and the average feature on each blocks (colors on the nodes). The second problem is more complex due to the two modes per  blocks but we can see that when approximating the graph with 8 nodes we recover both the structure between the blocks but also the sub-clusters in each block which illustrate the strength of FGW: encoding both features and structures.

\paragraph{Mesh barycenter}

We show in this section another example of barycenter. We aim at interpolating between unregistered 3D meshes.

Here, we consider the problem of interpolating between $k=2$  meshes in $3D$ that share a common topology but not the same number of vertices. Such an interpolation is realized by setting  $\lambda_{1}= \lambda$ and $\lambda_{2}=1-\lambda$ and varying $\lambda$ between 0 and 1. We interpolate between two quadrupeds: a deer and a cat, that are triangular meshes with respectively 460 and 989 vertices. This is a particularly difficult problem, since there is no prior matching between meshes available. It has long been considered in the computational geometry and vision communities (e.g. ~\cite{alexa2000rigid,sumner2004deformation}), and generally requires user interventions. In our setting, the structure of the barycenter is set to be the one of the cat: the barycenter should have the same topological structure. Our method then only solves for the vertex positions
$X \in \mathbb{R}^{989 \times 3}$. The topological structures $C_{1}$ and $C_{2}$ are set to be the shortest path along the mesh between two vertices, which is a good approximation of the geodesic distance on the manifold.

\begin{figure}[t]

\centerline{\includegraphics[width=1\textwidth]{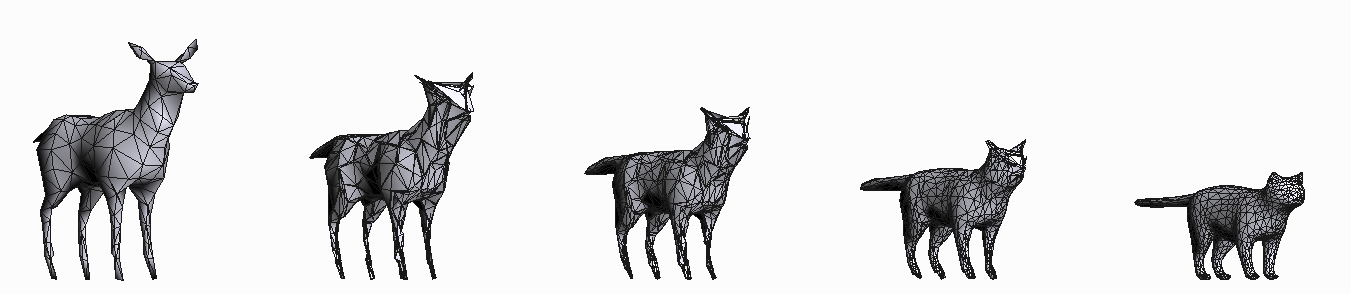}}\centerline{\includegraphics[width=1\textwidth]{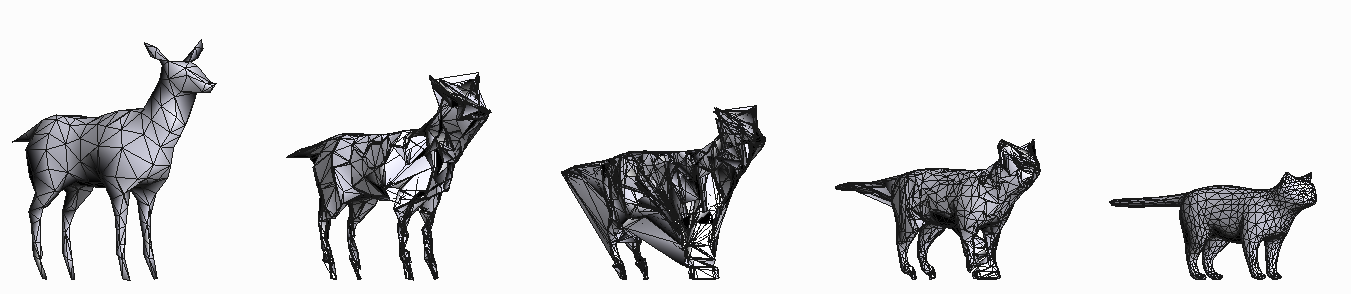}}\caption{Interpolation of a cat and a deer mesh using  $FGW$. (first line) Interpolation using the $FGW$ distance with a high $\alpha$ value (bottomline) same with a very low $\alpha$ value, {\em i.e.} the mesh structure is almost not taken into account.\label{mesh}}
\end{figure}

Results are presented in Figure~\ref{mesh} for $\lambda \in [0.75, 0.5 , 0.25]$. A good way of assessing the quality of the results is to visually check that the consistency of the manifold mesh is preserved throughout the interpolation. The first line shows the resulting interpolation when the weight on the structure is set to a high value. When only 3D distances are used to match the shapes (bottom line), one can see that points belonging to different parts of the meshes are matched, because of the different densities of points in the two meshes. This results in highly unrealistic mesh.

\section{Proofs of the mathematical properties \label{allproofs}}

This section presents all the proofs of previous theorem and results. We will frequently use the following lemma : 

\begin{lemma}
Let $q \geq 1$. We claim :

\begin{equation}
\label{holder}
\forall x,y \in \mathbb{R}_{+}, \ (x+y)^{q} \leq 2^{q-1}(x^{q} + y^{q})
\end{equation}
\end{lemma}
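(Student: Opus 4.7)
The plan is to invoke the convexity of the map $t \mapsto t^{q}$ on $\mathbb{R}_{+}$, which holds for every $q \geq 1$ since its second derivative $q(q-1)t^{q-2}$ is nonnegative there. Applied to the midpoint $\tfrac{x+y}{2}$ with equal weights $\tfrac{1}{2},\tfrac{1}{2}$, Jensen's inequality gives
\begin{equation*}
\left(\frac{x+y}{2}\right)^{q} \leq \frac{x^{q}+y^{q}}{2}.
\end{equation*}
Multiplying through by $2^{q}$ yields $(x+y)^{q} \leq 2^{q-1}(x^{q}+y^{q})$, which is the desired inequality.

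The argument is essentially one line, so there is no real obstacle; the only thing worth remarking is that the case $q=1$ is a trivial equality, and that the inequality is tight when $x=y$ (both sides equal $2^{q}x^{q}$), which shows the constant $2^{q-1}$ cannot be improved. If one prefers an elementary route avoiding Jensen, the same conclusion follows from the power mean inequality, or by writing $x+y = 2 \cdot \tfrac{x+y}{2}$ and using that $t \mapsto t^{q}$ is nondecreasing and convex on $\mathbb{R}_{+}$. Either formulation is equally suitable for the rest of the paper, where this lemma will be applied to separate feature and structure contributions raised to the power $q$.
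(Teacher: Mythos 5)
Your proof is correct. The paper reaches the same inequality by a different (and somewhat more baroque) route: it writes $x+y$ as $c\cdot\frac{x}{c}+c\cdot\frac{y}{c}$ with $c=(2^{1-q})^{1/q}$ and invokes H\"older's inequality with conjugate exponents $q$ and $q/(q-1)$, whereas you invoke convexity of $t\mapsto t^{q}$ via Jensen at the midpoint, i.e.\ $\left(\frac{x+y}{2}\right)^{q}\leq\frac{x^{q}+y^{q}}{2}$, and scale by $2^{q}$. Both arguments are valid one-liners for this standard inequality; yours is the more elementary and transparent of the two, since it does not require separating out the case $q=1$ (where the H\"older exponent $q/(q-1)$ degenerates and the paper has to remark that ``the result remains valid for $q=1$''), and it makes the sharpness at $x=y$ immediately visible. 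Your observations about tightness of the constant and the triviality of $q=1$ are accurate but not needed for the applications in the paper (the relaxed triangle inequality of Proposition~\ref{triangleprop}), where only the stated upper bound is used.
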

\begin{proof}
Indeed, if $q>1$

$(x+y)^{q} = \big((\frac{1}{2^{q-1}})^{\frac{1}{q}} \frac{x}{(\frac{1}{2^{q-1}})^{\frac{1}{q}}} + (\frac{1}{2^{q-1}})^{\frac{1}{q}} \frac{y}{(\frac{1}{2^{q-1}})^{\frac{1}{q}}}\big)^{q} \leq \big[ (\frac{1}{2^{q-1}})^{\frac{1}{q-1}} +(\frac{1}{2^{q-1}})^{\frac{1}{q-1}}\big]^{q-1} \big(\frac{x^{q}}{\frac{1}{2^{q-1}}} + \frac{y^{q}}{\frac{1}{2^{q-1}}}\big) \\
= \frac{x^{q}}{\frac{1}{2^{q-1}}} + \frac{y^{q}}{\frac{1}{2^{q-1}}}$

Last inequality is a consequence of Hölder inequality. The result remains valid for $q=1$.
\end{proof}

\subsection{Proof of Prop. \ref{fgwcomparetogwandw} Comparison between FGW, GW and W}

\begin{proof}{of the Proposition}

For the two inequalities \eqref{wassinequality} and \eqref{gromovinequality} let $\pi$ be the optimal coupling for the Fused Gromov-Wasserstein distance between $\mu$ and $\nu$ (assuming its existence for now). Clearly : \\ \\
$d^{\Omega}_{FGW,\alpha,p,q}(\mu,\nu) {=}\big(\int\limits_{(X\times A \times Y \times B)^{2}}\!\!\!\!\!\!\!\!\!\!\!\!\!\! ((1-\alpha) d(a,b)^{q} +\alpha L((x,y,x',y')^{q})^{p} \,d\pi((x,a),(y,b))\,d\pi((x',a'),(y',b'))\big)^{\frac{1}{p}} \\ \\
{\geq} \big(\int\limits_{X\times A \times Y \times B} (1-\alpha)^{p} d(a,b)^{pq} \,d\pi((x,a),(y,b))\big)^{\frac{1}{p}}= (1-\alpha) \big( \int\limits_{A \times B}d(a,b)^{pq} \,dP_{2,4}\#\pi(a,b)\big)^{\frac{1}{p}}
$
\\ \\
Since $\pi \in \couplingset(\mu,\nu)$ the coupling $P_{2,4}\#\pi$ is in $\couplingset(\mu_{A},\nu_{B})$. So by suboptimality :
$$d^{\Omega}_{FGW,\alpha,p,q}(\mu,\nu){\geq} (1-\alpha) (d^{\Omega}_{W,pq}(\mu_{A},\nu_{B}))^{q}$$
which proves (\ref{wassinequality}). Same reasoning is used for (\ref{gromovinequality}).

For the last inequality \eqref{samespace} let $\pi \in \couplingset(\mu,\nu)$ be any admissible coupling. By suboptimality :
$d^{\Omega}_{FGW,\alpha,p,1}(\mu,\nu) \\ \\
{\leq}\big(\!\!\!\!\!\!\!\!\!\!\!\!\!\! {\small \int\limits_{(X\times A \times Y \times B)^{2}}\!\!\!\!\!\!\!\!\!\!\!\!\!\!  \big((1{-}\alpha) d(a,b) +\alpha |d_{Z}(x,x')-d_{Z}(y,y')|\big)^{p}d\pi((x,a),(y,b))d\pi((x',a'),(y',b'))}\big)^{\frac{1}{p}} \\ \\
\stackrel{(*)}{\leq}\big(\!\!\!\!\!\!\!\!\!\!\!\!\!\! {\small \int\limits_{(X\times A \times Y \times B)^{2}}\!\!\!\!\!\!\!\!\!\!\!\!\!\!  \big((1{-}\alpha) d(a,b) +\alpha d_{Z}(x,y)+\alpha d_{Z}(x',y')\big)^{p}d\pi((x,a),(y,b))d\pi((x',a'),(y',b'))}\big)^{\frac{1}{p}} \\ \\
{\leq}\big(\!\!\!\!\!\!\!\!\!\!\!\!\!\! {\small \int\limits_{(X\times A \times Y \times B)^{2}}\!\!\!\!\!\!\!\!\!\!\!\!\!\!  \big((1{-}\alpha) d(a,b) +\alpha d_{Z}(x,y)+(1{-}\alpha) d(a',b')+\alpha d_{Z}(x',y')\big)^{p}d\pi((x,a),(y,b))d\pi((x',a'),(y',b'))}\big)^{\frac{1}{p}} \\ \\
\stackrel{(**)}{\leq}2\big(\!\!\!\!\!\!\!\!\!\!\!\!\!\! {\small \int\limits_{X\times A \times Y \times B}\!\!\!\!\!\!\!\!\!\!\!\!\!\!  \big((1{-}\alpha) d(a,b) +\alpha d_{Z}(x,y)\big)^{p}d\pi((x,a),(y,b))}\big)^{\frac{1}{p}} 
$

(*) is the triangle inequality of $d_{Z}$ and (**) Minkowski inequality. Since this inequality is true for any admissible coupling $\pi$ we can apply it with the optimal coupling for the Wasserstein distance defined in the proposition and the claim follows.

\end{proof}

\subsection{Proof of the theorem \ref{metrictheo} Metric properties of FGW}

We propose to prove the theorem point by point : first the existence, then the equality relation and finally the triangle inequality statement. We first recall the following lemma (lemma 10.3 in \cite{springerlink:10.1007/s10208-011-9093-5}):

\begin{lemma}{}
\label{minimizer}
Let $(W,d_{W})$ be a compact metric space and $\mathcal{M}$ be a subset of $P(W)$ which is sequentially compact for the weak convergence.

If we find $\phi: W \times W \rightarrow \mathbb{R}$  Lipschitz for following the $L^{1}$ metric on $W \times W$ :

$$\hat{d}((w_{1},w_{2}),(w_{1}',w_{2}'))=d_{W}(w_{1},w_{1}')+d_{W}(w_{2},w_{2}')$$
with $w_{1},w_{1}',w_{2},w_{2}' \in W^{4}$.

Then the application $\mu \rightarrow I(\mu)=\underset{W\times W}{\int \int} \phi(w,w')d\mu(w)d\mu(w')$ admits a minimizer in $\mathcal{M}$.

Moreover if $(\mu_{n})_{n \in \mathbb{N}}$ converges weakly to $\mu$ then $I(\mu_{n}) \rightarrow I(\mu)$ as $n \rightarrow \infty$

\end{lemma}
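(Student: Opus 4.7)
The plan is to prove the two assertions in the reverse order: first establish the continuity statement (the ``moreover'' part), and then deduce existence of a minimizer by the direct method of the calculus of variations, relying on the sequential compactness of $\mathcal{M}$.

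\textbf{Step 1 (Setup).} Since $W$ is compact, so is $(W\times W, \hat d)$, hence any Lipschitz $\phi\colon W\times W\to\mathbb{R}$ is automatically bounded and continuous. Rewrite
$$I(\mu)=\int_{W\times W}\phi(w,w')\,d(\mu\otimes\mu)(w,w'),$$
so that $I$ is simply the integral of the bounded continuous function $\phi$ against the product measure $\mu\otimes\mu$.

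\textbf{Step 2 (Continuity under weak convergence).} Suppose $\mu_n\to\mu$ weakly in $P(W)$. The key point is to pass to the product; I would split
$$I(\mu_n)-I(\mu) \;=\; \bigl[I(\mu_n)-J(\mu_n,\mu)\bigr]+\bigl[J(\mu_n,\mu)-I(\mu)\bigr],$$
where $J(\mu_n,\mu):=\iint\phi(w,w')\,d\mu_n(w)\,d\mu(w')$. For the second bracket, for each fixed $w$ the function $w'\mapsto\phi(w,w')$ is bounded continuous, so $\int\phi(w,w')\,d\mu(w')$ is continuous and bounded in $w$ (using the Lipschitz property of $\phi$), and weak convergence $\mu_n\to\mu$ gives the convergence directly. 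The same argument treats the first bracket by fixing $w'$. Alternatively, one can invoke the standard fact that $\mu_n\to\mu$ weakly implies $\mu_n\otimes\mu_n\to\mu\otimes\mu$ weakly on the compact space $W\times W$, followed by Portmanteau applied to the bounded continuous function $\phi$.

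\textbf{Step 3 (Existence of a minimizer).} Let $m:=\inf_{\nu\in\mathcal{M}} I(\nu)$. Since $\phi$ is bounded, $m$ is finite. Choose a minimizing sequence $(\mu_n)\subset\mathcal{M}$ with $I(\mu_n)\to m$. By the assumed sequential compactness of $\mathcal{M}$ for weak convergence, extract a subsequence $\mu_{n_k}\to\mu^{\star}\in\mathcal{M}$. By Step 2, $I(\mu^{\star})=\lim_k I(\mu_{n_k})=m$, so $\mu^{\star}$ is a minimizer.

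\textbf{Main obstacle.} The only delicate point is Step 2: converting weak convergence of $\mu_n$ to $\mu$ into convergence of the \emph{double} integral against the product measure. The Lipschitz hypothesis is somewhat stronger than needed (continuity plus compactness would suffice), but it is convenient because it yields an explicit quantitative bound via the Kantorovich--Rubinstein duality, namely $|I(\mu_n)-I(\mu)|\leq 2\,\mathrm{Lip}(\phi)\,d_{W,1}^{W}(\mu_n,\mu)$, which tends to zero on a compact space where weak convergence is equivalent to Wasserstein convergence.
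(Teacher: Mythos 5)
Your proposal is correct. Note that the paper itself does not prove this lemma: it is recalled verbatim from M\'emoli (Lemma 10.3 of the cited reference on Gromov--Wasserstein distances), so there is no in-paper argument to compare against; your direct-method proof is in fact essentially the standard one. The only loose phrase is in Step 2, where you say the first bracket $I(\mu_n)-J(\mu_n,\mu)$ is handled ``by the same argument, fixing $w'$'': as written this bracket equals $\int h_n(w)\,d\mu_n(w)$ with $h_n(w)=\int\phi(w,w')\,(d\mu_n-d\mu)(w')$, and since the outer measure is the moving $\mu_n$ rather than the fixed $\mu$, pointwise convergence $h_n\to 0$ is not enough --- you need uniform convergence of $h_n$ (available via the uniform Lipschitz bound on $\{\phi(w,\cdot)\}_w$ and compactness of $W$) or one of the two alternatives you list. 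Both alternatives do close the gap cleanly: weak convergence of $\mu_n\otimes\mu_n$ to $\mu\otimes\mu$ tested against the bounded continuous $\phi$, or the quantitative Kantorovich--Rubinstein bound $|I(\mu_n)-I(\mu)|\leq 2\,\mathrm{Lip}(\phi)\,W_1(\mu_n,\mu)$ combined with the equivalence of weak and $W_1$ convergence on a compact space; the latter is where the Lipschitz hypothesis (rather than mere continuity) earns its keep. With either of those spelled out, Steps 1--3 constitute a complete and correct proof.
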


\begin{proposition}{Existence of an optimal coupling for the $FGW$ distance}

For $p,q \geq 1$, $\pi \rightarrow \lossfgw(\pi)$ always always achieves a infimum $\pi^{*}$ in $\Pi(\mu,\nu)$ such that $d^{\Omega}_{FGW,\alpha,p,q}(\mu,\nu)=\lossfgw(\pi^{*})$

\end{proposition}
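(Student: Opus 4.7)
The plan is to apply Lemma \ref{minimizer} directly, by taking $W = X \times A \times Y \times B$ equipped with the $L^1$ product metric $d_W((x,a,y,b),(x',a',y',b')) = d_X(x,x') + d(a,a') + d_Y(y,y') + d(b,b')$, and $\mathcal{M} = \Pi(\mu,\nu) \subset P(W)$. First I would record that $W$ is compact as the product of compact metric spaces (using that $X, Y$ are compact by the definition of a structured object and $A, B$ are compact subsets of $\Omega$).

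Next I would verify that $\Pi(\mu,\nu)$ is sequentially compact for weak convergence. This is a classical fact which I would briefly justify: on the compact space $W$ every probability measure is tight, so Prokhorov's theorem yields sequential precompactness of $P(W)$; moreover $\Pi(\mu,\nu)$ is weakly closed since the projection maps $P_{1,2}\#$ and $P_{3,4}\#$ are continuous for the weak topology, and the constraints $P_{1,2}\#\pi = \mu$, $P_{3,4}\#\pi = \nu$ are therefore preserved under weak limits.

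Then I would define the cost kernel
\begin{equation*}
\phi\bigl((x,a,y,b),(x',a',y',b')\bigr) = \bigl((1-\alpha) d(a,b)^{q} + \alpha |d_X(x,x') - d_Y(y,y')|^{q}\bigr)^{p}
\end{equation*}
so that $E^{\Omega}_{p,q,\alpha}(\pi) = \int_{W \times W} \phi \, d\pi \, d\pi$. The key step is checking that $\phi$ is Lipschitz with respect to the $L^1$ metric $\hat d$ on $W \times W$. By the triangle inequality, $(a,b) \mapsto d(a,b)$ is $1$-Lipschitz, and $(x,y,x',y') \mapsto |d_X(x,x')-d_Y(y,y')|$ is $1$-Lipschitz in each coordinate (hence Lipschitz for the sum metric). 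On the compact space $W$ both of these quantities are bounded, so composing with $t \mapsto t^q$ (Lipschitz on any bounded subinterval of $\mathbb{R}_+$) preserves the Lipschitz property, as does taking the nonnegative combination with weights $1-\alpha$ and $\alpha$, and finally composing with $t \mapsto t^p$ (again Lipschitz on a bounded range). Thus $\phi$ is Lipschitz as required.

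With these ingredients, Lemma \ref{minimizer} immediately yields the existence of a minimizer $\pi^{*} \in \Pi(\mu,\nu)$ of $\pi \mapsto E^{\Omega}_{p,q,\alpha}(\pi)$, so the infimum in the definition of $d^{\Omega}_{FGW,\alpha,p,q}(\mu,\nu)$ is attained. The only non-routine point is the Lipschitz verification for $\phi$, but it reduces, via compactness of $W$, to the elementary facts that $t \mapsto t^q$ and $t \mapsto t^p$ are Lipschitz on bounded intervals; the sequential compactness of $\Pi(\mu,\nu)$ is standard but worth spelling out since the lemma requires it as a hypothesis.
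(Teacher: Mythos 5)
Your proposal is correct and follows essentially the same route as the paper's proof: both invoke the same auxiliary lemma on $W = X\times A\times Y\times B$ with the $L^1$ sum metric, verify that the feature term $d(a,b)$ and the structure term $|d_X(x,x')-d_Y(y,y')|$ are $1$-Lipschitz via the triangle inequality, and then use boundedness on the compact $W$ to compose with $t\mapsto t^q$ and $t\mapsto t^p$ while preserving the Lipschitz property. Your explicit justification of the sequential compactness of $\Pi(\mu,\nu)$ via Prokhorov's theorem is a slightly more self-contained version of the paper's citation of the standard result, but the argument is otherwise the same.
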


\begin{proof}

The sequential compactness of $\couplingset(\mu,\nu)$ is classic results (see lemma 4.4 in \cite{Villani}). To prove the existence of $FGW$ distance we use previous lemma \ref{minimizer} which states the existence of a minimizer for the integral of $\phi$. The main idea is to rewrite the definition of $FGW$ in the form of the lemma.

We first consider the case $p=1$ and use previous lemma \ref{minimizer} with $W=X\times A \times Y \times B$ and for $w,w' \in W \times W$ :
$$\phi(w=(x,a,y,b),w'=(x',a',y',b'))=(1-\alpha)d(a,b)^{q}+\alpha L(x,y,x',y')^{q}$$ such that 

$$d^{\Omega}_{FGW,\alpha,1,q}(\mu,\nu) =\underset{\pi \in \mathcal{M}(\mu,\nu)}{\text{inf}}\underset{W\times W}{\int \int} \phi(w,w')d\pi(w)d\pi(w')$$
We equip W with the metric: 
$$d_{W}(w=(x,a,y,b),w'=(x',a',y',b'))=d_{X}(x,x')+d_{Y}(y,y')+d(a,a')+d(b,b').$$
So by lemma \ref{minimizer} it suffices to show that $\phi$ is Lipschitz on $W \times W$ with respect to  :
$$\hat{d}(z=(w_{1},w_{2}),z'=(w_{1}',w_{2}'))=d_{W}(w_{1},w_{1}')+d_{W}(w_{2},w_{2}')$$ 
with $w_{1},w_{1}',w_{2},w_{2}' \in W^{4}$

We also consider $g(t)=t^{q}$ and, $$\phi_{1}(w=(x,a,y,b),w'=(x',a',y',b'))=L(x,y,x',y')^{q}$$ and $$\phi_{2}(w=(x,a,y,b),w'=(x',a',y',b'))=d(a,b)^{q}$$ such that 
$$\phi=(1-\alpha) g\circ\ \phi_{1}+\alpha g\circ\ \phi_{2}.$$

This notations will be useful to prove the case $q>1$ from the case $q=1$. Indeed, we will show that $\phi_{1}$ and $\phi_{2}$ are 1-Lipschitz wrt $\hat{d}$, this will prove the result for $q=1$. Using the boundedness of $g$,$\phi_{1}$ and $\phi_{2}$ over compacts we will conclude for $q > 1$. 

To prove that $\phi_{1}$ and $\phi_{2}$ are 1-Lipschitz wrt $\hat{d}$ we have to show that for $i=1,2$ and $(w_{i}=(x_{i},a_{i},y_{i},b_{i}),w_{i}'=(x_{i}',a_{i}',y_{i}',b_{i}')) \in W\times W$ :
$$|\phi_{i}(z=(w_{1},w_{2}))-\phi_{i}(z'=(w_{1}',w_{2}'))| \leq \hat{d}(z=(w_{1},w_{2}),z'=(w_{1}',w_{2}'))$$ with by definition : \\ \\
$\hat{d}(z=(w_{1},w_{2}),z'=(w_{1}',w_{2}'))= d_{X}(x_{1},x_{1}')+d_{Y}(y_{1},y_{1}')+d(a_{1},a_{1}')+d(b_{1},b_{1}')+d_{X}(x_{2},x_{2}')+d_{Y}(y_{2},y_{2}')\\ \\+d(a_{2},a_{2}')+d(b_{2},b_{2}') $

\paragraph{}
We first consider $\phi_{1}$ :

$|\phi_{1}(z=(w_{1},w_{2}))-\phi_{1}(z'=(w_{1}',w_{2}'))|
= \\ \\|\phi_{1}(w_{1}{=}(x_{1},a_{1},y_{1},b_{1}),w_{2}{=}(x_{2},a_{2},y_{2},b_{2}))-\phi_{1}(w_{1}'{=}(x_{1}',a_{1}',y_{1}',b_{1}'),w_{2}'{=}(x_{2}',a_{2}',y_{2}',b_{2}'))| \\ \\
=|\ |d_{X}(x_{1},x_{2})-d_{Y}(y_{1},y_{2})|-|d_{X}(x_{1}',x_{2}')-d_{Y}(y_{1}',y_{2}')| \ | \\ \\
$
Yet :

$|\ |d_{X}(x_{1},x_{2})-d_{Y}(y_{1},y_{2})|-|d_{X}(x_{1}',x_{2}')-d_{Y}(y_{1}',y_{2}')| \ | 
\leq |d_{X}(x_{1},x_{2})-d_{Y}(y_{1},y_{2})+d_{X}(x_{1}',x_{2}')-d_{Y}(y_{1}',y_{2}')| \\ \\
\leq |d_{X}(x_{1},x_{2})-d_{X}(x_{1}',x_{2}')|+|d_{Y}(y_{1},y_{2})-d_{Y}(y_{1}',y_{2}')|
\leq d_{X}(x_{1},x_{1}')+d_{Y}(y_{1},y_{1}')+d_{X}(x_{2},x_{2}')+d_{Y}(y_{2},y_{2}') \\ \\
$
Last inequality is consequence of triangle inequalities of $d_{X}$ and $d_{Y}$.

Consequently $|\ |d_{X}(x_{1},x_{2})-d_{Y}(y_{1},y_{2})|-|d_{X}(x_{1}',x_{2}')-d_{Y}(y_{1}',y_{2}')| \ | \leq d_{X}(x_{1},x_{1}')+d_{Y}(y_{1},y_{1}')+d(a_{1},a_{1}')+d(b_{1},b_{1}')+d_{X}(x_{2},x_{2}')+d_{Y}(y_{2},y_{2}')+d(a_{2},a_{2}')+d(b_{2},b_{2}')$ and so $\phi_{1}$ is 1-Lipschitz \textit{w.r.t} $\hat{d}$.
\\ \\
Regarding $\phi_{2}$ :

$|\phi_{1}(z=(w_{1},w_{2}))-\phi_{1}(z'=(w_{1}',w_{2}'))| \\ \\ 
=|\phi_{2}(w_{1}{=}(x_{1},a_{1},y_{1},b_{1}),w_{2}{=}(x_{2},a_{2},y_{2},b_{2}))-\phi_{2}(w_{1}'{=}(x_{1}',a_{1}',y_{1}',b_{1}'),w_{2}'{=}(x_{2}',a_{2}',y_{2}',b_{2}'))| \\ \\
=|d(a_{1},b_{1})-d(a_{1}',b_{1}')  | \leq d(a_{1},a_{1}')+d(b_{1},b_{1}') \\ \\
\leq  \hat{d}((z=(w_{1},w_{2}),z'=(w_{1}',w_{2}'))$
Last inequality is consequence of triangle inequalities of $d$. So $\phi_{2}$ is 1-Lipschitz \textit{w.r.t} $\hat{d}$

Since all metric spaces are compact $\phi_{1}$ and $\phi_{2}$ are bounded by a constant $M_{1}$ and $M_{2}$.  Then the restriction $g_{1}$ of $g$ on $[0,M_{1}]$  and the restriction $g_{2}$ of $g$ on $[0,M_{2}]$ are Lipschitz with constants bounded by $qM_{1}^{q-1}$ and $qM_{2}^{q-1}$.

Since $\phi=(1-\alpha) g_{1}\circ\ \phi_{1}+\alpha g_{2}\circ\ \phi_{2}$ then $\phi$ is lipschitz with constant $(1-\alpha) qM_{1}^{q-1}+\alpha qM_{2}^{q-1}$, so by lemma \ref{minimizer} there exists a minimizer for $p=1$. For $p>1$ we can have the same reasoning to show that $\phi^{p}$ is lipschitz with constant $p( (1-\alpha) qM_{1}^{q-1}+\alpha qM_{2}^{q-1})^{p-1}$ so there exists a minimizer for all $p$.

\end{proof}

\begin{proposition}{Equality relation}

$d^{\Omega}_{FGW,\alpha,p,q}(\mu,\nu) =0$ \textit{iff} there exists $f=(f_{1},f_{2}): X\times A \rightarrow Y \times B$ verifying (\ref{metrictheo:conservation}), (\ref{metrictheo:conservationfeature}) and (\ref{metrictheo:isometry}).

\end{proposition}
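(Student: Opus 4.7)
The plan is to build an explicit coupling from the given $f$. Set $\pi = (\mathrm{Id}_{X\times A}, f)\#\mu$, which lies in $\Pi(\mu,\nu)$ since its first marginal is $\mu$ and, by $f\#\mu = \nu$, its second marginal is $\nu$. Because $\pi$ is supported on the graph of $f$ and, at every pair of points of this graph, the integrand
\[
(1-\alpha)\,d(a,b)^{q} + \alpha\,\bigl|d_{X}(x,x') - d_{Y}(y,y')\bigr|^{q}
\]
vanishes (the first term because $f_{2}=\mathrm{Id}$ forces $b=a$, the second because $f_{1}$ isometric forces $d_{Y}(f_{1}(x),f_{1}(x'))=d_{X}(x,x')$), we immediately obtain $\lossfgw(\pi)=0$, hence $d^{\Omega}_{FGW,\alpha,p,q}(\mu,\nu)=0$.

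\textbf{Reverse direction, extracting conditions on the support.} Invoke the existence result to obtain an optimal $\pi^{*}\in\Pi(\mu,\nu)$ with $\lossfgw(\pi^{*})=0$. The integrand is continuous and non-negative on the compact space $(X\times A\times Y\times B)^{2}$, so it vanishes identically on $\mathrm{supp}(\pi^{*})\times\mathrm{supp}(\pi^{*})$. Because $\alpha\in(0,1)$ each summand must vanish separately, yielding: (i) $a=b$ whenever $((x,a),(y,b))\in\mathrm{supp}(\pi^{*})$, and (ii) $d_{X}(x,x')=d_{Y}(y,y')$ for any two pairs lying in $\mathrm{supp}(\pi^{*})$. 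Specializing (ii) to $x=x'$ shows that the $Y$-component of the image depends only on $x$, which allows me to define an isometric map $f_{1}\colon P_{X}(P_{1}(\mathrm{supp}\,\pi^{*}))\to Y$ by $f_{1}(x)=y$; combined with (i) and setting $f_{2}(a)=a$, this produces a candidate partial map $f=(f_{1},f_{2})$.

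\textbf{Extension and measure preservation.} Since $\mu$ and $\nu$ are fully supported and $P_{i}\#\pi^{*}$ equals $\mu$ or $\nu$, the sets $P_{1}(\mathrm{supp}\,\pi^{*})$ and $P_{2}(\mathrm{supp}\,\pi^{*})$ are dense in $X\times A$ and $Y\times B$ respectively. In particular $f_{1}$ is defined on a dense subset of $X$; being an isometry it is uniformly continuous and extends uniquely to an isometric map $f_{1}\colon X\to Y$ by completeness of the compact space $(Y,d_{Y})$. Its image is compact and contains the dense set $P_{Y}(P_{2}(\mathrm{supp}\,\pi^{*}))$, hence $f_{1}$ is surjective. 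A parallel argument on the feature coordinate shows that $P_{A}(P_{1}(\mathrm{supp}\,\pi^{*}))=P_{B}(P_{2}(\mathrm{supp}\,\pi^{*}))$ is dense in both $A$ and $B$, forcing $A=B$ and $f_{2}=\mathrm{Id}_{A}$ after extension. Finally, the graph of the extended $f$ is closed and contains $\mathrm{supp}(\pi^{*})$, so $\pi^{*}$ is concentrated on it; since the graph is in homeomorphic bijection with $X\times A$ via $P_{1}$, this forces $\pi^{*}=(\mathrm{Id},f)\#\mu$, and taking second marginals yields $f\#\mu=\nu$.

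\textbf{Main obstacle.} The technical heart of the argument is the passage from support-level conditions to global ones: proving the well-definedness and isometric nature of $f_{1}$ purely from (ii), exploiting the full-support hypothesis to obtain density in $X$, $A$, $Y$, $B$ so that the isometric extension is surjective and $A=B$ really holds, and upgrading the concentration of $\pi^{*}$ on the closed graph of $f$ to the measure-preservation identity $f\#\mu=\nu$ (rather than only the trivial identity on a dense subset).
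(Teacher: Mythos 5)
Your forward direction is identical to the paper's: push $\mu$ forward by $(\mathrm{Id},f)$ and observe the integrand vanishes on the graph. Your reverse direction, however, takes a genuinely different and correct route. The paper reduces the problem to the Gromov--Wasserstein metric theorem: it equips $X\times A$ and $Y\times B$ with the averaged distances $\tfrac12 d_X+\tfrac12 d$ and $\tfrac12 d_Y+\tfrac12 d$, shows via a binomial expansion that $J_{qp}(d_{X\times A},d_{Y\times B},\pi^*)=0$, invokes M\'emoli's characterization of vanishing $GW$ to obtain an isomorphism $f=(f_1,f_2)$ with $f\#\mu=\nu$, and only then disentangles the two coordinates (using $H_{qp}(\pi^*)=0$ and full support of $\mu_A$ to force $f_2=\mathrm{Id}$, then the averaged-isometry identity to force $f_1$ isometric). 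You instead inline the rigidity argument directly: the continuous nonnegative integrand vanishes on $\mathrm{supp}(\pi^*)\times\mathrm{supp}(\pi^*)$, $\alpha\in(0,1)$ separates the two summands, the case $x=x'$ gives well-definedness of $f_1$ on the projected support, and density (from full support of $\mu$ and $\nu$) plus compactness handle the isometric extension, surjectivity, $A=B$, and the concentration of $\pi^*$ on the closed graph, which yields $\pi^*=(\mathrm{Id},f)\#\mu$ and hence $f\#\mu=\nu$. Your version is self-contained and avoids the slightly delicate step in the paper where one must check that the $GW$-optimal plan for the averaged distances is induced by a map; the paper's version is shorter on the page because it outsources exactly that work to the cited $GW$ theorem. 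All the steps you flag as the ``main obstacle'' are handled correctly in your sketch (well-definedness from $d_Y(y,y')=d_X(x,x)=0$, density of the projected supports, closedness of the graph of the continuous extension), so I see no gap.
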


\begin{proof}

First, let suppose that such an application exists. We consider the map $\pi=( I_{d} \times f ) \# \mu \in \couplingset(\mu,\nu)$. Then : \\ \\
{\small
$ \lossfgw(\pi){=}\int\limits_{(X\times A \times Y \times B)^{2}}\!\!\!\!\!\!\!\!\!\!\!\!\!\! \big((1-\alpha) d(a,b)^{q} +\alpha L((x,y,x',y')^{q} \big)^{p} \,d\pi((x,a),(y,b))\,d\pi((x',a'),(y',b')\!) \\ \\
=\int\limits_{(X\times A )^{2}} \big((1-\alpha) d(a,f_{2}(a))^{q} +\alpha L((x,f_{1}(x),x',f_{1}(x'))^{q} \big)^{p} \,d\mu(x,a)d\mu(x',a') \\ \\
=\int\limits_{(X\times A )^{2}} \big((1-\alpha) d(a,f_{2}(a))^{q} +\alpha |d_{X}(x,x')-d_{Y}(f_{1}(x)f_{1}(x'))|^{q} \big)^{p} \,d\mu(x,a)d\mu(x',a')=0
$
} 

Since $f_{2}(a)=a$ and $f_{1}$ is an isometry. So $\pi$ is the optimal map and $d^{\Omega}_{FGW,\alpha,p,q}(\mu,\nu) =0$.

Conversly suppose that $d^{\Omega}_{FGW,\alpha,p,q}(\mu,\nu) =0$. To prove the existence of a map $f=(f_{1},f_{2}): X\times A \rightarrow Y \times B$ we will use the Gromov-Wasserstein properties. We are looking for a vanishing Gromov-Wassersein distance between the spaces $X\times A$ and $Y\times B$ equipped with our two measures $\mu$ and $\nu$ and two distances applications.

More precisely, we define for $((x,a),(y,b),(x',a'),(y',b')) \in (X\times A \times Y \times B)^{2}$ :

$$ d_{X\times A} ((x,a),(x',a'))=\frac{1}{2} d_{X}(x,x')+\frac{1}{2} d(a,a')$$
and
$$ d_{Y\times B} ((y,b),(y',b'))=\frac{1}{2} d_{Y}(y,y')+\frac{1}{2} d(b,b')$$

We will prove that $d_{GW,p}(d_{X\times A},d_{Y\times B},\mu,\nu)=0$.

Let $\pi \in \couplingset(\mu,\nu)$ be any admissible transportation plan. Then for $n\geq 1$ : \\ \\
$J_{n}(d_{X\times A},d_{Y\times B},\pi)=\!\!\!\!\!\!\!\!\!\!\!\!\!\!\int\limits_{(X\times A \times Y \times B)^{2}}\!\!\!\!\!\!\!\!\!\!\!\!\!\! L(x,y,x',y')^{n} d\pi((x,a),(y,b))\,d\pi((x',a'),(y',b')\!) \\ \\
{=}\int\limits_{(X\times A \times Y \times B)^{2}}\!\!\!\!\!\!\!\!\!\!\!\!\!\! |\frac{1}{2} (d_{X}(x,x')-d_{Y}(y,y'))+\frac{1}{2} (d(a,a')-d(b,b')) |^{n} d\pi((x,a),(y,b))\,d\pi((x',a'),(y',b')\!) \\ \\
{\leq} \int\limits_{(X\times A \times Y \times B)^{2}}\!\!\!\!\!\!\!\!\!\!\!\!\!\! \frac{1}{2} | d_{X}(x,x')-d_{Y}(y,y')|^{n} d\pi((x,a),(y,b))\,d\pi((x',a'),(y',b')\!)\\ 
+\int\limits_{(X\times A \times Y \times B)^{2}}\!\!\!\!\!\!\!\!\!\!\!\!\!\! \frac{1}{2} |d(a,a')-d(b,b') |^{n} d\pi((x,a),(y,b))\,d\pi((x',a'),(y',b')\!)
$
using Jensen inequality with convexity of $t\rightarrow t^{n}$ and subadditivity of |.| . We note $(*)$ the first term and $(**)$ the second term. By the triangle inequality properties of $d$ we have : \\ \\
(**)$\leq \int\limits_{(X\times A \times Y \times B)^{2}}\!\!\!\!\!\!\!\!\!\!\!\!\!\! \frac{1}{2} (d(a,b)+d(a',b') )^{n} d\pi((x,a),(y,b))\,d\pi((x',a'),(y',b')\!) \stackrel{def}{=} M_{n}(\pi)$ such that we have shown :

\begin{equation}
\label{relationgwgw}
\forall \pi \in \couplingset(\mu,\nu), \forall n \geq 1, J_{n}(d_{X\times A},d_{Y\times B},\pi) \leq \frac{1}{2} J_{n}(d_{X},d_{Y},\pi) + M_{n}(\pi)
\end{equation}

Now let $\pi_{*}$ be the optimal coupling for $d^{\Omega}_{FGW,\alpha,p,q}$ between $\mu$ and $\nu$. By hypothesis $d^{\Omega}_{FGW,\alpha,p,q}(\mu,\nu)=0$ then since $\lossfgw(\pi) \geq \alpha^{p} J_{qp}(d_{X},d_{Y},\pi) $ and $\lossfgw(d_{X},d_{Y},\pi) \geq H_{qp}(\pi) $ :

 $$J_{qp}(d_{X},d_{Y},\pi{*})=0$$

and

$$H_{qp}(\pi^{*})=0$$

 Then $\int\limits_{(X\times A\times Y \times B)} d(a,b)^{qp} d\pi^{*}((x,a),(y,b))=0$. Since all terms are positive we can conclude that $\forall m \in \mathbb{N}^{*}, \int\limits_{(X\times A\times Y \times B)} d(a,b)^{m} d\pi^{*}((x,a),(y,b))=0$. In this way : \\ \\

$ M_{qp}(\pi^{*})=\frac{1}{2} \int\limits_{(X\times A\times Y \times B)^{2}} \sum_{h} {qp\choose h} d(a,b)^{h} d(a',b')^{qp-h} d\pi^{*}((x,a),(y,b))d\pi((x',a'),(y',b')) \\ \\
= \sum_{h} {qp\choose h} \big(\int\limits_{(X\times A\times Y \times B)} d(a,b)^{h} d\pi^{*}((x,a),(y,b))\big) \big(\int\limits_{(X\times A\times Y \times B)} d(a',b')^{qp-h} d\pi((x',a'),(y',b'))\big) \\ \\ 
=0
$

Using eq (\ref{relationgwgw}) we have shown : $$J_{qp}(d_{X\times A},d_{Y\times B},\pi_{*})=0$$

So $\pi_{*}$ is the optimal coupling for $d_{GW,p}(\mu,\nu)$ (for the distances $d_{X\times A}$ and $d_{Y\times B}$) and $d_{GW,p}(\mu,\nu)=0$. Thanks to Gromov-Wasserstein properties (see \cite{Memoli:2004:CPC:1057432.1057436})  there exists $f=(f_{1},f_{2}): X\times A \rightarrow Y \times B$ which verifies \eqref{metrictheo:conservation} and

$$\forall ((x,a),(x',a')) \in (X\times A)^{2}, \ d_{X\times A} ((x,a),(x',a'))=d_{Y\times B} (f(x,a),f(x',a'))$$

or equivalently :

\begin{equation}
\label{hat}
\forall ((x,a),(x',a')) \in (X\times A)^{2}, \ \frac{1}{2} d_{X}(x,x')+\frac{1}{2} d(a,a')=\frac{1}{2} d_{Y}(f_{1}(x),f_{1}(x'))+\frac{1}{2} d(f_{2}(a),f_{2}(a'))
\end{equation}

Moreover since $\pi^{*}$ is the optimal coupling for $d_{GW,p}(d_{X\times A},d_{Y\times B},\mu,\nu)$ leading to a cost nul, then  $\pi^{*}$ is supported by $f$, in particular $\pi^{*}=(I_{d} \times f)$. So  $H_{qp}(\pi^{*})=\int\limits_{(X\times A\times Y \times B)} d(a,b)^{qp} d\pi^{*}((x,a),(y,b))=\int\limits_{A}d(a,f_{2}(a))^{qp} d\mu_{A}(a)=0$. Since $\mu_{A}$ fully supported on $A$ we can conclude that $f_{2}=I_{d}$.

In this way, using the equality (\ref{hat}) we can conclude that : $$\forall (x,x') \in X\times X, \ d_{X}(x,x')=d_{Y}(f_{1}(x),f_{1}(x'))$$

\end{proof}

\begin{proposition}{Symmetry and triangle Inequality}
\label{triangleprop}
$d^{\Omega}_{FGW,\alpha,p,q}$ is symmetric and for $q=1$ satisfies the triangle inequality. For $q\geq 2$ the triangular inequality is relaxed by a factor $2^{q-1}$ 
\end{proposition}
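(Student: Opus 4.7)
My plan is to handle symmetry first (nearly immediate) and then tackle the triangle inequality via the classical gluing lemma, splitting according to whether $q=1$ or $q\geq 2$.

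For symmetry, given $\pi \in \Pi(\mu,\nu)$ on $(X\times A)\times (Y\times B)$, I would define $\tilde\pi$ on $(Y\times B)\times (X\times A)$ by the swap map $((x,a),(y,b))\mapsto ((y,b),(x,a))$. Since $d(a,b)=d(b,a)$ and $L(x,y,x',y')=L(y,x,y',x')$, the integrand in $\lossfgw$ is invariant under the swap, so $\lossfgw(\pi)=\lossfgw(\tilde\pi)$; infimizing over the (bijectively related) sets $\Pi(\mu,\nu)$ and $\Pi(\nu,\mu)$ gives the symmetry.

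For the triangle inequality, let $\mu_1,\mu_2,\mu_3$ be three structured objects on the respective product spaces, and let $\pi_{12}\in\Pi(\mu_1,\mu_2)$ and $\pi_{23}\in\Pi(\mu_2,\mu_3)$ be the optimal couplings (which exist by the preceding proposition). Invoking the gluing lemma (Lemma 7.6 in \cite{Villani}), there exists $\gamma$ on the triple product space whose projections onto consecutive pairs of coordinates are $\pi_{12}$ and $\pi_{23}$; the projection $\pi_{13}=P_{1,3}\#\gamma$ then belongs to $\Pi(\mu_1,\mu_3)$. The first key step is the pointwise bound: by the triangle inequality for $d$ on $\Omega$ and for the absolute value,
\begin{equation*}
d(a_1,a_3)\leq d(a_1,a_2)+d(a_2,a_3),\qquad L_{13}\leq L_{12}+L_{23},
\end{equation*}
where $L_{ij}=|d_{X_i}(x_i,x_i')-d_{X_j}(x_j,x_j')|$. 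Hence setting $c_{ij}=(1-\alpha)d(a_i,a_j)^q+\alpha L_{ij}^q$, for $q=1$ we get $c_{13}\leq c_{12}+c_{23}$, while for $q\geq 2$ we apply \eqref{holder} to obtain $c_{13}\leq 2^{q-1}(c_{12}+c_{23})$.

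The second key step is lifting and Minkowski: since $\pi_{13}$ is a marginal of $\gamma$, the $L^p$ cost against $\pi_{13}\otimes\pi_{13}$ of any function depending only on the $(1,3)$-coordinates coincides with its $L^p$ cost against $\gamma\otimes\gamma$. Thus
\begin{equation*}
\lossfgw(\pi_{13})^{1/p}=\Bigl(\int c_{13}^{\,p}\,d\gamma\,d\gamma\Bigr)^{1/p}\leq K\Bigl(\int (c_{12}+c_{23})^p\,d\gamma\,d\gamma\Bigr)^{1/p},
\end{equation*}
with $K=1$ if $q=1$ and $K=2^{q-1}$ if $q\geq 2$. Applying Minkowski's inequality in $L^p(\gamma\otimes\gamma)$ and using that the $(1,2)$- and $(2,3)$-marginals of $\gamma$ are $\pi_{12}$ and $\pi_{23}$, the two resulting terms equal $\lossfgw(\pi_{12})^{1/p}$ and $\lossfgw(\pi_{23})^{1/p}$ respectively. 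Since $\pi_{12}$ and $\pi_{23}$ were chosen optimal and $\pi_{13}$ is admissible between $\mu_1$ and $\mu_3$, this yields
\begin{equation*}
d^{\Omega}_{FGW,\alpha,p,q}(\mu_1,\mu_3)\leq K\bigl(d^{\Omega}_{FGW,\alpha,p,q}(\mu_1,\mu_2)+d^{\Omega}_{FGW,\alpha,p,q}(\mu_2,\mu_3)\bigr),
\end{equation*}
giving the genuine triangle inequality for $q=1$ and the relaxed one with constant $2^{q-1}$ for $q\geq 2$.

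The main obstacle I anticipate is bookkeeping during the lift to $\gamma\otimes\gamma$: one must verify that $L_{13}$ and $d(a_1,a_3)$ — a priori functions of variables that do \emph{not} appear in $\pi_{12}$ or $\pi_{23}$ individually — can be cleanly bounded through the intermediate $(x_2,a_2)$ coordinates while keeping the $L^p$ integration consistent. Everything else is a straightforward combination of Minkowski and the elementary inequality \eqref{holder}.
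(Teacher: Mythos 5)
Your proposal is correct and follows essentially the same route as the paper: glue the two optimal couplings, project onto the outer coordinates, bound the cost pointwise via the triangle inequalities for $d$ and for $|\cdot|$ together with the elementary inequality \eqref{holder} (yielding the factor $2^{q-1}$, which is $1$ at $q=1$), and finish with Minkowski in $L^p(\gamma\otimes\gamma)$. The only difference is that you spell out the symmetry argument via the swap map, which the paper leaves implicit.
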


To prove this result we will use the following lemma :

\begin{lemma}

Let $(X\times A,d_{X},\mu),(Y \times B,d_{Y},\beta),(Z,\times C,d_{Z}),\nu) \in H(\Omega)^{3}$.
For $(x,a),(x',a') \in (X \times A)^{2}$, $(y,b),(y',b') \in (Y \times B)^{2}$ and $(z,c),(z',c') \in (Z \times C)^{2}$ we have :

\begin{equation}
\label{ineqL}
L(x,z,x',z')^{q} \leq 2^{q-1}(L(x,y,x',y')^{q}+L(y,z,y',z')^{q})
\end{equation}

\begin{equation}
\label{ineqD}
d(a,c)^{q} \leq 2^{q-1}(d(a,b)^{q}+d(b,c)^{q})
\end{equation}

\end{lemma}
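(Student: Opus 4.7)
The plan is to reduce both inequalities to two ingredients: the triangle inequality (for $|\cdot|$ in the first case, and for the metric $d$ in the second case), followed by the elementary inequality $(x+y)^{q}\le 2^{q-1}(x^{q}+y^{q})$ already established in equation \eqref{holder}.

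For the first inequality, I would unfold the definition of $L$: one has $L(x,z,x',z')=|d_{X}(x,x')-d_{Z}(z,z')|$, and similarly for the other two terms. Introducing and subtracting $d_{Y}(y,y')$ inside the absolute value gives
\begin{equation*}
L(x,z,x',z')=\bigl|\,(d_{X}(x,x')-d_{Y}(y,y'))+(d_{Y}(y,y')-d_{Z}(z,z'))\,\bigr|,
\end{equation*}
so the triangle inequality for $|\cdot|$ yields $L(x,z,x',z')\le L(x,y,x',y')+L(y,z,y',z')$. Raising both sides to the power $q\ge 1$ and applying \eqref{holder} to the right-hand side immediately gives \eqref{ineqL}.

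For the second inequality, I would simply use the triangle inequality for the metric $d$ on $\Omega$, namely $d(a,c)\le d(a,b)+d(b,c)$, then raise to the $q$-th power and again invoke \eqref{holder} to obtain \eqref{ineqD}.

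Neither step contains any real obstacle: both reduce to the standard ``triangle plus convexity of $t\mapsto t^{q}$'' pattern, and the only care needed is to note that all quantities involved are nonnegative so that the elementary inequality \eqref{holder} applies. The main point to emphasize in the write-up is that this lemma is exactly what will later let the triangle inequality of $d^{\Omega}_{FGW,\alpha,p,q}$ be recovered (with the $2^{q-1}$ relaxation factor announced in Proposition~\ref{triangleprop}) by gluing couplings between the three structured objects.
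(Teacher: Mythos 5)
Your proof is correct and matches the paper's intent: the paper's own proof is just the one-line remark that the lemma is a ``direct consequence of \eqref{holder} and triangle inequalities,'' and your write-up (insert-and-subtract $d_{Y}(y,y')$ inside the absolute value for $L$, plain triangle inequality of $d$ for the feature term, then apply \eqref{holder}) is exactly the argument being invoked. No gap.
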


\begin{proof}
Direct consequence of (\ref{holder}) and triangle inequalities of $d,d_{X},d_{Y},d_{Z}$
\end{proof}

We now prove the proposition \ref{triangleprop}

\begin{proof}
To prove the triangle inequality of $d^{\Omega}_{FGW,\alpha,p,q}$ distance for arbitrary measures we will use the Gluing lemma which stresses the existence of couplings with a prescribed structure. Let $(X\times A,d_{X},\mu),(Y \times B,d_{Y},\beta),(Z,\times C,d_{Z}),\nu) \in H(\Omega)^{3}$

Let $\pi_{1} \in \couplingset(\mu,\beta)$ and $\pi_{2} \in \couplingset(\beta,\nu)$ be the optimal transportation plans for the Fused Gromov-Wasserstein distance between $\mu$, $\beta$ and $\beta$, $\nu$ respectively. By the Gluing Lemma (see \cite{Villani} and lemma 5.3.2 in \cite{ambrosio2005gradient}) there exists a probability measure $\pi \in P\big((X\times A) \times (Y \times B) \times (Z \times C)\big)$ with marginals $\pi_{1}$ on $(X\times A) \times (Y \times B)$ and $\pi_{2}$ on $(Y \times B) \times (Z \times C)$. Let $\pi_{3}$ be the marginal of $\pi$ on $(X\times A)\times (Z \times C)$. By construction $\pi_{3} \in \Pi(\mu,\nu)$. So by suboptimality of $\pi_{3}$:
\\ \\
$d^{\Omega}_{FGW,\alpha,p,q}(d_{X},d_{Z},\mu,\nu) 
{\leq}\big(\!\!\!\!\!\!\!\!\!\!\!\!\!\! \int\limits_{(X\times A \times Z \times C)^{2}}\!\!\!\!\!\!\!\!\!\!\!\!\!\! \big((1-\alpha) d(a,c)^{q} +\alpha L(x,z,x',z')^{q} \big)^{p} d\pi_{3}((x,a),(z,c))d\pi_{3}((x',a'),(z',c'))\big)^{\frac{1}{p}} \\ \\
{=}\big(\!\!\!\!\!\!\!\!\!\!\!\!\!\! \int\limits_{(X\times A \times Y \times B \times Z \times C)^{2}}\!\!\!\!\!\!\!\!\!\!\!\!\!\! \big((1-\alpha) d(a,c)^{q} +\alpha L(x,z,x',z')^{q} \big)^{p} d\pi((x,a),(y,b),(z,c))d\pi((x',a'),(y',b'),(z',c'))\big)^{\frac{1}{p}} \\ \\
\stackrel{(*)}{\leq}2^{q-1} \big(\!\!\!\!\!\!\!\!\!\!\!\!\!\! \int\limits_{(X\times A \times Y \times B \times Z \times C)^{2}}\!\!\!\!\!\!\!\!\!\!\!\!\!\! \big((1-\alpha) d(a,b)^{q}+(1-\alpha)d(b,c)^{q} +\alpha L(x,y,x',y')^{q}+\alpha L(y,z,y',z')^{q} \big)^{p} \\ \\
d\pi((x,a),(y,b),(z,c))d\pi((x',a'),(y',b'),(z',c'))\big)^{\frac{1}{p}} \\ \\
\stackrel{(**)}{\leq}2^{q-1} \bigg(\big(\!\!\!\! \int\limits_{(X\times A \times Y \times B \times Z \times C)^{2}}\!\!\!\!\!\!\!\!\!\!\!\!\!\! \big((1-\alpha) d(a,b)^{q} +\alpha L(x,y,x',y')^{q}\big)^{p} d\pi((x,a),(y,b),(z,c))d\pi((x',a'),(y',b'),(z',c'))\big)^{\frac{1}{p}} \\ \\
{+} \big(\!\!\!\!\!\!\!\!\!\!\!\!\!\! \int\limits_{(X\times A \times Y \times B \times Z \times C)^{2}}\!\!\!\!\!\!\!\!\!\!\!\!\!\! \big((1-\alpha) d(b,c)^{q} +\alpha L(y,z,y',z')^{q}\big)^{p} d\pi((x,a),(y,b),(z,c))d\pi((x',a'),(y',b'),(z',c'))\big)^{\frac{1}{p}} \bigg) \\ \\
{=}2^{q-1} \bigg(\big(\!\!\!\! \int\limits_{(X\times A \times Y \times B )^{2}}\!\!\!\!\!\!\!\!\!\!\!\!\!\! \big((1-\alpha) d(a,b)^{q} +\alpha L(x,y,x',y')^{q}\big)^{p} d\pi_{1}((x,a),(y,b)) d\pi_{1}((x',a'),(y',b'))\big)^{\frac{1}{p}} \\ \\
{+} \big(\!\!\!\!\!\!\!\!\!\!\!\!\!\! \int\limits_{(Y \times B \times Z \times C)^{2}}\!\!\!\!\!\!\!\!\!\!\!\!\!\! \big((1-\alpha) d(b,c)^{q} +\alpha L(y,z,y',z')^{q}\big)^{p} d\pi_{2}((y,b),(z,c))\,d\pi_{2}((y',b'),(z',c'))\big)^{\frac{1}{p}} \bigg) \\ \\
{=} 2^{q-1}(d^{\Omega}_{FGW,\alpha,p,q}(\mu,\beta){+}d^{\Omega}_{FGW,\alpha,p,q}(\beta,\nu))
$
\\ \\
with (*) comes from (\ref{ineqL}) and (\ref{ineqD}) and (**) is Minkowski inequality. So when $q=1$ $d^{\Omega}_{FGW,\alpha,p,q}$ satisfies the triangle inequality and when $q>1$ $d^{\Omega}_{FGW,\alpha,p,q}$ satisfies a relaxed triangle inequality so that it defines a semi-metric as described previously.

\end{proof}

\subsection{Proof of Prop. \ref{concentration} Convergence and concentration inequality}

\begin{proof}

The proof of the convergence in $FGW$ dervies directly from the weak convergence of the empirical measure and lemma \ref{minimizer}. For the concentration \eqref{samespace} is valid between $\mu_{n}$ and $\mu$ since they are both in the same ground space. Then we have :
$$d^{\Omega}_{FGW,\alpha,p,1}(\mu_{n},\mu)^{p} \leq 2d^{X \times \Omega}_{W,p}(\mu_{n},\mu)^{p} \implies \mathbb{E}[d^{\Omega}_{FGW,\alpha,p,1}(\mu_{n},\mu)^{p}] \leq 2 \mathbb{E}[d^{X \times \Omega}_{W,p}(\mu_{n},\mu)^{p}]$$

We can directly apply theorem 1 in \cite{weedbach2017} to state the inequality.

\end{proof}

\subsection{Proof of theorem \ref{interpolationtheorem} Interpolation properties between GW and W}

\begin{proof}

To prove the first point of the theorem we want to have a converse inequality of (\ref{wassinequality}) and (\ref{gromovinequality}) in the limit cases.

Let $\pi_{OT} \in \couplingset(\mu_{A},\nu_{B})$ the optimal coupling for the pq-Wasserstein distance between $\mu_{A}$ and $,\nu_{B}$. We can use the same Gluing lemma (lemma 5.3.2 in \cite{ambrosio2005gradient}) to construct :

$$
  \rho \in P( 
      \mathrlap{\overbrace{\phantom{(c - 2)}}^{\text{$\mu$}}}
      X \times 
      \mathrlap{\underbrace{\phantom{2d) + (}}_{\text{$\pi_{OT}$}}}
      A \times
      \mathrlap{\overbrace{\phantom{2d) + (}}^{\text{$\nu$}}}
	B \times Y
      )
$$

such that $\rho \in \couplingset(\mu,\nu)$ and $P_{2,3} \# \rho =\pi_{OT}$.

Moreover we have :
\begin{equation}
\label{rhorho}
\underset{A\times B}{\int}d(a,b)^{pq} d\pi_{OT}(a,b)=\underset{X \times A\times B \times Y}{\int} d(a,b)^{pq} d\rho(x,a,b,y)
\end{equation}

Let $\alpha \geq 0$ and $\pi_{\alpha}$ optimal plan for the fused Gromov-Wasserstein distance between $\mu$, $\nu$. 

We can deduce that :

$d^{\Omega}_{FGW,\alpha,p,q}(\mu,\nu)^{p}- (1-\alpha)^{p}d^{\Omega}_{W,pq}(\mu_{A},\nu_{B})^{pq} \\
{=}\!\!\!\!\!\!\!\!\!\!\!\!\!\! {\small \int\limits_{(X\times A \times Y \times B)^{2}}\!\!\!\!\!\!\!\!\!\!\!\!\!\!  \big((1{-}\alpha) d(a,b)^{q} +\alpha L(x,y,x',y')^{q}\big)^{p}d\pi_{\alpha}((x,a),(y,b))d\pi_{\alpha}((x',a'),(y',b'))}{-}\underset{A\times B}{\int} (1{-}\alpha)^{p}d(a,b)^{pq}d\pi_{OT}(a,b) \\ \\
{\stackrel{(*)}{\leq}} {\small \int\limits_{(X\times A \times Y \times B)^{2}}\!\!\!\!\!\!\!\!\!\!\!\!\!\!  \big((1{-}\alpha) d(a,b)^{q} +\alpha L(x,y,x',y')^{q} \big)^{p} d\rho(x,a,b,y)d\rho(x',a',b',y')}
{-}\!\!\!\!\!\!\underset{X\times A\times Y \times B}{\int}\!\!\!\!\!\!  (1{-}\alpha)^{p} d(a,b)^{pq} d\rho(x,a,b,y) \\ \\
=(1-\alpha)^{p}{\small \int\limits_{(X\times A \times Y \times B)^{2}}\!\!\!\!\!\!\!\!\!\!\!\!\!\!  d(a,b)^{pq} d\rho(x,a,b,y)d\rho(x',a',b',y')}
{-}(1-\alpha)^{p}\underset{X\times A\times Y \times B}{\int}\!\!\!\!\!\!  d(a,b)^{pq} d\rho(x,a,b,y) \\ \\
{+}\sum_{k=0}^{p-1} {p\choose k}(1-\alpha)^{k}\alpha^{p-k} \int\limits_{(X\times A \times Y \times B)^{2}} d(a,b)^{qk}L(x,y,x',y')^{q(p-k)} d\rho(x,a,b,y)d\rho(x',a',b',y')\\ \\
{=} \sum_{k=0}^{p-1} {p\choose k}(1-\alpha)^{k}\alpha^{p-k} \int\limits_{(X\times A \times Y \times B)^{2}} d(a,b)^{qk}L(x,y,x',y')^{q(p-k)} d\rho(x,a,b,y)d\rho(x',a',b',y')$

We note $H_{k}=\int\limits_{(X\times A \times Y \times B)^{2}} d(a,b)^{qk}L(x,y,x',y')^{q(p-k)} d\rho(x,a,b,y)d\rho(x',a',b',y')$

Using (\ref{wassinequality})  we have shown that:

$$ (1-\alpha)(d^{\Omega}_{W,pq}(\mu_{A},\nu_{B}))^{q} \leq d^{\Omega}_{FGW,\alpha,p,q}(\mu,\nu) \leq  \big((1-\alpha)^{p} (d^{\Omega}_{W,pq}(\mu_{A},\nu_{B}))^{pq}+\sum_{k=0}^{p-1} {p\choose k}(1-\alpha)^{k}\alpha^{p-k}H_{k}\big)^{\frac{1}{p}}$$

So $\lim\limits_{\alpha \rightarrow 0} d^{\Omega}_{FGW,\alpha,p,q}(\mu,\nu)=(d^{\Omega}_{W,pq}(\mu_{A},\nu_{B}))^{q}$
 \\ \\
 
For the case $\alpha \rightarrow 1$ we rather consider $\pi_{GW} \in \couplingset(\mu_{X},\nu_{Y})$ the optimal coupling for the pq-Gromov-Wasserstein distance between $\mu_{X}$ and $,\nu_{Y}$ and we construct

$$
  \gamma \in P( 
      \mathrlap{\overbrace{\phantom{(c - 2)}}^{\text{$\mu$}}}
      A \times 
      \mathrlap{\underbrace{\phantom{2d) + (}}_{\text{$\pi_{GW}$}}}
      X \times
      \mathrlap{\overbrace{\phantom{2d) + (}}^{\text{$\nu$}}}
	Y \times B
      )
$$

such that $\gamma \in \couplingset(\mu,\nu)$ and $P_{2,3} \# \rho =\pi_{GW}$. We have :

\begin{equation}
\label{gammagamma}
\underset{X\times Y \times X \times Y}{\int}\!\!\!\!\!\!\!\!\!\!L(x,y,x',y')^{pq} d\pi_{GW}(x,y)d\pi_{GW}(x',y'){=}\!\!\!\!\!\!\!\!\!\!\underset{A \times X\times Y \times B}{\int}\!\!\!\!\!\!\!\!\!\! L(x,y,x',y')^{pq} d\gamma(a,x,y,b) d\gamma(a',x',y',b')
\end{equation}

So by suboptimality :

$d^{\Omega}_{FGW,\alpha,p,q}(\mu,\nu)^{p}- \alpha^{p} d_{GW,pq}(\mu_{X},\nu_{Y})^{pq} \\
{=}\!\!\!\!\!\!\!\!\!\!\!\!\!\! {\small \int\limits_{(X\times A \times Y \times B)^{2}}\!\!\!\!\!\!\!\!\!\!\!\!\!\!  \big((1{-}\alpha) d(a,b)^{q} +\alpha L(x,y,x',y')^{q}\big)^{p}d\pi_{\alpha}((x,a),(y,b))d\pi_{\alpha}((x',a'),(y',b'))}{-}\underset{A\times B}{\int} \alpha^{p}L(x,y,x',y')^{pq}d\pi_{GW}(a,b) \\ \\
{\stackrel{(*)}{\leq}} {\small \int\limits_{(X\times A \times Y \times B)^{2}}\!\!\!\!\!\!\!\!\!\!\!\!\!\!  \big((1{-}\alpha) d(a,b)^{q} +\alpha L(x,y,x',y')^{q} \big)^{p} d\rho(x,a,b,y)d\rho(x',a',b',y')}
{-}\!\!\!\!\!\!\underset{X\times A\times Y \times B}{\int}\!\!\!\!\!\!  \alpha^{p} L(x,y,x',y')^{pq} d\rho(x,a,b,y) \\ \\
=\alpha^{p}{\small \int\limits_{(X\times A \times Y \times B)^{2}}\!\!\!\!\!\!\!\!\!\!\!\!\!\!  L(x,y,x',y')^{pq} d\rho(x,a,b,y)d\rho(x',a',b',y')}
{-}\alpha^{p}\underset{X\times A\times Y \times B}{\int}\!\!\!\!\!\!  L(x,y,x',y')^{pq} d\rho(x,a,b,y) \\ \\
{+}\sum_{k=0}^{p-1} {p\choose k}(1-\alpha)^{p-k}\alpha^{k} \int\limits_{(X\times A \times Y \times B)^{2}} d(a,b)^{q(p-k)}L(x,y,x',y')^{qk} d\rho(x,a,b,y)d\rho(x',a',b',y')\\ \\
{=}\sum_{k=0}^{p-1} {p\choose k}(1-\alpha)^{p-k}\alpha^{k} \int\limits_{(X\times A \times Y \times B)^{2}} d(a,b)^{q(p-k)}L(x,y,x',y')^{qk} d\rho(x,a,b,y)d\rho(x',a',b',y')$

We note $J_{k}= \int\limits_{(X\times A \times Y \times B)^{2}} d(a,b)^{q(p-k)}L(x,y,x',y')^{qk} d\rho(x,a,b,y)d\rho(x',a',b',y')$

Using (\ref{gromovinequality})  we have shown that:

$$ \alpha(d^{\Omega}_{GW,pq}(\mu_{X},\nu_{Y}))^{q} \leq d^{\Omega}_{FGW,\alpha,p,q}(\mu,\nu) \leq  \big(\alpha^{p} (d^{\Omega}_{GW,pq}(\mu_{X},\nu_{Y}))^{pq}+\sum_{k=0}^{p-1} {p\choose k}(1-\alpha)^{p-k}\alpha^{k} J_{k})^{\frac{1}{p}}$$

so $\lim\limits_{\alpha \rightarrow 1}d^{\Omega}_{FGW,\alpha,p,q}(\mu,\nu)=(d^{\Omega}_{GW,pq}(\mu_{X},\nu_{Y}))^{q}$

\end{proof}

\subsection{Proof of theorem \ref{cstespeedtheo} Constant speed geodesic}
\begin{proof}
Let $t,s \in [0,1]$. Recalling :
$$\forall x_{0},a_{0},x_{1},a_{1} \in X_{0} \times A_{0} \times X_{1} \times A_{1},  \eta_{t}(x_{0},a_{0},x_{1},a_{1})=(x_{0},x_{1},(1-t)a_{0}+ta_{1}) $$

We note $\mathbb{S}_{t}=( X_{0}\times X_{1} \times \hat{A_{t}},(1-t)d_{X_{0}} \oplus t d_{X_{1}}, \mu_{t}=\eta_{t} \# \pi^{*} )$ and $d_{t}=(1-t)d_{X_{0}} \oplus t d_{X_{1}}$. Let $\|.\|$ be any $\ell_m$ norm for $m \geq 1$.

It suffices to prove :

$$d^{\mathbb{R}^{d}}_{FGW,\alpha,p,1}(\mu_{t},\mu_{s}) \leq |t-s| d^{\mathbb{R}^{d}}_{FGW,\alpha,p,1}(\mu_{0},\mu_{1})$$

To do so we consider $\Delta^{t}_{s} \in P(X_{0} \times X_{1} \times \hat{A_{t}} \times X_{0} \times X_{1} \times \hat{A_{s}})$ defined by  $\Delta^{t}_{s}=(\eta_{t} \times \eta_{s} )\# \pi^{*} \in \couplingset(\mu_{t},\mu_{s})$ and the following "diagonal" coupling :
$$d\gamma_{s}^{t}( (x_{0},x_{1}), a , (x_{0}'',x_{1}''),b) = d \Delta^{t}_{s} ((x_{0},x_{1}), a , (x_{0}'',x_{1}''),b) d\delta_{(x_{0},x_{1})}(x_{0}'',x_{1}'')$$

Then $\gamma_{s}^{t} \in P(X_{0} \times X_{1} \times A_{t} \times X_{0} \times X_{1} \times A_{s})$  and since $\Delta^{t}_{s} \in \couplingset(\mu_{t},\mu_{s})$ then $\gamma_{s}^{t} \in \couplingset(\mu_{t},\mu_{s})$ So by suboptimality :

$d^{\mathbb{R}^{d}}_{FGW,\alpha,p,1}(\mu_{t},\mu_{s})^{p} \\ \\
{\leq}\!\!\!\!\!\!\!\!\!\!\!\!\!\!\!\!\!\!\!\!\!\!\!\!\!\!\!\!{\small \int\limits_{(X_{0} \times X_{1} \times \hat{A_{t}} \times X_{0} \times X_{1} \times \hat{A_{s}})^{2}}\!\!\!\!\!\!\!\!\!\!\!\!\!\!\!\!\!\!\!\!\!\!\!\!\!\!\!\!\!\! \big((1-\alpha)d(a,b){+}\alpha |d_{t}[(x_{0},x_{1}),(x_{0}',x_{1}')]{-}d_{s}[(x_{0}'',x_{1}''),(x_{0}''',x_{1}''')] |\big)^{p} \,d\gamma_{s}^{t}(x_{0},x_{1}, a ,x_{0}'',x_{1}'',b)d\gamma_{s}^{t}(x_{0}',x_{1}', a',x_{0}''',x_{1}''',b')} \\ \\
=\!\!\!\!\!\!\!\!\!\!\!\!\!\!\!\!\!\!\!\!\!\!\!\!\!\!\!\!{\small \int\limits_{(X_{0} \times X_{1} \times \hat{A_{t}} \times X_{0} \times X_{1} \times \hat{A_{s}})^{2}}\!\!\!\!\!\!\!\!\!\!\!\!\!\!\!\!\!\!\!\!\!\!\!\!\!\!\!\!\!\! \big((1-\alpha)d(a,b){+}\alpha |d_{t}[(x_{0},x_{1}),(x_{0}',x_{1}')]{-}d_{s}[(x_{0},x_{1}),(x_{0}',x_{1}')] |\big)^{p} \,d\Delta_{s}^{t}(x_{0},x_{1}, a ,x_{0},x_{1},b)d\Delta_{s}^{t}(x_{0}',x_{1}', a',x_{0}',x_{1}',b')} \\ \\
=\int\limits_{(X_{0} \times A_{0} \times X_{1} \times A_{1})^{2}}\big((1-\alpha) \|(1{-}t)a+tb{-}(1{-}s)a{-}sb \| \\ \\
+\alpha |(1{-}t)d_{X_{0}}(x_{0},x_{0}'){+}td_{X_{1}}(x_{1},x_{1}'){-}(1-s)d_{X_{0}}(x_{0},x_{0}'){+}sd_{X_{1}}(x_{1},x_{1}') |\big)^{p} d\pi^{*}(x_{0},a,x_{1},b)d\pi^{*}(x_{0}',a',x_{1}',b') \\ \\
{=} |t{-}s|^{p} \int\limits_{(X_{0} \times A_{0} \times X_{1} \times A_{1})^{2}}\big( (1{-}\alpha) \|a{-}b\| + \alpha |d_{X_{0}}(x_{0},x_{0}'){-}d_{X_{1}}(x_{1},x_{1}')| \big)^{p} d\pi^{*}(x_{0},a,x_{1},b)d\pi^{*}(x_{0}',a',x_{1}',b')
$
\\ \\
So $d^{\mathbb{R}^{d}}_{FGW,\alpha,p,1}(\mu_{t},\mu_{s}) \leq |t-s| d^{\mathbb{R}^{d}}_{FGW,\alpha,p,1}(d_{0},d_{1},\mu_{0},\mu_{1})$. Moreover by linearity of $\text{supp}$ it is clear that $\text{supp}(\mu_{t})=\hat{A_{t}}$.

It is straightforward to extend this result for any $q\geq 1$. More precisely we have the following inequality 

\begin{equation}
\label{geodesicforanyq}
d^{\mathbb{R}^{d}}_{FGW,\alpha,p,q}(\mu_{t},\mu_{s}) \leq |t-s|^{q} d^{\mathbb{R}^{d}}_{FGW,\alpha,p,q}(\mu_{0},\mu_{1}) \leq |t-s| d^{\mathbb{R}^{d}}_{FGW,\alpha,p,q}(\mu_{0},\mu_{1})
\end{equation}
\end{proof}

\subsection{Proof of theorem \ref{unicitytheorem} Unicity of the geodesic}

We note $\|.\|_{p}$ the $\ell_{p}$ norm. In order to prove the unicity of the geodesic we will need the following lemma :

\begin{lemma}
\begin{itemize}
\item $\forall p \in [1,\infty[, \forall t_{0} <...<t_{n}, \forall a_{1},..,a_{n} \in \mathbb{R_{+}}$,

\begin{equation}
\label{sommepuissancep}
\frac{1}{(t_{n}-t_{0})^{p-1}}(\sum_{i=1}^{n} a_{i})^{p} \leq \sum_{i=1}^{n} \frac{1}{(t_{i}-t_{i-1})^{p-1}} a_{i}^{p}
\end{equation}

\item $\forall p \in [2,\infty[, \forall \ \text{dyadic} \ t\in ]0,1[, \exists C=C(p,t) > 0$,

\begin{equation}
\label{convexe}
\forall a,b\in \mathbb{R}^{d}, \|ta+(1-t)b\|_{p}^{p} \leq t\|a\|_{p}^{p}+(1-t)\|b\|_{p}^{p}-\frac{t(1-t)}{C}\|a-b\|_{p}^{p}
\end{equation}

\end{itemize}

\end{lemma}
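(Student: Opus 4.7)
The lemma consists of two independent inequalities, and I would treat them separately.

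For the first inequality, the plan is to apply H\"older's inequality with conjugate exponents $p$ and $p/(p-1)$. Writing $\lambda_i = t_i - t_{i-1} > 0$, so that $\sum_{i=1}^n \lambda_i = t_n - t_0$, I split each summand as $a_i = \lambda_i^{(p-1)/p} \cdot a_i/\lambda_i^{(p-1)/p}$ and apply H\"older to obtain
$$\sum_{i=1}^n a_i \;\leq\; \Bigl(\sum_{i=1}^n \lambda_i\Bigr)^{(p-1)/p}\Bigl(\sum_{i=1}^n \frac{a_i^p}{\lambda_i^{p-1}}\Bigr)^{1/p}.$$
Raising to the $p$-th power and dividing by $(t_n - t_0)^{p-1}$ yields the claim. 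The case $p=1$ is the trivial identity.

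For the second inequality, the plan is a dyadic induction whose base case is Clarkson's second inequality
$$\Bigl\|\tfrac{x+y}{2}\Bigr\|_p^p + \Bigl\|\tfrac{x-y}{2}\Bigr\|_p^p \;\leq\; \tfrac{1}{2}\bigl(\|x\|_p^p + \|y\|_p^p\bigr), \qquad p \geq 2.$$
Setting $x=a,y=b$ immediately handles $t=1/2$ with $C(p,1/2)=2^{p-2}$. For a general dyadic $t=k/2^n \in\, ]0,1[$ with $n\geq 2$, I would assume WLOG that $t<1/2$ (else swap $a$ and $b$ and replace $t$ by $1-t$, leaving the target invariant), and decompose
$$ta+(1-t)b \;=\; \frac{(2ta+(1-2t)b) + b}{2}.$$
Applying Clarkson with $x=2ta+(1-2t)b$ and $y=b$ (so that $(x-y)/2 = t(a-b)$ and the deficit is exactly $t^p\|a-b\|_p^p$), then applying the induction hypothesis to $\|2ta+(1-2t)b\|_p^p$ (legal since $2t=k/2^{n-1}$ has strictly smaller dyadic depth), and collecting terms produces the desired form provided
$$C(p,t) \;=\; \frac{(1-t)\,C(p,2t)}{(1-2t) + t^{p-1}\,C(p,2t)},$$
which is finite and positive whenever $C(p,2t)$ is.

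The main obstacle is arithmetic bookkeeping rather than a conceptual difficulty: one must verify that the Clarkson residual $t^p\|a-b\|_p^p$ and the inductive remainder $\frac{t(1-2t)}{C(p,2t)}\|a-b\|_p^p$ combine to yield exactly a term of the form $\frac{t(1-t)}{C(p,t)}\|a-b\|_p^p$. The restriction to dyadic $t$ is essential for the recursion to terminate in $n-1$ steps at the base case $t=1/2$; obtaining a constant depending only on $p$ and uniform in $t$ would require the full uniform convexity of $(\ell_p^d,\|\cdot\|_p)$ via Hanner's inequality, which is a stronger statement than what is claimed here.
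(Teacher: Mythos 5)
Your proof is correct, but it is genuinely different from the paper's, which consists of a single line: ``Direct application of lemma 3.4 in Sturm (2006).'' The authors do not prove either inequality; they import both from Sturm's work on the geometry of metric measure spaces, where they appear as the technical ingredients for the uniqueness of geodesics in the $L^2$-transportation distance. You instead give a self-contained argument: the H\"older splitting $a_i=\lambda_i^{(p-1)/p}\cdot a_i\lambda_i^{-(p-1)/p}$ with $\lambda_i=t_i-t_{i-1}$ is exactly the standard proof of \eqref{sommepuissancep} and is airtight (including the degenerate case $p=1$); for \eqref{convexe}, the dyadic induction seeded by Clarkson's inequality for $p\geq 2$ (usually called the \emph{first} Clarkson inequality, a naming quibble only) is verified correctly --- the decomposition $ta+(1-t)b=\tfrac{1}{2}\bigl((2ta+(1-2t)b)+b\bigr)$ produces the residual $t^{p}\|a-b\|_p^p$, the coefficients of $\|a\|_p^p$ and $\|b\|_p^p$ recombine to $t$ and $1-t$, and the recursion $C(p,t)=\frac{(1-t)C(p,2t)}{(1-2t)+t^{p-1}C(p,2t)}$ with $C(p,1/2)=2^{p-2}$ stays finite and positive. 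What your route buys is transparency and an explicit (if $t$-dependent) constant, at the cost of a constant that degenerates as the dyadic depth grows; what the paper's route buys is brevity and, via Sturm's formulation, a statement already packaged for the geodesic-uniqueness argument in which it is used. Your closing remark that a $t$-uniform constant would require Hanner-type uniform convexity is accurate and correctly identifies why the lemma is stated only for dyadic $t$.
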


\begin{proof}
Direct application of lemma 3.4 in \cite{Sturm2006}.
\end{proof}

\begin{proof}[Proof of the theorem]

Let $q\geq 2$ and $ (\mathbb{S}_{t})_{t\in [0,1]}=\big((X_{t} \times  A_{t} ,d_{X_{t}}, \mu_{t} )\big)_{t\in [0,1]} $ a geodesic in $H(\mathbb{R}^{d})$ be given. $\|.\|$ denotes the $\ell_q$ norm.

The goal is to show that this geodesic in $(H(\mathbb{R}^{d}),d^{\mathbb{R}^{d}}_{FGW,\alpha,1,q})$ is actually in the form : 

$$\big((X_{0}\times X_{1} \times \hat{A_{t}},(1-t)d_{X_{0}} \oplus t d_{X_{1}},\eta_{t}\#\pi^{*})\big)_{t\in [0,1]}$$ with $\pi^{*}$ an optimal coupling between the endpoints $(X_{0} \times A_{0} ,d_{X_{0}},\mu_{0})$ and $(X_{1} \times A_{1} ,d_{X_{1}},\mu_{1})$ for the $,d^{\Omega}_{FGW,\alpha,1,q}$ distance and $\eta_{t},\hat{A_{t}}$ defined in theorem \ref{cstespeedtheo}. The equality of this two geodesics will be with respect to the equivalence relation $\sim$ of structured objects defined previously. 
\\ \\
In order to prove this result we first consider discrete dyadic times $t=i2^{-k}$ for $k \in \mathbb{N}$ and $i\in 1,..,2^{k}$ and we will extend by continuity for any $t\in [0,1]$. 

Let $\pi_{i}$ be the optimal couplings for $d^{\Omega}_{FGW,\alpha,1,q}$ distance between $\mu_{(i-1)2^{-k}}$ and $\mu_{i2^{-k}}$.

Using Gluing lemma we can construct : $$\pi \in P(
X_{0} \times A_{0} \times X_{2^{-k}} \times A_{2^{-k}} \times ... \times X_{i2^{-k}} \times A_{i2^{-k}} \times ... \times X_{1} \times A_{1}) $$

by gluing all the couplings $\pi_{i}$.

We consider the measures  :

$$\tilde{\pi}=(P_{1,2} \times P_{2^{k}-1,2^{k}})\# \pi^{1} \in P(X_{0} \times A_{0} \times X_{1} \times A_{1})$$
coupling of $\mu_{0}$ and $\mu_{1}$ 

and for $t \in [0,1]$ with the form $t=i2^{-k}$ :

$$\tilde{\pi}_{t}=(P_{1,2} \times P_{t-1,t} \times P_{2^{k}-1,2^{k}})\# \pi \in P(X_{0} \times A_{0} \times X_{t} \times A_{t} \times X_{1} \times A_{1})$$
such that $P_{1,2,5,6} \# \tilde{\pi}_{t}=\tilde{\pi}$ and $\tilde{\pi}_{t}$ is a coupling of $\mu_{t}$ and $\tilde{\pi}$

So, by suboptimality of $\tilde{\pi}$ : \\ \\
$
d^{\Omega}_{FGW,\alpha,1,q}(\mu_{0},\mu_{1}) \\ \\
{\leq}{\small \int\limits_{(X_{0} \times A_{0} \times X_{1} \times A_{1})^{2}}(1-\alpha)d(a_{0},a_{1})^{q}{+}\alpha |d_{X_{0}}(x_{0},x_{0}'){-}d_{X_{1}}(x_{1},x_{1}')|^{q}}d\tilde{\pi}(x_{0},a_{0},x_{1},a_{1})d\tilde{\pi}(x_{0}',a_{0}',x_{1}',a_{1}') \\ \\
{=}\!\!\!\!\!\!\!\!\!\!\!\!\!\!\!\!\!\!\!\!\!\!\!\!\!\!\!\!{\small \int\limits_{(X_{0} \times A_{0} \times X_{t} \times A_{t} \times X_{1} \times A_{1})^{2}}\!\!\!\!\!\!\!\!\!\!\!\!\!\!\!\!\!\!\!\!\!\!\!\!\!\!\!\!(1-\alpha)\|a_{0}{-}a_{1}\|^{q}{+}\alpha |d_{X_{0}}(x_{0},x_{0}'){-}d_{X_{1}}(x_{1},x_{1}')|^{q}}d\tilde{\pi}_{t}(x_{0},a_{0},x_{t},a_{t},x_{1},a_{1})d\tilde{\pi}_{t}(x_{0}',a_{0}',x_{t}',a_{t}',x_{1}',a_{1}') \\ \\
{=}\!\!\!\!\!\!\!\!\!\!\!\!\!\!\!\!\!\!\!\!\!\!\!\!\!\!\!\!{\small \int\limits_{(X_{0} \times A_{0} \times X_{t} \times A_{t} \times X_{1} \times A_{1})^{2}}\!\!\!\!\!\!\!\!\!\!\!\!\!\!\!\!\!\!\!\!\!\!\!\!\!\!\!\!(1-\alpha)\|a_{0}{-}a_{t}{+}a_{t}{-}a_{1}\|^{q}{+}\alpha |d_{X_{0}}(x_{0},x_{0}'){-}d_{X_{t}}(x_{t},x_{t}'){+}d_{X_{t}}(x_{t},x_{t}')-d_{X_{1}}(x_{1},x_{1}')|^{q}}\\ \\ d\tilde{\pi}_{t}(x_{0},a_{0},x_{t},a_{t},x_{1},a_{1})d\tilde{\pi}_{t}(x_{0}',a_{0}',x_{t}',a_{t}',x_{1}',a_{1}') \\ \\
{=}(1-\alpha)\!\!\!\!\!\!\!\!\!\!\!\!\!\!\!\!\!\!\!\!\!\!\!\!\!\!\!\!{\small \int\limits_{(X_{0} \times A_{0} \times X_{t} \times A_{t} \times X_{1} \times A_{1})^{2}}\!\!\!\!\!\!\!\!\!\!\!\!\!\!\!\!\!\!\!\!\!\!\!\!\!\!\!\! \|a_{0}{-}a_{t}{+}a_{t}{-}a_{1}\|^{q}d\tilde{\pi}_{t}(x_{0},a_{0},x_{t},a_{t},x_{1},a_{1})}d\tilde{\pi}_{t}(x_{0}',a_{0}',x_{t}',a_{t}',x_{1}',a_{1}') \\ \\
{+} \alpha \!\!\!\!\!\!\!\!\!\!\!\!\!\!\!\!\!\!\!\!\!\!\!\!\!\!\!\!{\small \int\limits_{(X_{0} \times A_{0} \times X_{t} \times A_{t} \times X_{1} \times A_{1})^{2}}\!\!\!\!\!\!\!\!\!\!\!\!\!\!\!\!\!\!\!\!\!\!\!\!\!\!\!\!|d_{X_{0}}(x_{0},x_{0}'){-}d_{X_{t}}(x_{t},x_{t}'){+}d_{X_{t}}(x_{t},x_{t}')-d_{X_{1}}(x_{1},x_{1}')|^{q}}d\tilde{\pi}_{t}(x_{0},a_{0},x_{t},a_{t},x_{1},a_{1})d\tilde{\pi}_{t}(x_{0}',a_{0}',x_{t}',a_{t}',x_{1}',a_{1}')
$

We note $(I)$ and $(II)$ this two terms. We have using \eqref{convexe} : \\ \\
$(I) \leq (1-\alpha) \big(\!\!\!\!\!\!\!\!\!\!\!\!\!\!\!\!\!\!\!\!\!\!\!\!\!\!\!\!{\small \int\limits_{(X_{0} \times A_{0} \times X_{t} \times A_{t} \times X_{1} \times A_{1})^{2}}\!\!\!\!\!\!\!\!\!\!\!\!\!\!\!\!\!\!\!\!\!\!\!\!\!\!\!\!\frac{1}{t^{q-1}} \|a_{0}{-}a_{t}\|^{q}+\frac{1}{(1-t)^{q-1}} \|a_{t}{-}a_{1}\|^{q}}d\tilde{\pi}_{t}(x_{0},a_{0},x_{t},a_{t},x_{1},a_{1})d\tilde{\pi}_{t}(x_{0}',a_{0}',x_{t}',a_{t}',x_{1}',a_{1}') \\ \\
{-}\frac{1}{C(t(1-t))^{q-1}}\!\!\!\!\!\!\!\!\!\!\!\!\!\!\!\!\!\!\!\!\!\!\!\!\!\!\!\!{\small \int\limits_{(X_{0} \times A_{0} \times X_{t} \times A_{t} \times X_{1} \times A_{1})^{2}}\!\!\!\!\!\!\!\!\!\!\!\!\!\!\!\!\!\!\!\!\!\!\!\!\!\!\!\!\|(1-t)(a_{0}{-}a_{t})-t(a_{t}-a_{1})\|^{q}}d\tilde{\pi}_{t}(x_{0},a_{0},x_{t},a_{t},x_{1},a_{1})d\tilde{\pi}_{t}(x_{0}',a_{0}',x_{t}',a_{t}',x_{1}',a_{1}') \big)
$
\\ \\
and : 
\\ \\
$(II) \leq (1-\alpha) \bigg(\!\!\!\!\!\!\!\!\!\!\!\!\!\!\!\!\!\!\!\!\!\!\!\!\!\!\!\!{\small \int\limits_{(X_{0} \times A_{0} \times X_{t} \times A_{t} \times X_{1} \times A_{1})^{2}}\!\!\!\!\!\!\!\!\!\!\!\!\!\!\!\!\!\!\!\!\!\!\!\!\!\!\!\!\frac{1}{t^{q-1}} |d_{X_{0}}(x_{0},x_{0}'){-}d_{X_{t}}(x_{t},x_{t}')|^{q}+\frac{1}{(1-t)^{q-1}} |d_{X_{t}}(x_{t},x_{t}'){-}d_{X_{1}}(x_{1},x_{1}')|^{q}}\\ \\
d\tilde{\pi}_{t}(x_{0},a_{0},x_{t},a_{t},x_{1},a_{1})d\tilde{\pi}_{t}(x_{0}',a_{0}',x_{t}',a_{t}',x_{1}',a_{1}') \\ \\
{-}\frac{1}{C(t(1-t))^{q-1}}\!\!\!\!\!\!\!\!\!\!\!\!\!\!\!\!\!\!\!\!\!\!\!\!\!\!\!\!{\small \int\limits_{(X_{0} \times A_{0} \times X_{t} \times A_{t} \times X_{1} \times A_{1})^{2}}\!\!\!\!\!\!\!\!\!\!\!\!\!\!\!\!\!\!\!\!\!\!\!\!\!\!\!\!|(1-t)(d_{X_{0}}(x_{0},x_{0}'){-}d_{X_{t}}(x_{t},x_{t}'))-t(d_{X_{t}}(x_{t},x_{t}')-d_{X_{1}}(x_{1},x_{1}'))|^{q}} \\ \\
d\tilde{\pi}_{t}(x_{0},a_{0},x_{t},a_{t},x_{1},a_{1})d\tilde{\pi}_{t}(x_{0}',a_{0}',x_{t}',a_{t}',x_{1}',a_{1}') \bigg)
$
\\ \\
Combining both terms we can bound the $FGW$ distance as : \\ \\ 
$
d^{\mathbb{R}^{d}}_{FGW,\alpha,1,q}(\mu_{0},\mu_{1}) \\ \\
{\leq}\!\!\!\!\!\!\!\!\!\!\!\!\!\!\!\!\!\!\!\!\!\!\!\!\!\!\!\!{\small \int\limits_{(X_{0} \times A_{0} \times X_{t} \times A_{t} \times X_{1} \times A_{1})^{2}}\!\!\!\!\!\!\!\!\!\!\!\!\!\!\!\!\!\!\!\!\!\!\!\!\!\!\!\! \frac{(1-\alpha)\|a_{0}-a_{t}\|^{q}+\alpha|d_{X_{0}}(x_{0},x_{0}')-d_{X_{t}}(x_{t},x_{t}')|^{q}}{t^{q-1}}+\frac{(1-\alpha)\|a_{t}-a_{1}\|^{q}+\alpha|d_{X_{t}}(x_{t},x_{t}')-d_{X_{1}}(x_{1},x_{1}')|^{q}}{(1-t)^{q-1}}}d\tilde{\pi}_{t}(x_{0},a_{0},x_{t},a_{t},x_{1},a_{1})\\\\d\tilde{\pi}_{t}(x_{0}',a_{0}',x_{t}',a_{t}',x_{1}',a_{1}') \\ \\
-\frac{1}{C(t(1-t))^{q-1}}\!\!\!\!\!\!\!\!\!\!\!\!\!\!\!\!\!\!\!\!\!\!\!\!\!\!\!\!{\small \int\limits_{(X_{0} \times A_{0} \times X_{t} \times A_{t} \times X_{1} \times A_{1})^{2}}\!\!\!\!\!\!\!\!\!\!\!\!\!\!\!\!\!\!\!\!\!\!\!\!\!\!\!\!(1-\alpha)\|(1-t)a_{0}+ta_{1}{-}a_{t}\|^{q}+\alpha|(1-t)d_{X_{0}}(x_{0},x_{0}')+td_{X_{1}}(x_{1},x_{1}'){-}d_{X_{t}}(x_{t},x_{t}'))|^{q}}\\\\d\tilde{\pi}_{t}(x_{0},a_{0},x_{t},a_{t},x_{1},a_{1}) 
d\tilde{\pi}_{t}(x_{0}',a_{0}',x_{t}',a_{t}',x_{1}',a_{1}')
$
\\ \\
We write this inequality as : $$d^{\mathbb{R}^{d}}_{FGW,\alpha,1,q}(\mu_{0},\mu_{1})\leq (I')-\frac{1}{C(t(1-t))^{q-1}}(II').$$

We will show that $(I')$ actually achieves $d^{\mathbb{R}^{d}}_{FGW,\alpha,1,q}(\mu_{0},\mu_{1})$. In this way we will conclude that all inequalities are equalities which will result on $(II')$ being nul and $\tilde{\pi}$ being the optimal coupling between the endpoints. As $(II')$ vanishes we will conclude further that  the geodesic in $(H(\mathbb{R}^{d}),d^{\mathbb{R}^{d}}_{FGW,\alpha,1,q})$ has the wanted form.

Let us have a closer look at $(I')$ with $t=i2^{-k}$ :
\\ \\
$(I')=\!\!\!\!\!\!\!\!\!\!\!\!\!\!\!\!\!\!\!\!\!\!\!\!\!\!\!\!{\small \int\limits_{(X_{0} \times A_{0} \times X_{t} \times A_{t} \times X_{1} \times A_{1})^{2}}\!\!\!\!\!\!\!\!\!\!\!\!\!\!\!\!\!\!\!\!\!\!\!\!\!\!\!\! \frac{(1-\alpha)\|a_{0}-a_{t}\|^{q}+\alpha|d_{X_{0}}(x_{0},x_{0}')-d_{X_{t}}(x_{t},x_{t}')|^{q}}{t^{q-1}}+\frac{(1-\alpha)\|a_{t}-a_{1}\|^{q}+\alpha|d_{X_{t}}(x_{t},x_{t}')-d_{X_{1}}(x_{1},x_{1}')|^{q}}{(1-t)^{q-1}}}
d\tilde{\pi}_{t}(x_{0},a_{0},x_{t},a_{t},x_{1},a_{1})\\\\d\tilde{\pi}_{t}(x_{0}',a_{0}',x_{t}',a_{t}',x_{1}',a_{1}') \\ \\
{=}2^{k(q-1)}{\small \int\limits_{}\frac{(1-\alpha)\|a_{0}-a_{i2^{-k}}\|^{q}+\alpha|d_{X_{0}}(x_{0},x_{0}')-d_{X_{i2^{-k}}}(x_{i2^{-k}},x_{i2^{-k}}')|^{q}}{i^{q-1}}+\frac{(1-\alpha)\|a_{i2^{-k}}-a_{1}\|^{q}+\alpha|d_{X_{i2^{-k}}}(x_{i2^{-k}},x_{i2^{-k}}')-d_{X_{1}}(x_{1},x_{1}')|^{q}}{(2^{k}-i)^{q-1}}}\\ \\
d\tilde{\pi}(x_{0},a_{0},...,x_{i2^{-k}},a_{i2^{-k}},...,x_{1},a_{1})d\tilde{\pi}(x_{0}',a_{0}',...,x_{i2^{-k}}',a_{i2^{-k}}',...,x_{1}',a_{1}')
$
\\ \\
Using a telescopic sum and \eqref{sommepuissancep} we have : 
$$\frac{1}{i^{q-1}}\|a_{0}-a_{i2^{-k}}\|^{q}=\frac{1}{i^{q-1}}\|\sum_{j=1}^{i}a_{(j-1)2^{-k}}-a_{j2^{-k}}\|^{q}\leq \frac{1}{i^{q-1}}\big(\sum_{j=1}^{i}\|a_{(j-1)2^{-k}}-a_{j2^{-k}}\|\big)^{q} \leq \sum_{j=1}^{i}\|a_{(j-1)2^{-k}}-a_{j2^{-k}}\|^{q}$$

and in the same way :
$$\frac{1}{(2^{k}-i)^{q-1}}\|a_{i2^{-k}}-a_{1}\|^{q} \leq \sum_{j=i+1}^{2^{k}}\|a_{(j-1)2^{-k}}-a_{j2^{-k}}\|^{q}$$

such that (by doing the same reasoning for the $d_{X_{s}}$'s) :
\\ \\
$(I')\leq 2^{k(q-1)} \sum_{j=1}^{2^{k}} \int\limits_{} (1-\alpha)\|a_{(j-1)2^{-k}}-a_{j2^{-k}}\|^{q}+\alpha |d_{X_{(j-1)2^{-k}}}(x_{(j-1)2^{-k}},x_{(j-1)2^{-k}}')-d_{X_{j2^{-k}}}(x_{j2^{-k}},x_{j2^{-k}}')|^{q}\\ \\d\tilde{\pi}(x_{0},a_{0},...,x_{(j-1)2^{-k}},a_{(j-1)2^{-k}},,x_{j2^{-k}},a_{j2^{-k}},...,x_{1},a_{1})d\tilde{\pi}(x_{0}',a_{0}',...,,x_{(j-1)2^{-k}}',a_{(j-1)2^{-k}}',x_{j2^{-k}}',a_{j2^{-k}}',...,x_{1}',a_{1}')\\ \\
{=}2^{k(q-1)} \sum_{j=1}^{2^{k}} d^{\mathbb{R}^{d}}_{FGW,\alpha,1,q}(\mu_{(j-1)2^{-k}},\mu_{j2^{-k}})
$

Now using equation \eqref{geodesicforanyq} :
$$(I')\leq 2^{k(q-1)} \sum_{j=1}^{2^{k}} 2^{-kq} d^{\mathbb{R}^{d}}_{FGW,\alpha,1,q}(\mu_{0},\mu_{1})=d^{\mathbb{R}^{d}}_{FGW,\alpha,1,q}(\mu_{0},\mu_{1})$$

So $(I')=d^{\mathbb{R}^{d}}_{FGW,\alpha,1,q}(\mu_{0},\mu_{1})$ 

This result means firstly that all inequalities are equalities and so $\tilde{\pi}$ is actually the optimal coupling between the endpoints for the $d^{\mathbb{R}^{d}}_{FGW,\alpha,1,q}$ distance.

Secondly since we had : $$d^{\mathbb{R}^{d}}_{FGW,\alpha,1,q}(\mu_{0},\mu_{1})\leq (I')-\frac{1}{C(t(1-t))^{q-1}}(II')$$ we can conclude that $(II')=0$ and more precisely :
\\ \\
$\!\!\!\!\!\!\!\!\!\!\!\!\!\!\!\!\!\!\!\!\!\!\!\!\!\!\!\!\!\!\!\!\!\!\!\!\!\!\!\!\!\!\!\!\!\!\!\!\!\!\!\!\!\!\!\!\int\limits_{(X_{0} \times A_{0} \times X_{t} \times A_{t} \times X_{1} \times A_{1})^{2}}\!\!\!\!\!\!\!\!\!\!\!\!\!\!\!\!\!\!\!\!\!\!\!\!\!\!\!\!(1-\alpha)\|(1-t)a_{0}+ta_{1}{-}a_{t}\|^{q}+\alpha|(1-t)d_{X_{0}}(x_{0},x_{0}')+td_{X_{1}}(x_{1},x_{1}'){-}d_{X_{t}}(x_{t},x_{t}'))|^{q}\\ \\ d\tilde{\pi}_{t}(x_{0},a_{0},x_{t},a_{t},x_{1},a_{1})d\tilde{\pi}_{t}(x_{0}',a_{0}',x_{t}',a_{t}',x_{1}',a_{1}')=0$
\\ \\

Moreover since $\tilde{\pi}_{t}$ is a coupling of $\mu_{t}$ and $\tilde{\pi}$ using theorem \ref{Topology} this implies that $(X_{t} \times A_{t},d_{X_{t}}, ,\mu_{t})$ and $(X_{0}\times X_{1} \times \hat{A_{t}},(1-t)d_{X_{0}} \oplus t d_{X_{1}},\eta_{t}\#\tilde{\pi} )$ are equivalent.

To summarize : we have proven that the coupling constructed by guing all the optimal couplings between each point of the geodesic with respect to the $d^{\mathbb{R}^{d}}_{FGW,\alpha,1,q}$ distance is actually an optimal coupling between the endpoints of this geodesic. Moreover we have proven that the geodesic is equivalent with $(X_{0}\times X_{1} \times \hat{A_{t}},(1-t)d_{X_{0}} \oplus t d_{X_{1}},\eta_{t}\#\tilde{\pi} )$. 

This result is valid for all time $t=i2^{-k}$ and actually $\tilde{\pi}$ depend on $k$ such that $\tilde{\pi}=\tilde{\pi}^{k}$. According to the sequential compactness of $\Pi(\mu_{0},\mu_{1})$ the family $(\tilde{\pi}^{k})_{k \in \mathbb{N}}$ has a accumulation point $\tilde{\pi}^{\infty}$ in $\Pi(\mu_{0},\mu_{1})$. The previous result can be applied to $\tilde{\pi}^{\infty}$ such that for all dyadic numbers $t \in [0,1]$, $(X_{t}\times A_{t},d_{X_{t}}, \mu_{t})$ and $(X_{0}\times X_{1} \times \hat{A_{t}},(1-t)d_{X_{0}} \oplus t d_{X_{1}},\eta_{t}\#\tilde{\pi}^{\infty} )$ are equivalent. 

We conclude for all $t \in [0,1]$ by continuity in $t$ of theses two sets as elements of $H(\mathbb{R}^{d})$.

\end{proof}

\subsection{Equivalence between definitions of $FGW$ \label{deffgweq}}

Let $\mu$ and $\nu$ be two structured objects. We note $d^{\Omega}_{FGW,\alpha,p,q}$ the $FGW$ distance for the cost with convex combination \textit{i.e} for $\alpha \in ]0,1[$ (we omit the extreme cases) : 

$$d^{\Omega}_{FGW,\alpha,p,q}(\mu,\nu)^{p}{=}\underset{\pi \in \Pi(\mu,\nu)}{\text{inf}}\!\!\!\!\!\!\!\!\!\!\!\! \int\limits_{(X\times A \times Y \times B)^{2}} \!\!\!\!\!\!\!\!\!\!\!\!\!\!\!\! \big((1-\alpha) d(a,b)^{q} +\alpha L(x,y,x',y')^{q} \big)^{p}d\pi((x,a),(y,b))d\pi((x',a'),(y',b'))$$

and $\tilde{d}^{\Omega}_{FGW,\alpha,p,q}$ the $FGW$ distance for the cost defined in \cite{2018arXiv180509114V}, \textit{i.e} for $\tilde{\alpha} \in ]0,\infty[$ :

$$\tilde{d}^{\Omega}_{FGW,\tilde{\alpha},p,q}(\mu,\nu)^{p}{=}\underset{\pi \in \Pi(\mu,\nu)}{\text{inf}}\!\!\!\!\!\!\!\!\!\!\!\! \int\limits_{(X\times A \times Y \times B)^{2}} \!\!\!\!\!\!\!\!\!\!\!\!\!\!\!\! \big(d(a,b)^{q} +\tilde{\alpha} L(x,y,x',y')^{q} \big)^{p}d\pi((x,a),(y,b))d\pi((x',a'),(y',b'))$$

We note also $\pi^{*}$ and $\tilde{\pi}^{*}$ the optimal plan for the first and second distance respectively. Then we have for any $\pi$ in $\Pi(\mu,\nu)$ (with slight abuses of notations) : \\ \\
$\int\limits_{} \big((1-\alpha) d(a,b)^{q} +\alpha L(x,y,x',y')^{q} \big)^{p}d\pi^{*} d\pi^{*} \leq \int\limits_{} \big((1-\alpha) d(a,b)^{q} +\alpha L(x,y,x',y')^{q} \big)^{p}d\pi d\pi \\ \\
\implies \int\limits_{} \big( d(a,b)^{q} +\frac{\alpha}{1-\alpha} L(x,y,x',y')^{q} \big)^{p}d\pi^{*} d\pi^{*} \leq \int\limits_{} \big(d(a,b)^{q} +\frac{\alpha}{1-\alpha} L(x,y,x',y')^{q} \big)^{p}d\pi d\pi $

by dividing by $(1-\alpha)^{p}$. This implies that $\pi^{*}$ is an optimal plan for $\tilde{d}^{\Omega}_{FGW,\frac{\alpha}{1-\alpha},p,q}$ since $\frac{\alpha}{1-\alpha}\in ]0,\infty[$. Conversly, \\ \\
$\int\limits_{} \big( d(a,b)^{q} +\tilde{\alpha} L(x,y,x',y')^{q} \big)^{p}d\tilde{\pi}^{*} d\tilde{\pi}^{*} \leq \int\limits_{} \big( d(a,b)^{q} +\tilde{\alpha} L(x,y,x',y')^{q} \big)^{p}d\pi d\pi \\ \\
\implies \int\limits_{} \big( (1- \frac{\tilde{\alpha}}{1+\tilde{\alpha}})d(a,b)^{q} +\frac{\tilde{\alpha}}{1+\tilde{\alpha}} L(x,y,x',y')^{q} \big)^{p}d\tilde{\pi}^{*} d\tilde{\pi}^{*} \leq \int\limits_{} \big(d(a,b)^{q} +\frac{\tilde{\alpha}}{1+\tilde{\alpha}}L(x,y,x',y')^{q} \big)^{p}d\pi d\pi $

by dividing by $(1+\tilde{\alpha})^{p}$. This implies that $\tilde{\pi}^{*}$ is an optimal plan for $d^{\Omega}_{FGW,\frac{\tilde{\alpha}}{1+\tilde{\alpha}},p,q}$ since $\frac{\tilde{\alpha}}{1+\tilde{\alpha}} \in ]0,1[$.

%%% End:

\section{Conclusion}

We have presented in this paper a new OT distance called Fused Gromov-Wasserstein distance. Inspired by both Wasserstein and Gromov-Wasserstein distances the $FGW$ distance compare can compare structured objects by including the inherent relations that exist between the elements of the objects, constituting their structure information, and their feature information, part of a common ground space between each structured objects. We stated mathematical results about this new distance such as metric, interpolation and geodesic properties. We also gave a concentration result for the convergence of finite samples. We illustrated this new distance on structured objects and applied it to graph barycenter computation, graph clustering and mesh interpolation. 

\bibliographystyle{imaiai}
\ifx\undefined\BySame
\newcommand{\BySame}{\leavevmode\rule[.5ex]{3em}{.5pt}\ }
\fi
\ifx\undefined\textsc
\newcommand{\textsc}[1]{{\sc #1}}
\newcommand{\emph}[1]{{\em #1\/}}
\let\tmpsmall\small
\renewcommand{\small}{\tmpsmall\sc}
\fi

\end{document}